\documentclass{article}

\PassOptionsToPackage{numbers,square,sort&compress}{natbib}
\usepackage[preprint]{neurips_2026}

\usepackage[utf8]{inputenc} 
\usepackage[T1]{fontenc}    
\usepackage{hyperref}       
\usepackage{url}            
\usepackage{booktabs}       
\usepackage{amsfonts}       
\usepackage{amsmath}        
\usepackage{nicefrac}       
\usepackage{microtype}      
\usepackage{xcolor}         
\usepackage{graphicx}       
\usepackage{multirow}       
\usepackage{array}          
\usepackage[ruled,vlined]{algorithm2e} 
\usepackage[most]{tcolorbox}   
\usepackage{changdefs}
\usepackage{wrapfig}
\usepackage{subcaption}

\newcommand{\old}{\text{old}}
\newcommand{\refm}{\text{ref}}
\newcommand{\base}{\text{base}}
\newcommand{\clip}{\text{clip}}

\title{Designing Reinforcement Learning for Diffusion Models: A Unified Path-Space View
}

\author{%
  \textbf{Yixian Xu}$^{1}$\thanks{Equal contribution.}  , \ \textbf{Yuanrui Zhang}$^{1*}$, \ \textbf{Shengjie Luo}$^{2\dagger}$, \ \textbf{Liwei Wang}$^{1}$, \ \textbf{Di He}$^{1}$\thanks{Correspondence to: Shengjie Luo <\texttt{shengjieluo@bytedance.com}>, Di He<\texttt{dihe@pku.edu.cn}>,.}\\
$^1$State Key Laboratory of General Artificial Intelligence,\\ 
School of Intelligence Science and Technology, Peking University \\
$^2$ByteDance Seed \\
\texttt{\footnotesize xyx050@stu.pku.edu.cn, yuanruizhang25@stu.pku.edu.cn}, \\
\texttt{\footnotesize shengjieluo@bytedance.com, \{wanglw,dihe\}@pku.edu.cn}
} 
\begin{document}

\maketitle

\vspace{-8pt}
\begin{abstract}
Reinforcement learning (RL) post-training provides a direct way to align diffusion models with human preferences and task-specific rewards. However, current RL algorithms for diffusion models remain fragmented: reverse-trajectory methods rely on discretized likelihood ratios, whereas forward-matching methods train on reward-labeled noising versions of the rollout samples. This paper shows that these seemingly different losses arise from a single path-space principle. Starting from the regularized diffusion-RL objective, we use importance sampling between sampling SDEs to obtain an explicit policy-gradient estimator on trajectory space. The estimator contains the stochastic It\^o integral underlying Flow-GRPO-type updates; we derive an equivalent variance-reduced value-gradient form that recovers the forward-matching structure of AWM and DiffusionNFT. This identifies the empirical gap between these method families as a variance-reduction effect rather than a difference in RL principle. The derivation yields a unified design space organized by value-gradient estimation, weight functions, and sampling choices. Within this space, we propose a multi-sample KDE value-gradient estimator that reuses rollout groups, together with scale-bounded weight families that retain stable existing recipes while excluding singular ones. Experiments on SD3.5-M and Qwen-Image models validate the variance-reduction explanation and show that the resulting recipe improves over prior diffusion-RL baselines.
\end{abstract}

\section{Introduction}
\label{sec:intro}

Diffusion and flow models have become the backbone of modern visual generation, powering high-quality image, video, and 3D synthesis~\citep{ho2020denoising,song2020score,lipman2022flow,liu2022flow,esser2024scaling}. The pretrained generator is optimized for distributional alignment via score matching, but not for the downstream preferences that users ultimately care about, including human aesthetic preference, compositional correctness, text rendering, or task-specific verifiability. Reinforcement learning (RL) post-training is therefore a natural next step that treats the generator as a policy and directly optimizes rewards evaluated on generated samples, including black-box or non-differentiable rewards that are hard to express as supervised losses.

This promise has led to a rapidly growing family of diffusion-RL algorithms. Starting from the raw objective of maximizing reward under the sampling policy with a reference regularizer, the central obstacle is that diffusion models do not provide the convenient policy likelihood ratios that PPO-style RL~\citep{schulman2017proximal} relies on. Different practical routes around this obstacle have been proposed: some discretize the reverse-sampling SDE so that each denoising step has a Gaussian transition ratio~\citep{black2023training,liu2025flow,wang2025grpo}; others avoid reverse-time likelihoods and instead build supervised flow-matching losses from reward-labeled clean samples and their forward-noised versions~\citep{xue2025advantage,zheng2025diffusionnft}. These routes produce loss functions that look structurally unrelated, and the community currently lacks a unified view for understanding existing recipes or designing principled diffusion-RL algorithms beyond combinations of heuristics.

Our first step is to expose the common continuous-time structure beneath these routes. Although a single diffusion path measure has no elementary likelihood, ratios between path measures generated by stochastic differential equations (SDEs) with the same diffusion coefficient have explicit form~\citep{anderson1982reverse,oksendal2013stochastic}. Applying this path-space importance sampling to the raw diffusion-RL objective yields a tractable estimator under a proposal path measure. Its policy gradient term contains a stochastic It\^o integral that corresponds to the noise exploration signal used by reverse-sampling-based GRPO methods~\citep{liu2025flow}. We prove that this term has an equivalent deterministic value-gradient representation.
This variance-reduced representation is the bridge from Flow-GRPO-style stochastic estimators to forward-matching methods: the large empirical gap between Flow-GRPO and DiffusionNFT/AWM is explained as a variance reduction effect rather than as a fundamentally different RL principle.

The variance-reduced form also reveals the design space. Its only unspecified component is how to estimate the value gradient $\nabla_{\bfxx_t}V_t$ on sampled trajectories, together with how to weight the resulting policy gradient and quadratic regularization terms across time. A Bayes rule expression for $\nabla_{\bfxx_t}V_t$ yields a deterministic one-sample estimator that recovers AWM and DiffusionNFT under their forward noising proposal, while the Brownian increment estimator recovers Flow-GRPO-type methods under reverse SDE rollouts. Lifting these choices to free parameters gives a unified template whose concrete entries specify the proposal dynamics, the value gradient estimator, the off-policy weight $\hat{w}_1$, the on-policy weight $\hat{w}_2$, and the sampler. These entries are organized into three axes: value gradient estimation, weight functions, and sampling. Under this template, previously separate algorithms become a specification of each entry in the same design space rather than separate design principles.

Finally, the framework turns this taxonomy into concrete design guidance. Along the estimator axis, we observe that existing methods all use one-sample estimators of $\nabla_{\bfxx_t}V_t$; we introduce a multi-sample KDE estimator that reuses the rollout group already collected for GRPO-style advantages and reduces conditional variance without additional sampling. Along the weight axis, we identify a scale-bounded principle: the effective per-step update $\hat{w}_2(t)\widehat{\nabla V}_t$ and the quadratic regularizer $\hat{w}_1(t)\lrVert{\Delta\bfvv_\theta}^2$ should remain bounded across time and discretization scale. This principle recovers the stable weightings used by AWM, DiffusionNFT, and GRPO-Guard, rules out singular Flow-GRPO-style weights in off-policy regimes, and reduces algorithm design to a small family of interpretable weight shapes.

\paragraph{Contributions.} In summary, this paper contributes:
\begin{itemize}\setlength\itemsep{0.2em}
    \item \textbf{A continuous-time path-space framework (\secref{method-vr}, \secref{method-designspace}).} 
    Starting from the raw objective \eqnref{raw-obj}, we use path-space importance sampling and a variance-reduction theorem (\thmref{variance-reduction}) to derive an explicit template \eqnref{unified-loss} that unifies Flow-GRPO~\citep{liu2025flow}, Dance-GRPO~\citep{xue2025dancegrpo}, TempFlow-GRPO~\citep{he2025tempflow},GRPO-Guard~\citep{wang2025grpo},  AWM~\citep{xue2025advantage}, and DiffusionNFT~\citep{zheng2025diffusionnft} as special cases (\tabref{master}).
    \item \textbf{Principled design choices (\secref{method-principled}).} 
    We introduce a multi-sample KDE value-gradient estimator \eqnref{kde-estimator} and a scale-bounded weight principle that yields compact families for $\hat{w}_1$ and $\hat{w}_2$, recovering strong existing recipes while exposing the remaining tunable degrees of freedom.
    \item \textbf{Empirical validation (\secref{exp}).} On SD3.5-M~\citep{esser2024scaling} and Qwen-Image~\citep{wu2025qwen} models across PickScore~\citep{kirstain2023pick}, OCR~\citep{chen2023textdiffuser}, and GenEval~\citep{ghosh2023geneval} rewards, we verify that the variance-reduction trick closes the Flow-GRPO--DiffusionNFT efficiency gap, that the proposed weight families track empirical optima, and that our final recipe exceeds the baselines in \tabref{master}.
\end{itemize}

\section{Backgrounds}
\label{sec:background}
\subsection{Diffusion and Flow-Matching Generative Models}
\label{sec:background-fm}
Diffusion and Flow-Matching generative models perform step-by-step transformations between the data distribution and a tractable Gaussian prior. The forward process noises data to a Gaussian prior, and the reverse process generates data by denoising. Following the Flow-Matching formulation~\citep{lipman2022flow,liu2022flow,albergo2023stochastic}, the forward noising process is constructed as
\begin{align}\label{eqn:fm-forward-sde}
    \bfxx_t = (1-t)\bfxx_0+t\bfeps,\quad t\in[0,1], 
\end{align}
with $\bfeps\sim\clN(\bfzro,\bfI)$ and $\bfxx_0\sim p_{\text{data}}$.
The Flow-Matching formulation adopts the vector-field prediction loss to train the diffusion model:
\begin{align}
    \clL(\theta)=\bbE_{t\sim p(t), \bfxx_0\sim p_{\text{data}},\bfeps\sim\clN(\bfzro,\bfI)}\lrVert{\bfvv_\theta(\bfxx_t,t)-(\bfeps-\bfxx_0)}^2,
\end{align}
where $\bfvv_\theta(\bfxx_t,t)$ is parameterized by a neural network and $p(t)$ is the distribution of the time steps. Once the model is trained, the sampling process can be performed by simulating the following SDE:
\begin{align}\label{eqn:bg-samplingsde}
    \dd\bfxx_t = \left[\bfvv_\theta(\bfxx_t,t) + \frac{\eta_t}{1-t}\big(\bfxx_t+(1-t)\bfvv_\theta(\bfxx_t,t)\big)\right]\dd t + \sqrt{\frac{2t\eta_t}{1-t}}\,\dd\bfww_t,\quad\bfxx_1\sim\clN(\bfzro,\bfI),
\end{align}
where $\eta_t\geq0$ is a non-negative function and $\bfww_t$ is the standard Brownian motion on $\bbR^d$. When $\eta_t=1$, Eq.~\eqref{eqn:bg-samplingsde} is the time-reversal of Eq.~\eqref{eqn:fm-forward-sde}; when $\eta_t=0$, it collapses to the deterministic ODE $\frac{\dd \bfxx_t}{\dd t}=\bfvv_\theta(\bfxx_t,t)$, i.e.\ the FM analogue of the probability flow ODE~\citep{song2020score}.

\subsection{Reinforcement Learning for Diffusion Models}
\label{sec:background-rl}

Let $\clC$ be a prompt space and let $R:\bbR^d\times\clC\to[0,1]$ be a reward function evaluated on clean samples. We denote $\bfxx_{0:1}$ as the trajectory of the stochastic process $\{\bfxx_t\}_{t\in[0,1]}$ in \eqnref{bg-samplingsde} and $p(\bfxx_{0:1}, c)$ as its likelihood\footnote{This is a formal notation that treats the probability space defined on continuous paths as its discretization. Actually, this is known as the path measure corresponding to a stochastic differential equation (SDE). See \appxref{backgrounds} for background.}. RL post-training seeks to maximize the following canonical objective:

\begin{align}\label{eqn:raw-obj}
\clL(\theta)=\bbE_c\lrbrack{\bbE_{p_\theta(\bfxx_{[0:1]},c)}[R(\bfxx_0,c)]-\beta\KL(p_\theta\Vert p_\refm)},
\end{align}

where $p_\refm$ is a frozen reference policy.

Recently, several practical alternatives for \eqnref{raw-obj} have been proposed. Flow-GRPO~\citep{liu2025flow} applies an Euler--Maruyama discretization to the reverse-time sampling SDE, treating each denoising step as a tractable Gaussian transition and running clipped PPO over the discretized trajectory. Denote the current diffusion policy as $p_\theta$ and the old policy as $p_\old$. Given a group of old-policy trajectories $\{\{\bfxx_{t_j}^i\}_{j=1}^T\}_{i=1}^G$, Flow-GRPO optimizes:
\begin{align}\label{eqn:grpo-loss}
    \clL^{\mathrm{GRPO}}(\theta)=\bbE_{c, \{\bfxx_0^i\}_{i=1}^G\sim p_\old(\cdot|c)}\frac{1}{G}\sum_{i=1}^G\frac{1}{T}\sum_{j=1}^T\left(\min\left(r_j^i(\theta)A^i,\tilde{r}_j^i(\theta)A^i\right)-\beta\KL(p_\theta|p_\refm)\right).
\end{align}
Here $A^i=\frac{R(\bfxx_0^i,c)-\text{mean}(\{\bfxx_0^i\}_{i=1}^G)}{\text{std}(\{\bfxx_0^i\}_{i=1}^G)}$ is the group-normalized reward of the $i$-th sample in the group $\{\bfxx_0^i\}_{i=1}^G$, $\tilde{r}_j^i=\clip(r_j^i(\theta),1-\epsilon,1+\epsilon)$ is the clipped likelihood ratio of the $j$-th step for the $i$-th sample where $r_j^i=\frac{p_\theta(\bfxx_{t_{j-1}}^i|\bfxx_{t_{j}}^i,c)}{p_\old(\bfxx_{t_{j-1}}^i|\bfxx_{t_{j}}^i,c)}$ is Euler--Maruyama per-step likelihood ratio. 

A different line of methods forgoes reverse-trajectory likelihood ratios entirely and constructs training signals from reward-labeled clean samples via the forward noising path. AWM~\citep{xue2025advantage} samples $\bfxx_0$ from the old policy, evaluates its advantage $A(\bfxx_0,c)$, applies the forward noising path $\bfxx_t=(1-t)\bfxx_0+t\bfeps$, and reuses the flow-matching target with advantage weighting\footnote{Note that for AWM and DiffusionNFT, we also use $\clL$ to represent the maximum objective. }:
\begin{align}\label{eqn:bg-awm}
    - \clL^{\mathrm{AWM}}(\theta)
    =
    \bbE_{c,\bfxx_0,t,\bfeps}\!\left[
        A(\bfxx_0,c)\,
        \left\|
            \bfvv_\theta(\bfxx_t,t,c)-(\bfeps-\bfxx_0)
        \right\|^2
    \right].
\end{align}
DiffusionNFT~\citep{zheng2025diffusionnft} follows the same forward-noising
interface but uses the reward to define positive and negative velocity mixtures
around the old model, $\bfvv^{+}=(1-\beta)\bfvv_\old+\beta\bfvv_\theta$ and
$\bfvv^{-}=(1+\beta)\bfvv_\old-\beta\bfvv_\theta$:
\begin{equation}\label{eqn:loss-nft}
\resizebox{0.93\textwidth}{!}{$
\displaystyle
   - \clL^{\mathrm{NFT}}(\theta)
    =\bbE_{c,\bfxx_0,t,\bfeps}\left[
        R(\bfxx_0,c)\left\|\bfvv^{+}(\bfxx_t,t,c)-(\bfeps-\bfxx_0)\right\|^2
        +(1-R(\bfxx_0,c))
        \left\|\bfvv^{-}(\bfxx_t,t,c)-(\bfeps-\bfxx_0)\right\|^2
    \right].
$}
\end{equation}
At first glance, these methods appear to be driven by different principles and yield structurally unrelated loss functions. Without a unified view that connects them, the field still lacks a principled basis for understanding existing recipes or systematically designing new diffusion-RL algorithms.

\section{A Unified Path-Space View of Diffusion RL}
\label{sec:methods}
\subsection{A Path-Space View via Importance Sampling}
\label{sec:method-vr}

\paragraph{The structural obstacle of Diffusion RL.} Despite the variety of practical recipes, RL for diffusion models ultimately rests on the canonical objective \eqnref{raw-obj}. Its policy-gradient estimator $\nabla_\theta\bbE_{p_\theta}[R]$ formally requires the score $\nabla_\theta\log p_\theta(\bfxx_0\mid c)$, but $p_\theta(\bfxx_0\mid c)$, the marginal of the full generation trajectory, has no closed form. The trajectory likelihood $p_\theta(\bfxx_{[0:1]},c)$ fares no better: as the path measure of a continuous-time SDE, the path law is naturally defined as a measure on $C([0,1],\bbR^d)$, not as a tractable density with respect to a finite-dimensional reference measure~\citep{oksendal2013stochastic}. The log-derivative trick that underlies PPO/GRPO~\citep{schulman2017proximal,shao2024deepseekmath,zheng2025group} therefore does not apply directly. This is the structural obstacle that any RL algorithm on a diffusion generator must confront. Viewed through this perspective, the three methods of \secref{background-rl} are different workarounds to the same obstacle. Flow-GRPO discretises the reverse SDE into tractable Gaussian steps, while AWM and DiffusionNFT bypass reverse-time likelihoods entirely and match on the forward noising path.

\paragraph{Path-Space Importance Sampling.} 
We try to address the obstacle directly. The key observation is that although a single diffusion path measure has no density, \emph{pairs} of path measures generated by SDEs that share the same diffusion coefficient admit a closed-form importance sampling ratio~\citep{anderson1982reverse,oksendal2013stochastic}. Concretely, if we sample trajectories from a tractable proposal $q$ drawn from the same SDE family (\eqnref{bg-samplingsde}) with drift $\bfvv_\base$ in place of $\bfvv_\theta$ and the same noise schedule $\eta_t$, the ratio $p_\theta/q$ is explicit and differentiable in $\theta$ under certain conditions. 

This is exactly importance sampling in the \emph{path space}: we sample entire trajectories $\{\bfxx_t\}_{t\in[0,1]}$ rather than only the endpoint $\bfxx_0$. Applying this technique to \eqnref{raw-obj} and replacing the reward $R(\bfxx_0,c)$ with the group-relative advantage $A(\bfxx_0,c)$ (which leaves the policy gradient unchanged while reducing its variance), we obtain the unbiased estimator
\begin{align}\label{eqn:rl-is-loss}
    \clL(\theta,c) = \bbE_{q(\bfxx_{[0:1]},c)}\!\left[\frac{p_\theta(\bfxx_{[0:1]},c)}{q(\bfxx_{[0:1]},c)}A(\bfxx_0,c)\right]-\beta\bbE_{q(\bfxx_{[0:1]},c)}\!\left[\frac{p_\theta(\bfxx_{[0:1]},c)}{q(\bfxx_{[0:1]},c)}\log\frac{p_\theta(\bfxx_{[0:1]},c)}{p_\refm(\bfxx_{[0:1]},c)}\right],
\end{align}
where $p_\theta$, $p_\refm$, and $q$ correspond to the sampling SDE (\eqnref{bg-samplingsde}) with velocity fields $\bfvv_\theta$, $\bfvv_\refm$, and $\bfvv_\base$ respectively, and the same noise schedule $\eta_t$. In this way,
the importance sampling ratio $ M_1(\theta)\;:=\;\log\frac{p_\theta(\bfxx_{[0:1]},c)}{q(\bfxx_{[0:1]},c)}\;$ is given by (see \appxref{backgrounds}):
\begin{align}\label{eqn:girsanov-fm}
    M_1(\theta)=-\!\int_0^{1}\!\sqrt{2\wwt_{1-r}}\,\Delta\bfvv_\theta(\bfxx_{1-r},1-r)\cdot\dd\bfww_r\,-\,\int_0^{1}\!\wwt_{1-r}\,\lrVert{\Delta\bfvv_\theta(\bfxx_{1-r},1-r)}^2\dd r,
\end{align}
where $r=1-t$, $\Delta\bfvv_\theta:=\bfvv_\theta-\bfvv_\base$ and the weight $\wwt_t\;=\;\frac{(1+\eta_t)^2}{4\eta_t}\cdot\frac{1-t}{t}$
couples the training weight to the sampler noise schedule. Substituting $M_1(\theta)$ into \eqnref{rl-is-loss}, the objective becomes a single expectation under $q$:
\begin{align}\label{eqn:is-policy-loss}
    \clL(\theta,c) \;=\; \bbE_{q}\!\left[\exp(M_1(\theta))\,A(\bfxx_0,c)\right]\;-\;\beta\,\bbE_{q}\!\left[\exp(M_1(\theta))\,\big(M_1(\theta)-\log\tfrac{p_\refm(\bfxx_{[0:1]},c)}{q(\bfxx_{[0:1]},c)}\big)\right].
\end{align}

\paragraph{An explicit variance-reduced template.}
We focus here on the policy-gradient term of \eqnref{is-policy-loss}, $\clL^{\text{policy}}(\theta,c):=\bbE_q[\exp(M_1(\theta))A(\bfxx_0,c)]$; the KL term is handled identically. Under mild assumptions that follow standard PPO practice~\citep{schulman2017proximal,shao2024deepseekmath}, the explicit form of $\clL^{\text{policy}}(\theta,c)$ can be exactly obtained (See \appxref{proof-policy-grad} for proof):

\begin{proposition}[Stochastic-integral path-space estimator]\label{prop:policy-grad}
    Assume clipping or a small step size keeps $\exp(M_1(\theta^-)) \approx 1$, where $\theta^-$ denotes the stop-gradient version of the current parameters. Then
    \begin{align}\label{eqn:sde-estimator}
    -\clL^{\text{policy}}(\theta,c)\,=_\theta\,\bbE_q\!\left[\,\underbrace{A(\bfxx_0,c)\!\int_0^1\!\sqrt{2\wwt_{1-r}}\,\Delta\bfvv_\theta\cdot\dd\bfww_r}_{\text{Term (A): noise exploration}}+\underbrace{A(\bfxx_0,c)\!\int_0^1\!\wwt_{1-r}\,\lrVert{\Delta\bfvv_\theta}^2\dd r}_{\text{Term (B): regularisation}}\right],
\end{align}
where $=_\theta$ denotes equality up to a $\theta$-independent constant.
\end{proposition}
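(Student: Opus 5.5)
The plan is to read the statement as an equality of gradients, $\nabla_\theta(-\clL^{\text{policy}})=\nabla_\theta(\text{RHS})$, evaluated at the current parameters $\theta^-$; the relation $=_\theta$ is exactly equality up to terms whose $\theta$-gradient vanishes. Since $q$ does not depend on $\theta$, the gradient passes through the expectation, giving
\begin{align*}
\nabla_\theta\clL^{\text{policy}}(\theta,c)=\bbE_q\!\left[\exp(M_1(\theta))\,A(\bfxx_0,c)\,\nabla_\theta M_1(\theta)\right].
\end{align*}
This step needs only routine integrability conditions: Novikov-type bounds on $\Delta\bfvv_\theta$ and bounded $A$, which holds because $R\in[0,1]$.

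Next, invoke the hypothesis $\exp(M_1(\theta^-))\approx 1$ and replace the prefactor $\exp(M_1(\theta))$ by $1$ at $\theta=\theta^-$. This is the PPO-style first-order surrogate. It is exact in the on-policy case $\bfvv_\base=\bfvv_{\theta^-}$, where $\Delta\bfvv_{\theta^-}=0$ and hence $M_1(\theta^-)=0$. The surrogate is then $\bbE_q[A\,M_1(\theta)]$, whose gradient matches the one above.

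Now substitute \eqnref{girsanov-fm} for $M_1(\theta)$. This gives
\begin{align*}
-\bbE_q[A\,M_1(\theta)]=\bbE_q\!\left[A\int_0^1\sqrt{2\wwt_{1-r}}\,\Delta\bfvv_\theta\cdot\dd\bfww_r+A\int_0^1\wwt_{1-r}\lrVert{\Delta\bfvv_\theta}^2\dd r\right],
\end{align*}
where $\bfww_r$ is the Brownian motion driving $q$. These are exactly Terms (A) and (B). The stop-gradient quantities $A$ and the path sample are $\theta$-independent, so no further terms appear.

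The main obstacle is making the approximation honest. The equality is exact only at the gradient level, and only when $\exp(M_1(\theta^-))=1$. I would state it as exact on-policy, with an $O(|\exp(M_1)-1|)$ error otherwise. I would also check the Girsanov derivation of \eqnref{girsanov-fm}: the noise matching in the reverse-time parametrization, and the sign convention of $\dd\bfww_r$ under $q$. Finally, I would verify that the stochastic integral is a true martingale, so that Term (A) has zero mean unconditionally and contributes only through its correlation with $A$.
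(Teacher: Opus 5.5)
Your proposal is correct and follows essentially the same route as the paper: differentiate under $\bbE_q$ to get $\bbE_q[A\exp(M_1(\theta^-))\nabla_\theta M_1(\theta)]$, use the trust-region hypothesis to replace the prefactor by $1$, and then substitute the Girsanov expression for $M_1(\theta)$. The paper makes your ``$O(|\exp(M_1)-1|)$'' error precise by assuming $|\exp(M_1(\theta^-))-1|<\epsilon$ together with $\bbE_\bbQ[\Vert A\nabla_\theta M_1(\theta)\Vert]<M$, which bounds the gradient error by $\epsilon M$; this bound rests on the integrability of $A\nabla_\theta M_1$, not on Novikov's condition.
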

Intuitively, Term~(B) is an advantage-weighted quadratic that anchors $\bfvv_\theta$ to $\bfvv_\base$, while Term~(A) follows the Brownian increments weighted by advantage --- pushing $\bfvv_\theta$ toward the noise realisations that yielded higher rewards. The latter is an It\^o integral, however, whose variance dominates the overall variance of \eqnref{sde-estimator}. A natural question arises: can Term~(A) be rewritten as a deterministic integral with the same expectation and with smaller variance? 
The next theorem provides an explicit variance-reduced template (See \appxref{proof-vr} for the proof):

\begin{theorem}[Variance-reduced path-space estimator]\label{thm:variance-reduction}
Let $q$ be the path measure of the sampling SDE \eqnref{bg-samplingsde} with base drift $\bfvv_\base$ and noise schedule $\eta_t>0$, and let $V_t(\bfxx,c):=\bbE_q[A(\bfxx_0,c)\mid\bfxx_t=\bfxx]$ be the associated value function.
Under the regularity the trust-region condition of \propref{policy-grad}, the policy gradient of \eqnref{is-policy-loss} admits the (locally) variance-reduced representation
\begin{align}\label{eqn:vr-form}
\resizebox{0.94\textwidth}{!}{$
\displaystyle
-\clL^{\text{policy}}(\theta,c)= {}_\theta\bbE_q\left[\int_0^1\left((1+\eta_t)\Delta\bfvv_\theta(\bfxx_t,t)\cdot\nabla_{\bfxx_t}V_t(\bfxx_t,c)+A(\bfxx_0,c)\wwt_t\lrVert{\Delta\bfvv_\theta(\bfxx_t,t)}^2\right)\dd t\right].
$}
\end{align}
\end{theorem}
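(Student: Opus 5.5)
Starting from \propref{policy-grad}, Term~(B) already matches the second term of \eqnref{vr-form} once we change variables $r=1-t$, so it remains only to show that Term~(A) equals, in expectation, $\bbE_q\int_0^1(1+\eta_t)\Delta\bfvv_\theta\cdot\nabla_{\bfxx_t}V_t\,\dd t$. The strategy is the standard martingale-representation / Clark--Ocone step. We condition Term~(A) on the path: since $\Delta\bfvv_\theta(\bfxx_{1-r},1-r)$ is adapted to the reverse-time filtration generated by $\bfww_r$ (the proposal runs from $t=1$ to $t=0$), the tower property gives
\begin{align*}
\bbE_q\!\left[A(\bfxx_0,c)\int_0^1\sqrt{2\wwt_{1-r}}\,\Delta\bfvv_\theta\cdot\dd\bfww_r\right]
=\bbE_q\!\left[\int_0^1\sqrt{2\wwt_{1-r}}\,\Delta\bfvv_\theta\cdot\dd\langle M,\bfww\rangle_r\right],
\end{align*}
where $M_r:=\bbE_q[A(\bfxx_0,c)\mid\mathcal F_r]=V_{1-r}(\bfxx_{1-r},c)$ is the value martingale. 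This reduces the problem to computing the martingale part of $V$ along $q$.

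Next, apply Itô's formula to $V_{1-r}(\bfxx_{1-r},c)$ under $q$. Because $V$ solves the backward Kolmogorov equation of the proposal SDE, the $\dd r$ terms cancel and $\dd M_r=\nabla_{\bfxx}V_{1-r}\cdot\sigma_{1-r}\,\dd\bfww_r$ with $\sigma_t=\sqrt{2t\eta_t/(1-t)}$ the diffusion coefficient of \eqnref{bg-samplingsde}. Hence $\dd\langle M,\bfww\rangle_r=\sigma_{1-r}\nabla_{\bfxx}V_{1-r}\,\dd r$, and by Itô isometry Term~(A) becomes $\bbE_q\int_0^1\sqrt{2\wwt_t}\,\sigma_t\,\Delta\bfvv_\theta\cdot\nabla_{\bfxx_t}V_t\,\dd t$. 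A direct computation then checks the prefactor:
$2\wwt_t\sigma_t^2=\frac{(1+\eta_t)^2}{2\eta_t}\frac{1-t}{t}\cdot\frac{2t\eta_t}{1-t}=(1+\eta_t)^2$, so $\sqrt{2\wwt_t}\,\sigma_t=1+\eta_t$, giving exactly the first term of \eqnref{vr-form}. (Sign conventions must be tracked consistently with \eqnref{girsanov-fm}; I would verify that the Girsanov integrand is $\Delta b/\sigma$ with $\Delta b=(1+\eta_t)\Delta\bfvv_\theta$, which is what makes $\sqrt{2\wwt}=(1+\eta_t)/\sigma$.)

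For the variance-reduction claim, note the new integrand is the conditional expectation of the old one given the path (the $\bfww$-component orthogonal to $M$ integrates to zero against $\Delta\bfvv_\theta$ in expectation), so by the law of total variance its variance is no larger, at least locally in $\theta$ around $\theta^-$, which is why the statement says "locally". The main obstacle is the regularity step: justifying that $V_t$ is $C^{1,2}$ and satisfies the backward equation, with enough integrability of $\nabla V$ and $\Delta\bfvv_\theta$ to apply Itô isometry and the stochastic Fubini/tower interchange, including near the endpoint $t\to0$ where $\wwt_t$ and $\sigma_t$ behave singularly; I would handle this by assuming bounded Lipschitz drifts and $\eta_t>0$, working on $[\delta,1-\delta]$ and letting $\delta\to0$ by dominated convergence, using boundedness of $A$.
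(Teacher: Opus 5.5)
Your derivation of the identity is correct and is essentially the paper's proof. It\^o's formula plus the backward Kolmogorov equation gives $\dd V_r=g_r\nabla\bar V_r\cdot\dd\bfww_r$. The cross-variation with $I_r=\int_0^r H_s\cdot\dd\bfww_s$ then yields the factor $g_r\sqrt{2\wwt_{1-r}}=1+\eta_t$; the paper obtains it through It\^o's product rule on $V_rI_r$, you through $\bbE[M_1I_1]=\bbE[\langle M,I\rangle_1]$, and both use the same $[\varepsilon,1-\varepsilon]$ truncation.

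One correction to your variance aside: the paper's ``locally'' means per time step as $\Delta t\downarrow0$, not local in $\theta$. The comparison is between the one-step estimator $A(\bfxx_0,c)(\bfww_{r+\Delta r}-\bfww_r)/(g_r\Delta r)$ and its conditional mean given the past $\clF_r$ (equivalently $\bfxx_t$). It is not a conditional mean given the whole path, which would leave the estimator unchanged.
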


\looseness=-1From the intractable canonical objective \eqnref{raw-obj}, we arrive at the explicit template \eqnref{vr-form}, which is both \emph{(i) tractable} --- every term is a deterministic integral under a proposal $q$ in the same SDE family as $p_\theta$, with no reliance on path-likelihood densities of $p_\theta$ that do not exist --- and \emph{(ii) variance-reduced} --- the stochastic integral in \eqnref{sde-estimator} is replaced by a deterministic integral depending on a $\nabla_{\bfxx_t}V_t$ estimator. These two properties enable \eqnref{vr-form} to serve as a unified tool to understand different RL methods for the diffusion model.

\vspace{-2pt}
\subsection{A Unified Framework and Its Design Space}\label{sec:method-designspace}
\vspace{-2pt}

\paragraph{The missing ingredient.}
\eqnref{vr-form} leaves one quantity unspecified: the value gradient $\nabla_{\bfxx_t}V_t(\bfxx_t,c)$. How this quantity is evaluated on each rollout together with the weights on the two terms of \eqnref{vr-form} turns out to distinguish each diffusion RL method of \secref{background-rl} from others. We begin by deriving an analytical Bayes-rule expression for $\nabla_{\bfxx_t}V_t$, from which a one-sample estimator follows.

\begin{proposition}\label{prop:single-point}
The gradient of the value function admits the closed-form expression
\begin{align}
    \nabla_{\bfxx_t} V_t(\bfxx_t, c)=\bbE_{q(\bfxx_0\mid\bfxx_t)}\!\left[\nabla_{\bfxx_t} \log q(\bfxx_t\mid\bfxx_0)\,A(\bfxx_0,c)\right] - V_t(\bfxx_t,c)\,\bbE_{q(\bfxx_0\mid\bfxx_t)}\!\left[\nabla_{\bfxx_t} \log q(\bfxx_t\mid\bfxx_0)\right].
\end{align}
Let $\bfvv(\bfxx_t,\bfxx_0):=(\bfxx_t-\bfxx_0)/t$ denote the FM conditional velocity. Under $q$ with $\bfvv_\base=\bbE[\bfvv\mid\bfxx_t]\approx\bfvv_\old$ and $\eta_t\equiv 1$ (i.e.,~$q$ is the reverse of the forward noising process), this specializes in
\begin{align}\label{eqn:gradV-centered}
    \nabla_{\bfxx_t} V_t(\bfxx_t,c)=-\tfrac{1-t}{t}\,\bbE_{q(\bfxx_0\mid\bfxx_t)}\!\left[A(\bfxx_0,c)\,\big(\bfvv(\bfxx_t,\bfxx_0)-\bfvv_\base(\bfxx_t,t)\big)\right].
\end{align}
The corresponding one-sample estimator on a rollout trajectory $\bfxx_{0:1}$ is 
\begin{align}
\widehat{\nabla V}_t^{\,\mathrm{det}}(\bfxx_t,c):=-\tfrac{1-t}{t}A(\bfxx_0,c)\big(\bfvv(\bfxx_t,\bfxx_0)-\bfvv_\old(\bfxx_t,t)\big).
\end{align}
\end{proposition}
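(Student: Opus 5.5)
The identity follows from Bayes' rule applied to the joint law of $(\bfxx_0,\bfxx_t)$ under $q$, followed by a Gaussian computation in the forward-noising case. Since $A(\bfxx_0,c)$ depends only on the endpoint, we have $V_t(\bfxx_t,c)=\int A(\bfxx_0,c)\,q(\bfxx_0\mid\bfxx_t)\,\dd\bfxx_0$. Writing the posterior as $q(\bfxx_0\mid\bfxx_t)=q(\bfxx_t\mid\bfxx_0)q(\bfxx_0)/q(\bfxx_t)$ and differentiating in $\bfxx_t$ gives two contributions. The first is $\int A\,\nabla_{\bfxx_t}q(\bfxx_t\mid\bfxx_0)\,q(\bfxx_0)\,\dd\bfxx_0/q(\bfxx_t)$, which equals $\bbE_{q(\bfxx_0\mid\bfxx_t)}[A\,\nabla_{\bfxx_t}\log q(\bfxx_t\mid\bfxx_0)]$. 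The second is $-V_t\,\nabla_{\bfxx_t}\log q(\bfxx_t)$.

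To close the general formula, I will use the standard identity
\[
\nabla_{\bfxx_t}\log q(\bfxx_t)=\bbE_{q(\bfxx_0\mid\bfxx_t)}\!\left[\nabla_{\bfxx_t}\log q(\bfxx_t\mid\bfxx_0)\right].
\]
It follows from differentiating $q(\bfxx_t)=\int q(\bfxx_t\mid\bfxx_0)q(\bfxx_0)\,\dd\bfxx_0$ under the integral sign, which I justify by assuming enough integrability to exchange derivative and integral. This yields the first display, i.e.\ a posterior covariance between $A$ and the conditional score.

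For the specialization, the main point, and the step I expect to be the obstacle, is justifying that $q(\bfxx_t\mid\bfxx_0)=\clN((1-t)\bfxx_0,t^2\bfI)$ under $q$.
\begin{itemize}
\item With $\eta_t\equiv1$, the SDE \eqnref{bg-samplingsde} is the time reversal of the forward interpolation \eqnref{fm-forward-sde}, provided its drift uses the exact marginal velocity $\bbE[\bfvv\mid\bfxx_t]$ of that interpolation.
\item This holds for the data law given by $q$'s own terminal marginal $q(\bfxx_0)$.
\item So under the hypothesis $\bfvv_\base=\bbE[\bfvv\mid\bfxx_t]$, the pair $(\bfxx_0,\bfxx_t)$ under $q$ has the same law as $\big(\bfxx_0,(1-t)\bfxx_0+t\bfeps\big)$.
\end{itemize}
I will take this as the defining content of the assumption rather than reprove time-reversal in generality; it is the standard Anderson reversal argument cited earlier.

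With the Gaussian kernel, $\nabla_{\bfxx_t}\log q(\bfxx_t\mid\bfxx_0)=-(\bfxx_t-(1-t)\bfxx_0)/t^2$. Substituting $\bfxx_0=\bfxx_t-t\,\bfvv(\bfxx_t,\bfxx_0)$ rewrites this as
\[
-\tfrac{1-t}{t}\,\bfvv(\bfxx_t,\bfxx_0)-\tfrac{1}{t}\,\bfxx_t.
\]
The general formula is the centered expression $\bbE[A(g-\bbE[g\mid\bfxx_t])\mid\bfxx_t]$ with $g=\nabla_{\bfxx_t}\log q(\bfxx_t\mid\bfxx_0)$. In this centered form the $\bfxx_t/t$ term cancels, since it is constant given $\bfxx_t$. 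Using $\bbE[\bfvv\mid\bfxx_t]=\bfvv_\base$ then gives exactly \eqnref{gradV-centered}.

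Finally, the one-sample estimator $\widehat{\nabla V}_t^{\,\mathrm{det}}$ is obtained in two steps. First, replace the posterior expectation by the single endpoint $\bfxx_0$ of the rollout through $\bfxx_t$; this is a draw from $q(\bfxx_0\mid\bfxx_t)$ because the rollout is sampled from $q$. Second, substitute the approximation $\bfvv_\base\approx\bfvv_\old$. Because $\bfvv_\base$ is $\bfxx_0$-independent, the resulting estimator is conditionally unbiased for \eqnref{gradV-centered} whenever $\bfvv_\base=\bfvv_\old$ exactly.
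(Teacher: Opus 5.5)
Your proposal is correct and matches the paper's proof. The paper applies the quotient rule directly to $V_t=\bbE_{\bfxx_0}[R\,q(\bfxx_t\mid\bfxx_0)]/\bbE_{\bfxx_0}[q(\bfxx_t\mid\bfxx_0)]$, which is the same computation as your split through the marginal-score identity. It then substitutes the Gaussian score $-\tfrac{1-t}{t}\bfvv-\tfrac{1}{t}\bfxx_t$, replaces $R$ by $A$, lets the $\bfxx_t/t$ term cancel in the centered form, and uses $\bfvv_\old\approx\bbE[\bfvv\mid\bfxx_t]$ for the one-sample estimator, exactly as you do; where you justify the Gaussian kernel via the $\eta_t\equiv1$ time reversal, the paper simply asserts it.
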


See \appxref{proof-single-point} for the proof. Substituting $\widehat{\nabla V}_t^{\,\mathrm{det}}$ into \thmref{variance-reduction} yields the concrete training objective
\begin{align}\label{eqn:temp-loss}
\resizebox{0.94\textwidth}{!}{$
\displaystyle
\bbE_q\!\left[\int_0^1\!\left(\underbrace{-\tfrac{(1-t)(1+\eta_t)}{t}}_{\text{weight term 1}}\Delta\bfvv_\theta(\bfxx_t,t)\cdot A(\bfxx_0,c)\big(\bfvv-\bfvv_\old(\bfxx_t,t)\big)+\underbrace{A(\bfxx_0,c)\,\wwt_t}_{\text{weight term 2}}\lrVert{\Delta\bfvv_\theta(\bfxx_t,t)}^2\right)\dd t\right],
$}
\end{align}
where $\bfvv=(\bfxx_t-\bfxx_0)/t$ is the single-sample conditional velocity. \eqnref{temp-loss} is a fully explicit, variance-controlled diffusion-RL loss obtained by instantiating the variance-reduced template \eqnref{vr-form} with the deterministic one-sample estimator of \propref{single-point} under the noise schedule $\eta_t\!=\!1$.

\paragraph{A unified framework.}
Moreover, the stochastic integrand of Term~(A) in \eqnref{sde-estimator} is itself an estimator of $\nabla_{\bfxx_t}V_t$ --- a \emph{stochastic} one-sample estimator using the Brownian increment of the rollout SDE, 
$\widehat{\nabla V}_t^{\,\mathrm{sto}}(\bfxx_t,c):=A(\bfxx_0,c)\sqrt{(1-t)/(2\eta_t t\Delta t)}\,\bfeps_t$. 
Substituting it through the structural identity of \thmref{variance-reduction} gives a second, equally valid instantiation. The two instantiations share a common two-term form: an off-policy quadratic in $\Delta\bfvv_\theta$ weighted by a function of the advantage, plus an on-policy inner product between $\Delta\bfvv_\theta$ and the chosen value-gradient estimator. 
Viewing the estimator and the two weights 
as free parameters yields the following unified framework:
\begin{tcolorbox}[colback=blue!5!white,colframe=black!40!white,boxrule=0.4pt,arc=1pt,left=5pt,right=5pt,top=4pt,bottom=4pt]
\begin{align}\label{eqn:unified-loss}
\resizebox{0.93\textwidth}{!}{$
\displaystyle
    -\clL^{\text{policy}}(\theta,c)=\int_0^1\bbE_q\!\Big[\underbrace{\hat{w}_1\!\big(A(\bfxx_0,c),t\big)\lrVert{\Delta\bfvv_\theta(\bfxx_t,t)}^2}_{\text{off-policy regulariser}}+\underbrace{\hat{w}_2(t)\big\langle\Delta\bfvv_\theta(\bfxx_t,t),\widehat{\nabla V}_t(\bfxx_t,c)\big\rangle}_{\text{on-policy policy-gradient}}\Big]\dd t,
$}
\end{align}
\end{tcolorbox}
\noindent where $\widehat{\nabla V}_t$ is any estimator of $\nabla_{\bfxx_t}V_t(\bfxx_t,c)$ and $\hat{w}_1,\hat{w}_2$ are arbitrary weight functions. In principle, different diffusion RL methods of \secref{background-rl} turn out to be instances of \eqnref{unified-loss} under specific choices of these parameters. \tabref{master} enumerates these choices for various methods.

\begin{tcolorbox}[colback=blue!5!white,colframe=black!40!white,boxrule=0.4pt,arc=1pt,left=5pt,right=5pt,top=4pt,bottom=4pt]
\paragraph{The design space.} Reading \tabref{master} by columns, the concrete choices in \eqnref{unified-loss} group naturally into three independent axes:
\begin{enumerate}
    \item[\textbf{(i)}] \textbf{Value-gradient estimation}: the proposal $q$ (determined by the choice of $\bfvv_\base$ and $\eta_t$) together with the estimator $\widehat{\nabla V}_t$ evaluated under $q$. A typical choice takes $q$ induced by a lagged velocity model $\bfvv_\old$ and a specific noise schedule $\eta_t$.
    \item[\textbf{(ii)}] \textbf{Weight functions}: the off-policy weight $\hat{w}_1(A,t)$ and the on-policy weight $\hat{w}_2(t)$. When $\bfvv_\old=\bfvv_\theta$ (on-policy), the off-policy regulariser of \eqnref{unified-loss} vanishes and only $\hat{w}_2$ matters; when $\bfvv_\old\neq\bfvv_\theta$ (off-policy), $\hat{w}_1$ and $\hat{w}_2$ jointly determine the training signal.
    \item[\textbf{(iii)}] \textbf{Sampler}: the numerical scheme used to simulate $q$ and collect rollout samples.
\end{enumerate}
\end{tcolorbox}

\begin{table}[t]
\centering\small
\caption{Diffusion RL methods as rows in the five-knob design space of \eqnref{unified-loss}. We use the FM parameterisation throughout; $\bfvv$ and $\bfeps$ denote the FM conditional velocity and the per-step Brownian increment. See \appxref{table-proof} for detailed derivations of each row.}
\label{tab:master}
\renewcommand{\arraystretch}{1.35}
\setlength{\tabcolsep}{4pt}
\resizebox{1.00\textwidth}{!}{\begin{tabular}{@{}lcccccc@{}}
\toprule
Method & $\bfvv_\base$ ($\eta_t$) & $\widehat{\nabla V}_t$ & $s(t)$ & $\hat{w}_1(A,t)$ & $\hat{w}_2(t)$ & Sampler \\
\midrule
Flow-GRPO~\citep{liu2025flow,xue2025dancegrpo}
  & $\bfvv_\old$ ($\eta_t$)
  & $s(t)\cdot A\bfeps$
  & $\sqrt{\!\tfrac{1-t}{2\eta_t t \Delta t}}$
  & $A\cdot\frac{(1+\eta_t)^2}{4\eta_t}\frac{1-t}{t}$
  & $1+\eta_t$
  & Flow-SDE\\
GRPO-Guard~\citep{wang2025grpo}
  & $\bfvv_\old$ ($\eta_t$)
  & $s(t)\cdot A\bfeps$
  & $\sqrt{\!\tfrac{1-t}{2\eta_t t \Delta t}}$
  & $0$
  & $\frac{(1+\eta_t)}{s(t)\Delta t}$
  & Flow-SDE\\
TempFlow-GRPO~\citep{he2025tempflow} 
  & $\bfvv_\old$ ($\eta_t$)
  & $s(t)\cdot A\bfeps$
  & $\sqrt{\!\tfrac{1-t}{2\eta_t t \Delta t}}$
  & $A\cdot\frac{(1+\eta_t)^2}{4}\sqrt{\frac{2(1-t)\Delta t}{\eta_tt}}$
  & $(1+\eta_t)/s(t)$
  & Branched-SDE\\
AWM~\citep{xue2025advantage}
  & $\bbE[\bfvv\mid\bfxx_t]$ ($1$)
  & $-s(t)\cdot A(\bfvv\!-\!\bfvv_\old)$
  & $\tfrac{1-t}{t}$
  & $A/\Delta t$
  & $\tfrac{2}{s(t)\Delta t}$
  & Flow-SDE / ODE\\
DiffusionNFT~\citep{zheng2025diffusionnft}
  & $\bbE[\bfvv\mid\bfxx_t]$ ($1$)
  & $-s(t)\cdot A(\bfvv\!-\!\bfvv_\old)$
  & $\tfrac{1-t}{t}$
  & $1/\Delta t$
  & $\tfrac{2}{\beta s(t)\Delta t}$
  & DPM-ODE~\citep{lu2025dpm}\\
\midrule
\textbf{Ours} (\secref{method-principled})
  & $\bfvv_\old$ ($\eta_t$)
  & $\widehat{\nabla V}_t^{\,\mathrm{KDE}}$(\eqnref{kde-estimator})
  & $\frac{1-t}{t}$
  & $(1-t)^{\alpha_1}/\Delta t$
  & $\frac{t^{\alpha_2}}{s(t)\Delta t}$
  & Flow-SDE\\
\bottomrule
\end{tabular}}
\vspace{-10pt}
\end{table}

\vspace{-2pt}
\subsection{Elucidating Design Choices}\label{sec:method-principled}

\looseness=-1The design space of \secref{method-designspace} has two primary axes: the value-gradient estimator $\widehat{\nabla V}_t$ and the weight pair $(\hat{w}_1,\hat{w}_2)$. We elucidate each axis in turn\footnote{We treat sampler choices as specific designs within value-gradient estimation (e.g., trajectory branching in MixGRPO and TempFlow-GRPO).}. For the estimator, we observe that methods in \tabref{master} rely on a one-sample Monte Carlo estimator of $\nabla_{\bfxx_t}V_t$ and we introduce a multi-sample refinement based on kernel density estimation. For the weights, we extract a scale-bounded structural principle from recent advanced methods and show that it implies the $t$-dependent shape of both $\hat{w}_1$ and $\hat{w}_2$ given the choice of value-gradient estimator.

\paragraph{Estimator axis: from one-sample to multi-sample.}
Every row of \tabref{master} uses a one-sample value gradient estimator:
GRPO-type methods use the original stochastic one-sample estimator $\widehat{\nabla V}_t^{\,\mathrm{sto}}$ built from a single Brownian increment, while AWM and DiffusionNFT use the deterministic one-sample estimator $\widehat{\nabla V}_t^{\,\mathrm{det}}$ (defined in \propref{single-point}) from a single $(\bfxx_0,\bfxx_t)$ pair. Both carry the full conditional variance of the integrand per rollout 
which remains unchanged by simply enlarging the group.

The natural variance reduction is to average over $K$ samples from $q(\bfxx_0|\bfxx_t)$, but drawing $K$ fresh samples at each query $\bfxx_t$ would cost $K$ extra rollouts. Instead, we exploit the collected rollout group $\{(\bfxx_0^i,\bfxx_t^i)\}_{i=1}^G$. 
For a noisy state $\bfxx_t$, we take a kernel-weighted average over the $G$ group samples with a kernel matched to the FM forward likelihood $q(\bfxx_t|\bfxx_0)=\clN(\bfxx_t;(1-t)\bfxx_0,t^2\bfI)$. This gives the Nadaraya--Watson KDE estimator
\begin{align}\label{eqn:kde-estimator}
    \widehat{\nabla V}_t^{\,\mathrm{KDE}}(\bfxx_t,c)\;:=\;-\tfrac{1-t}{t}\cdot\frac{\sum_{i=1}^G K(\bfxx_t,\bfxx_0^i)\,A(\bfxx_0^i,c)\,\big(\bfvv(\bfxx_t,\bfxx_0^i)-\bfvv_\old(\bfxx_t,t)\big)}{\sum_{i=1}^G K(\bfxx_t,\bfxx_0^i)},
\end{align}
where $K(\bfxx_t,\bfxx_0):=\exp(-\frac{\lrVert{\bfxx_t-(1-t)\bfxx_0}^2}{2h\,t^2})$ is the FM-matched Gaussian kernel and $h$ is a hyperparameter to balance the relative weight for the samples in the group, typically fixed as a constant depends on the dimension. The KDE estimator is also inspired by \citet{bertrand2026closed}, which shows that including one anchor posterior sample can make the self-normalized estimator unbiased. We summarize the properties of different value-gradient estimators as follows.

\begin{proposition}[Properties of the value-gradient estimators]\label{prop:estimator-unbiased}
Let $A(\bfxx_0,c)$ be square-integrable. Assume $\bfvv_\base=\bfvv_\old$, then the following arguments hold:
\begin{enumerate}
    \item[(i).] The discretized stochastic one-sample estimator is asymptotically unbiased in the continuous-time It\^o limit of \thmref{variance-reduction}: $\lim_{\Delta t\to0}\bbE\!\left[\widehat{\nabla V}_{t,\Delta t}^{\,\mathrm{sto}}\mid\bfxx_t\right]=\nabla_{\bfxx_t}V_t(\bfxx_t,c)$;
    \item[(ii).] Assume $\eta_t\equiv1$, $\bfvv_\old(\bfxx_t,t)=\bbE[\bfvv(\bfxx_t,\bfxx_0)\mid\bfxx_t]$, then the deterministic one-sample estimator is unbiased: $\bbE\!\left[\widehat{\nabla V}_t^{\,\mathrm{det}}\mid\bfxx_t\right]=\nabla_{\bfxx_t}V_t(\bfxx_t,c)$;
    \item[(iii).] Under the assumption of (ii) and $h=1$, the KDE estimator is unbiased and has smaller variance than the deterministic one-sample estimator:
    \begin{align}
        \bbE\!\left[\widehat{\nabla V}_t^{\,\mathrm{KDE}}(\bfxx_t^j,c)\mid\bfxx_t^j\right]
        =\nabla_{\bfxx_t}V_t(\bfxx_t^j,c),\quad
        \mathrm{Var}\!\left(\widehat{\nabla V}_t^{\,\mathrm{KDE}}\mid\bfxx_t^j\right)
        \leq \mathrm{Var}\!\left(\widehat{\nabla V}_t^{\,\mathrm{det}}\mid\bfxx_t^j\right).
    \end{align}
\end{enumerate}
\end{proposition}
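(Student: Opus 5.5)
We prove the three parts separately, each by reducing to a conditional-expectation identity under $q$.

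\emph{Part (i).} Under Euler--Maruyama, the step from $\bfxx_t$ to $\bfxx_{t-\Delta t}$ uses $\bfeps_t=\Delta\bfww/\sqrt{\Delta t}$, and $\bfeps_t$ is independent of $\bfxx_t$. Conditioning on $\bfxx_{t-\Delta t}$ gives $\bbE[A\,\bfeps_t\mid\bfxx_t]=\bbE[V_{t-\Delta t}(\bfxx_{t-\Delta t})\,\bfeps_t\mid\bfxx_t]$. Write $\bfxx_{t-\Delta t}=\bfxx_t-\bfbb\,\Delta t+\sigma_t\sqrt{\Delta t}\,\bfeps_t$ with $\sigma_t=\sqrt{2t\eta_t/(1-t)}$. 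Gaussian integration by parts (Stein's lemma) then yields
\[
\bbE[V_{t-\Delta t}(\bfxx_{t-\Delta t})\bfeps_t\mid\bfxx_t]=\sigma_t\sqrt{\Delta t}\,\bbE[\nabla V_{t-\Delta t}(\bfxx_{t-\Delta t})\mid\bfxx_t].
\]
The prefactor $s(t)=\sqrt{(1-t)/(2\eta_t t\Delta t)}$ cancels $\sigma_t\sqrt{\Delta t}$ only up to a $t$-dependent constant, and that constant is exactly what the weights in \tabref{master} account for. Under smoothness and growth conditions on $V$ (which I would assume, as implied by the regularity of \thmref{variance-reduction}), dominated convergence sends the right-hand side to $\nabla V_t(\bfxx_t)$ as $\Delta t\to0$. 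I will fix the normalization so that the limit matches the statement.

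\emph{Part (ii).} This follows directly from \propref{single-point}. By the tower property,
\[
\bbE[\widehat{\nabla V}^{\mathrm{det}}_t\mid\bfxx_t]=-\tfrac{1-t}{t}\,\bbE_{q(\bfxx_0\mid\bfxx_t)}\big[A(\bfvv-\bfvv_\old)\big],
\]
because $\bfvv_\old(\bfxx_t,t)$ is $\bfxx_t$-measurable. With $\eta_t\equiv1$ and $\bfvv_\old=\bbE[\bfvv\mid\bfxx_t]=\bfvv_\base$, this is exactly \eqnref{gradV-centered}. For completeness, \eqnref{gradV-centered} itself comes from the Bayes-rule formula. We use $\nabla_{\bfxx_t}\log q(\bfxx_t\mid\bfxx_0)=-(\bfxx_t-(1-t)\bfxx_0)/t^2$ and the identity $\bfxx_t-(1-t)\bfxx_0=t\bfxx_t\cdot$(rearranged) $=(1-t)t\,\bfvv+t\bfxx_t$-type algebra, which gives $-\tfrac{1-t}{t}\bfvv$ plus an $\bfxx_t$-measurable term. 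That term cancels against the centering $V_t\,\bbE[\nabla\log q]$.

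\emph{Part (iii).} This is the main work. The group $\{(\bfxx_0^i,\bfxx_t^i)\}$ consists of i.i.d.\ draws from the joint $q(\bfxx_0,\bfxx_t)$; the query is $\bfxx_t^j$ and $\bfxx_0^j$ is its anchor posterior sample. With $h=1$, $K(\bfxx_t,\bfxx_0)\propto q(\bfxx_t\mid\bfxx_0)$, so the self-normalized weights $\pi_i\propto q(\bfxx_t^j\mid\bfxx_0^i)$ are Bayes posterior weights over the pool $\{\bfxx_0^i\}$. For unbiasedness I would use the exchangeability argument of \citet{bertrand2026closed}:
\begin{enumerate}
\item Write $\{\bfxx_0^i\}$ as one posterior draw $\bfxx_0^j\sim q(\cdot\mid\bfxx_t^j)$ together with $G-1$ prior draws $\bfxx_0^i\sim q(\bfxx_0)$ for $i\neq j$. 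The other noisy states $\bfxx_t^i$ are irrelevant because the estimator uses only the endpoints $\bfxx_0^i$.
\item Show that the law of the index $J$ sampled with probability $\pi_J$ given the unordered pool coincides with the law of the anchor. The joint density of the pool with anchor at position $j$ is $q(\bfxx_0^j\mid\bfxx_t)\prod_{i\ne j}q(\bfxx_0^i)\propto q(\bfxx_t\mid\bfxx_0^j)\prod_i q(\bfxx_0^i)$, so given the multiset the anchor position is distributed exactly as $\pi$.
\item Conclude that $\bbE[\sum_i\pi_i f(\bfxx_0^i)\mid\bfxx_t]=\bbE[f(\bfxx_0^j)\mid\bfxx_t]$ with $f=-\tfrac{1-t}{t}A(\bfvv-\bfvv_\old)$, and then apply (ii).
\end{enumerate}

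For the variance bound, the KDE estimator equals $\bbE[f(\bfxx_0^{J})\mid\text{pool},\bfxx_t]$, and $f(\bfxx_0^{J})$ has the same conditional law as the one-sample estimator $f(\bfxx_0^j)$. Rao--Blackwell (Jensen, or the law of total variance) then gives $\mathrm{Var}(\text{KDE}\mid\bfxx_t)\le\mathrm{Var}(f(\bfxx_0^j)\mid\bfxx_t)$.

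The main obstacle is step 2: checking rigorously that the pool law matches this mixture representation when the group is sampled from $q$ jointly rather than built around a fixed query. I expect this to be resolved by conditioning on $\bfxx_t^j$ and using independence across group members.
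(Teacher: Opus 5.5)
Your parts (ii) and (iii) follow the paper's proof essentially verbatim. Part (ii) is the observation that the integrand of \eqnref{gradV-centered} is exactly $\widehat{\nabla V}_t^{\,\mathrm{det}}$. Part (iii) is the same anchor/random-relabeling argument: a uniform index $J$, $\bbP(J=i\mid\bfX_{1:G})\propto K_i$, the KDE estimator written as $\bbE[F(\bfX_J)\mid\bfX_{1:G}]$, then the tower property and the law of total variance. The ``obstacle'' you flag in step 2 is resolved in the paper exactly as you expect. The $G$ pairs are i.i.d.\ from the joint $q(\bfxx_0,\bfxx_t)$. Conditioning on $\bfxx_t^j$ therefore leaves $\bfxx_0^j\sim q(\cdot\mid\bfxx_t^j)$, and the other endpoints are i.i.d.\ from $q_0$ once their $\bfxx_t^i$ are marginalized out.

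For (i) you take a genuinely different route. The paper uses the true Brownian increment of the continuous proposal together with the value martingale $\dd V_r=g_r\nabla_{\bfxx}\bar V_r(\bfyy_r,c)\cdot\dd\bfww_r$, already derived in the proof of \thmref{variance-reduction}. The It\^o isometry then gives $\bbE[A\,(\bfww_{r+\Delta r}-\bfww_r)\mid\clF_r]=\bbE[\int_r^{r+\Delta r}g_s\nabla\bar V_s\,\dd s\mid\clF_r]$. So in the paper (i) is a short corollary of the same martingale structure, at the cost of $C^{1,2}$ regularity and the backward Kolmogorov equation.

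Your Stein's-lemma argument on one Gaussian step instead gives an exact finite-step identity, $\bbE[\widehat{\nabla V}^{\,\mathrm{sto}}_{t,\Delta t}\mid\bfxx_t]=\bbE[\nabla V_{t-\Delta t}(\bfxx_{t-\Delta t})\mid\bfxx_t]$, and needs only first-order spatial regularity of $V$. It does rely on the step being exactly Gaussian given $\bfxx_t$. You should therefore state that after the Euler--Maruyama step the proposal continues as the continuous $q$. Otherwise the conditional expectation is the value function of the discretized chain, and you need an extra argument that its gradient converges.

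One correction. Your claim that $s(t)$ cancels $\sigma_t\sqrt{\Delta t}$ ``only up to a $t$-dependent constant'' is wrong. In fact $s(t)\sigma_t\sqrt{\Delta t}=\sqrt{\tfrac{1-t}{2\eta_t t\Delta t}}\sqrt{\tfrac{2t\eta_t}{1-t}}\sqrt{\Delta t}=1$ exactly. There is no normalization to fix, and rescaling the estimator would mean proving a different statement. The factor $1+\eta_t$ in $\hat w_2$ comes from the cross-variation $g_r\sqrt{2\wwt_{1-r}}=1+\eta_t$ in \thmref{variance-reduction}, not from the estimator's normalization.
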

See \appxref{proof-estimator-unbiased} for the proof. \propref{estimator-unbiased} shows that the KDE estimator is unbiased and performs variance reduction for the deterministic one-sample estimator when $h=1$. With the same rollout construction but $h\neq1$, the anchor is drawn from the true posterior while the normalized weights correspond to a different kernel, so the estimator is generally biased. Thus, $h$ explicitly trades exact posterior matching for smoother finite-sample estimates; we quantify this trade-off in \secref{var-reduction-exp}.

\paragraph{Weight axis: a scale-bounded principle.}
From \eqnref{unified-loss}, the on-policy gradient contribution at time $t$ has magnitude proportional to $\hat{w}_2(t)\,\widehat{\nabla V}_t(\bfxx_t,c)$: this product is what scales the per-step update before the network-gradient factor $\partial_\theta\Delta\bfvv_\theta$ enters. For stable training, its magnitude should be controlled across $t$: neither blowing up at particular timesteps (which would amplify gradient variance and skew the update) nor vanishing where gradient signal is needed. We therefore inspect $\hat{w}_2(t)\,\widehat{\nabla V}_t$ across the rows of \tabref{master}:
\begin{center}\small
\begin{tabular}{@{}l@{\hspace{1.5em}}l@{\hspace{1em}}l@{}}
Flow-GRPO: & $\hat{w}_2(t)\,\widehat{\nabla V}_t \Delta t \;=\; (1+\eta_t)\,A\,\sqrt{(1-t)\Delta t/(2\eta_t\,t)}\,\bfeps_t$ & \emph{(blows up as $,t\!\to\!0$)} \\
GRPO-Guard: & $\hat{w}_2(t)\,\widehat{\nabla V}_t\Delta t \;=\; (1+\eta_t)\,A\,\bfeps_t$ & \emph{(bounded)} \\
AWM: & $\hat{w}_2(t)\,\widehat{\nabla V}_t\Delta t \;=\; -2\,A\,(\bfvv-\bfvv_\old)$ & \emph{(bounded)} \\
DiffusionNFT: & $\hat{w}_2(t)\,\widehat{\nabla V}_t\Delta t \;=\; -\tfrac{2}{\beta}\,A\,(\bfvv-\bfvv_\old)$ & \emph{(bounded)}.
\end{tabular}
\end{center}
\noindent The three methods that empirically outperform Flow-GRPO each pair their estimator with a weight whose product $\hat{w}_2\,\widehat{\nabla V}_t\Delta t$ stays bounded uniformly in $t$; Flow-GRPO alone violates this property. We 
conclude the pattern to an explicit design principle:

\begin{tcolorbox}[colback=blue!5!white,colframe=black!40!white,boxrule=0.4pt,arc=1pt,left=5pt,right=5pt,top=4pt,bottom=4pt]
\textbf{Scale-bounded principle.} Choose $\hat{w}_2(t)$ so that $\hat{w}_2(t)\,\widehat{\nabla V}_t\Delta t$ stays bounded uniformly in $t$ ; analogously, choose $\hat{w}_1(A,t)$ so that $\hat{w}_1(A,t)\lrVert{\Delta\bfvv_\theta}^2\Delta t$ stays bounded uniformly in $t$.
\end{tcolorbox}

The principle applies asymmetrically to the two weights, reflecting the asymmetric $t$-dependence of the two factors it multiplies. For $\hat{w}_2(t)$, the deterministic estimator $\widehat{\nabla V}_t^{\,\mathrm{det}}$ and our KDE estimator $\widehat{\nabla V}_t^{\,\mathrm{KDE}}$ both inherit the $(1-t)/t$ prefactor of \propref{single-point}; the principle then \emph{pins down} the shape of $\hat{w}_2(t)$ up to a single free exponent. Let $s(t)$ be the scale factor of the estimator $\widehat{\nabla V}_t$ (summarized in \tabref{master}), $w_2(t):=\hat{w}_2(t)s(t)\Delta t$. 
The principle requires $w_2(t)$ to stay bounded uniformly in $t$. We adopt the minimal single-exponent family
    $w_2(t)\;=\;t^{\alpha}, \alpha\geq 0$
which saturates the principle 
and concentrates mass more sharply near $t=1$ for larger $\alpha$ (any multiplicative constant is omitted).

For $\hat{w}_1$, the situation is different: because $\lrVert{\Delta\bfvv_\theta}^2$ carries no $t$-singularity, the principle reduces to ``$\hat{w}_1\Delta t$ is bounded in $t$''. Flow-GRPO's $\hat{w}_1\Delta t=A\,\wwt_t\Delta t\!\propto\!A(1-t)\Delta t/t$ diverges as $t\!\to\!0$ and is ruled out; AWM's $\hat{w}_1\Delta t=A$ and DiffusionNFT's $\hat{w}_1\Delta t=1$ are admissible, but the principle alone does not pick the monotonicity direction in $t$ or the presence of advantage coupling. Let $w_1(A,t)=\hat{w}_1(A,t)\Delta t$. Four minimal single-exponent candidates remain:
\begin{align}\label{eqn:w1-family}
    w_1(A,t)\;\in\;\big\{\,t^{\alpha},\ (1-t)^{\alpha},\ A(\bfxx_0,c)\,t^{\alpha},\ A(\bfxx_0,c)\,(1-t)^{\alpha}\,\big\},\quad \alpha\geq 0,
\end{align}
distinguishing \textbf{(a)} whether the trust-region tightens toward the clean-data end ($(1-t)^\alpha$) or the noise end ($t^\alpha$), and \textbf{(b)} whether its strength couples to the advantage. Which of the four shapes is optimal is an empirical question the principle cannot resolve; we ablate across the family in \secref{mapping}.

\vspace{-2pt}
\section{Experiments}
\vspace{-2pt}
\label{sec:exp}
In this section, we show the experimental results on the design of each component in our framework. In \secref{var-reduction-exp}, we introduce the settings of our experiments and give an example to show how the variance reduction trick in \secref{method-vr} works. In \secref{mapping}, we show the principles of weight design for different estimators, which gives a mapping between the value-gradient estimator and its optimal weight. In \secref{exp-smallscale}, we compare our final recipe derived
by following the design-space map to its optimum, and show the comparison with previous baseline models across different reward settings. 

\vspace{-2pt}
\subsection{Settings}\label{sec:var-reduction-exp}
\vspace{-2pt}

\paragraph{Setup.}
We conduct our experiments on the SD-3.5 Medium model~\citep{esser2024scaling} and Qwen-Image~\citep{wu2025qwen} at a resolution of $512\times512$. Following Flow-GRPO~\citep{liu2025flow} and DiffusionNFT~\citep{zheng2025diffusionnft}, we set group size $G=24$, $T=10$ denoising steps for training and $T=40$ for evaluation. In addition, we choose to fine-tune the pretrained model using LoRA~\citep{hu2022lora} and disable classifier-free guidance (CFG)~\citep{ho2022classifier} in the training and evaluation stages for fair comparison. We evaluate our methods on various rewards including PickScore~\citep{kirstain2023pick}, visual-text OCR accuracy~\citep{chen2023textdiffuser}, and GenEval~\citep{ghosh2023geneval}. See \appxref{experimental-details} for details.

\paragraph{How the variance-reduction trick works.} In \secref{method-vr} and \secref{method-designspace}, we show the correctness of the one-sample estimator and view it as the estimator for the variance-reduced form of Term A in \eqnref{sde-estimator}. Although the methods exhibit strong theoretical properties, it's unclear whether it helps to improve the training convergence of RL algorithms. Previous works AWM~\citep{xue2025advantage} and DiffusionNFT~\citep{zheng2025diffusionnft} show that this type of estimator accelerates the training convergence of RL training under specific weight design (\tabref{master}), it's unclear either the estimator or the specific weight design leads to the empirical performance. To find out this, we directly compare the training performance of three training objectives: (1) the original objective \eqnref{sde-estimator} with $\eta_t=1$ (corresponding to the reverse of the forward process) ; (2) the one-sample estimator with inherited weighting \eqnref{temp-loss}; (3) A reweighted version of \eqnref{temp-loss} (same as AWM~\citep{xue2025advantage}).

The results are shown in \figref{estimator-bias-variance}(a). The one-sample estimator with inherited weighting suffers instability and ultimately fails, indicating that the variance reduction alone can not accelerate the training; the reweighting tricks are also indispensable. With the reweighted form, the one-sample estimator achieves remarkably faster convergence speed. This observation inspires us to investigate the weight design for each value-gradient estimator in the next subsection.

\paragraph{Direct bias--variance measurement.}
We directly measure the estimation error in the setting that matches the conditions of \propref{estimator-unbiased}. We set $\eta_t=1$ and evaluate SD3.5-Medium with PickScore on four prompts, using 128 trajectories per prompt and 40 denoising steps. As shown in \figref{estimator-bias-variance}(a), the stochastic estimator has $13.8\times$ the variance of the deterministic one-sample estimator, directly supporting the variance-reduction mechanism in \thmref{variance-reduction}. The matched KDE estimator ($h=1$) further reduces variance by $67.5\%$ relative to the deterministic estimator. Squared bias contributes less than $2\%$ of the MSE in every setting, so the observed error is dominated by variance.

\begin{figure}[t]
    \centering
    \begin{subfigure}[b]{0.3\textwidth}
        \centering
        \includegraphics[width=\linewidth]{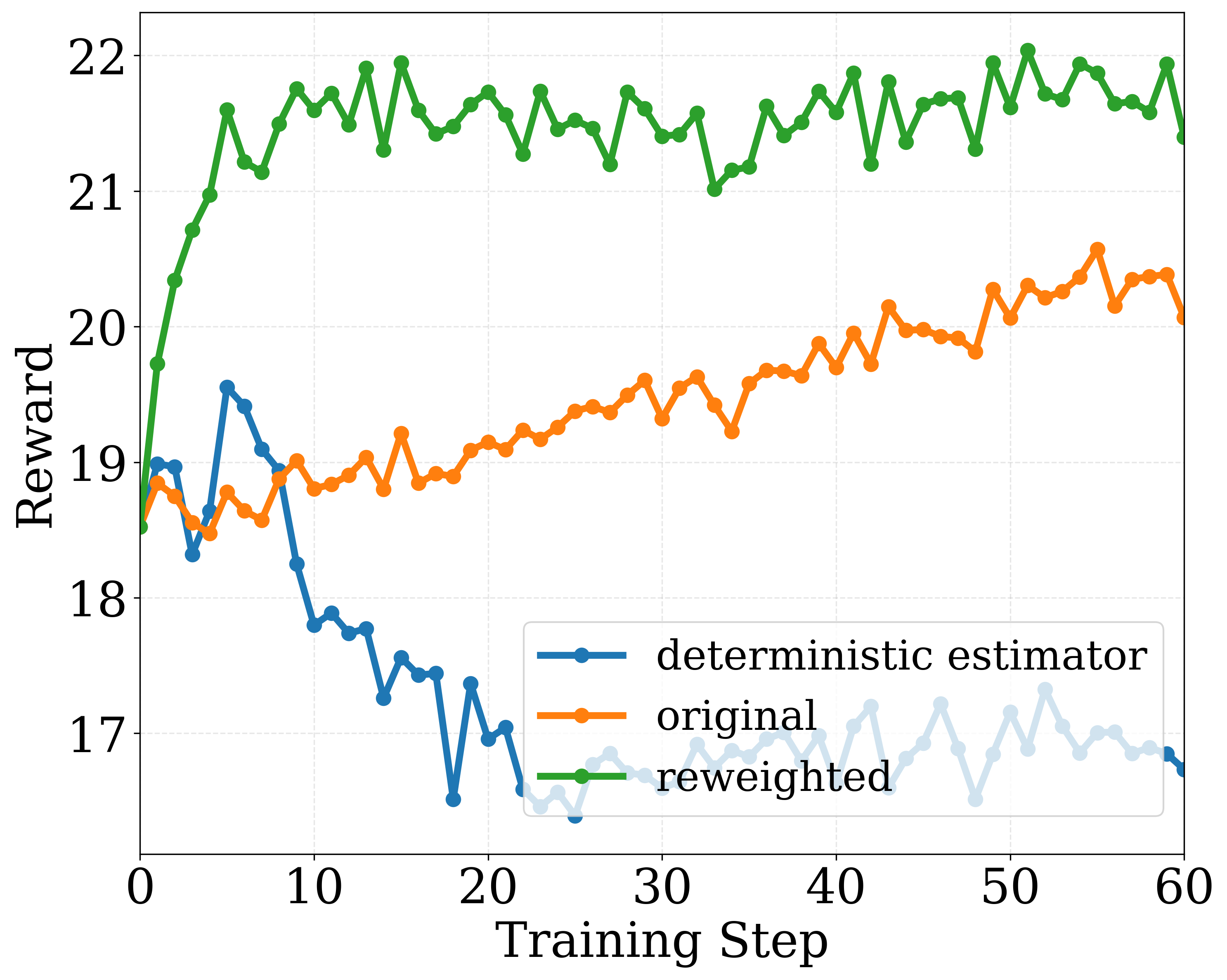}
        \caption{}
        \label{fig:var-reduction}
    \end{subfigure}
    \begin{subfigure}[b]{0.34\textwidth}
        \centering
        \includegraphics[width=\linewidth]{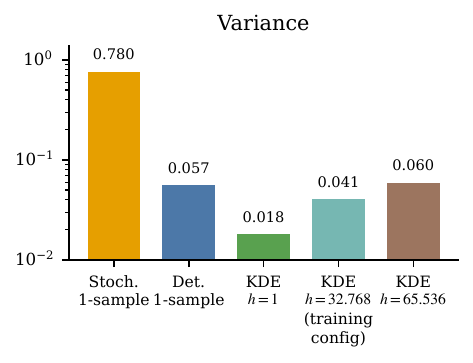}
        \caption{}
        \label{fig:estimator-variance}
    \end{subfigure}%
    \hfill
    \begin{subfigure}[b]{0.34\textwidth}
        \centering
        \includegraphics[width=\linewidth]{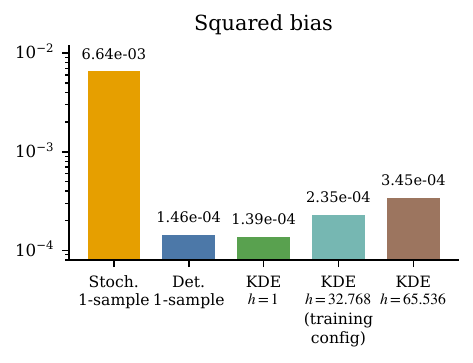}
        \caption{}
        \label{fig:estimator-squared-bias}
    \end{subfigure}
    \caption{\textbf{(a)} Comparison between different objectives on Pickscore reward. Original objective \eqnref{sde-estimator} with $\eta_t=1$ (Orange line); one-sample estimator with weighting in \eqnref{temp-loss} (Blue line); reweighted version of \eqnref{temp-loss} (Green line). \textbf{(b,c)} Empirical variance and squared bias of the value-gradient estimators. We set $\eta_t=1$ and evaluate SD3.5-Medium with PickScore reward on four prompts, using 128 trajectories per prompt and 40 denoising steps. At every intermediate state, the reference value gradient is approximated from all 128 trajectories; each estimator is computed with $G=24$ then compared with this common reference. The KDE training configuration uses $h=32.768$.
    }
    \label{fig:estimator-bias-variance}
    \vspace{-8pt}
\end{figure}

\paragraph{Bias--smoothness trade-off.}
The bandwidth controls a practical bias--smoothness trade-off. Although $h=1$ yields the lowest variance (\propref{estimator-unbiased} and \figref{estimator-bias-variance}(b)), its kernel weights can change abruptly across adjacent denoising steps in the high-dimensional latent space. Increasing $h$ smooths this temporal transition and improves optimization stability, at the cost of weakening posterior matching and increasing both variance and bias. We therefore use the intermediate $h$ that retains a meaningful variance reduction while producing smoother value-gradient targets along the trajectory.

\subsection{A Mapping between the Value-Gradient Estimator and Its Optimal Weight}\label{sec:mapping}
\paragraph{On-policy weighting $w_2$.}

\begin{figure}[htbp]
    \centering
    \begin{minipage}{0.94\textwidth}
        \centering
    
        \begin{subfigure}[b]{0.32\textwidth}
            \centering
            \includegraphics[width=\linewidth]{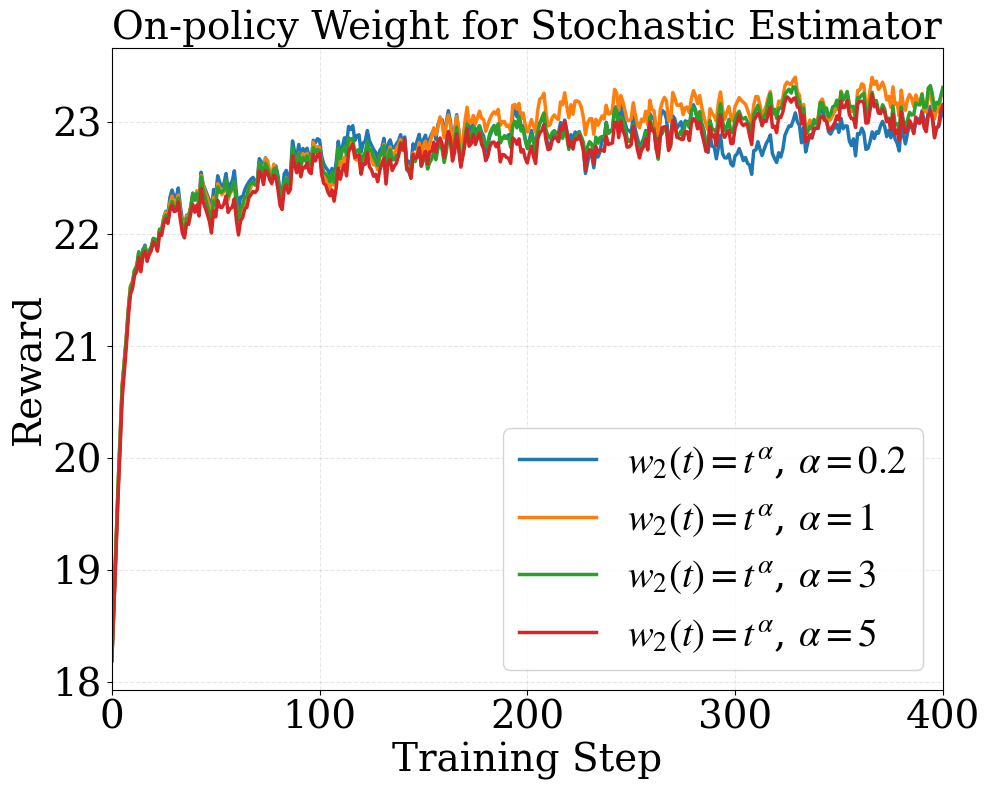}
            \caption{}
            \label{fig:1}
        \end{subfigure}%
        \hfill
        \begin{subfigure}[b]{0.32\textwidth}
            \centering
            \includegraphics[width=\linewidth]{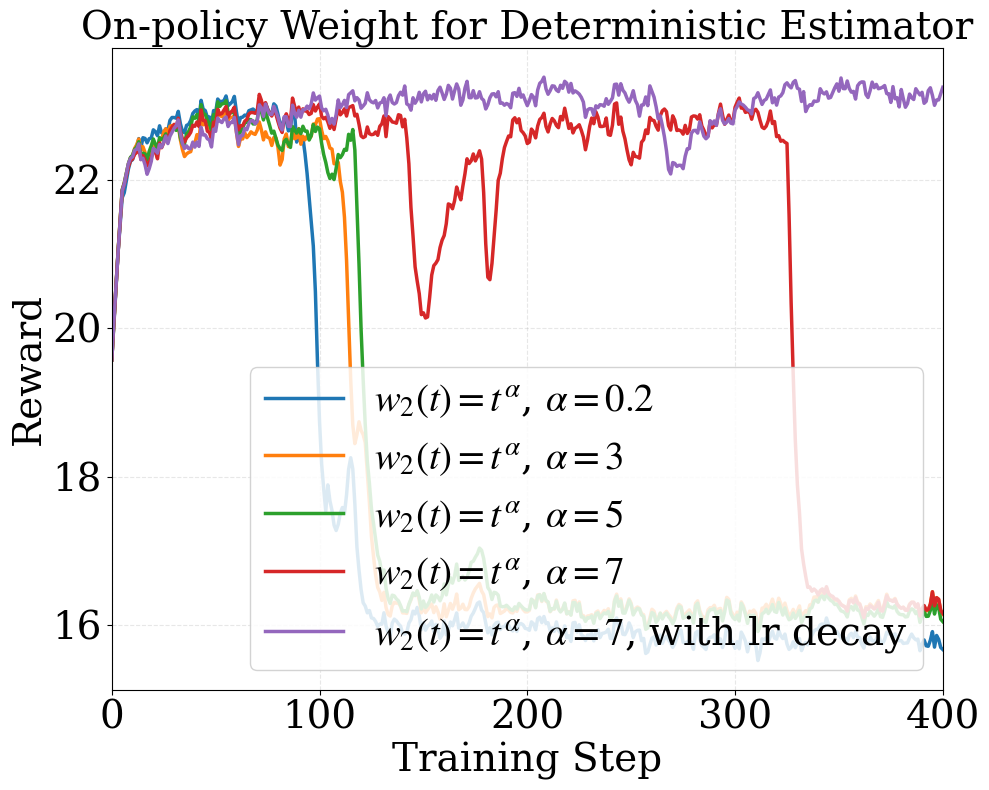}
            \caption{}
            \label{fig:2}
        \end{subfigure}%
        \hfill
        \begin{subfigure}[b]{0.32\textwidth}
            \centering
            \includegraphics[width=\linewidth]{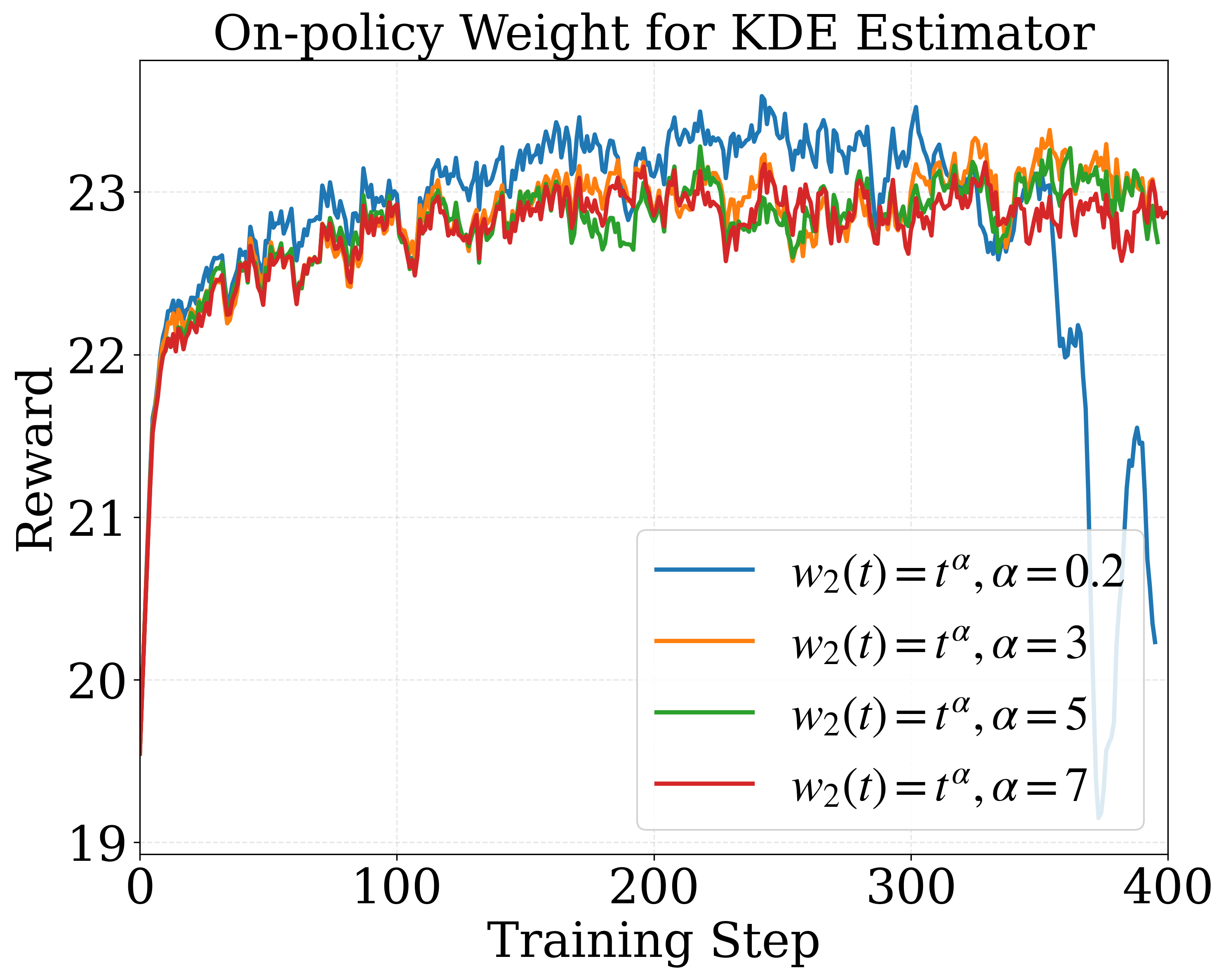}
            \caption{}
            \label{fig:3}
        \end{subfigure}
        
        \vspace{0.2em}
        
        \begin{subfigure}[b]{0.32\textwidth}
            \centering
            \includegraphics[width=\linewidth]{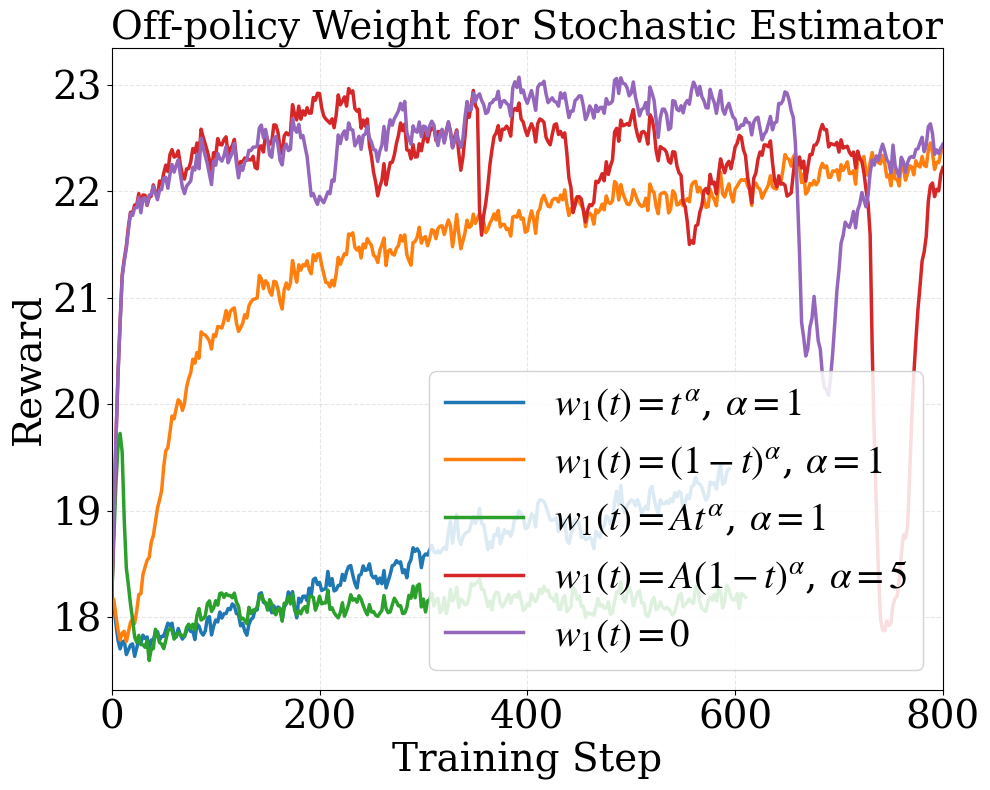}
            \caption{}
            \label{fig:4}
        \end{subfigure}%
        \hfill
        \begin{subfigure}[b]{0.32\textwidth}
            \centering
            \includegraphics[width=\linewidth]{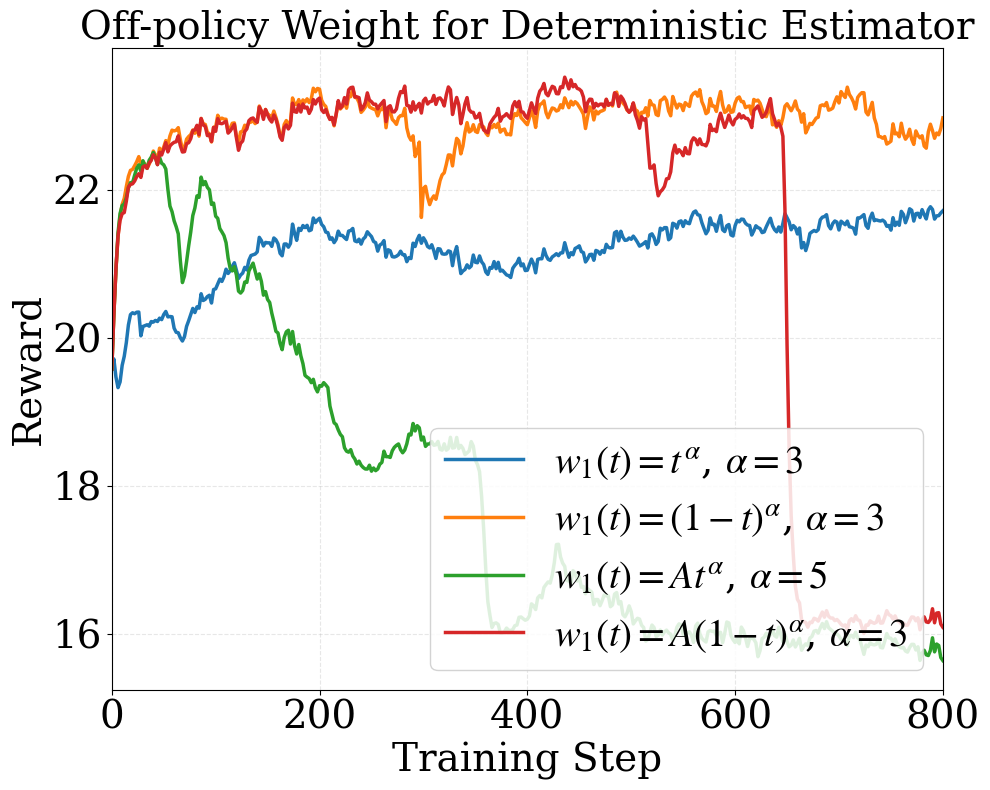}
            \caption{}
            \label{fig:5}
        \end{subfigure}%
        \hfill
        \begin{subfigure}[b]{0.32\textwidth}
            \centering
            \includegraphics[width=\linewidth]{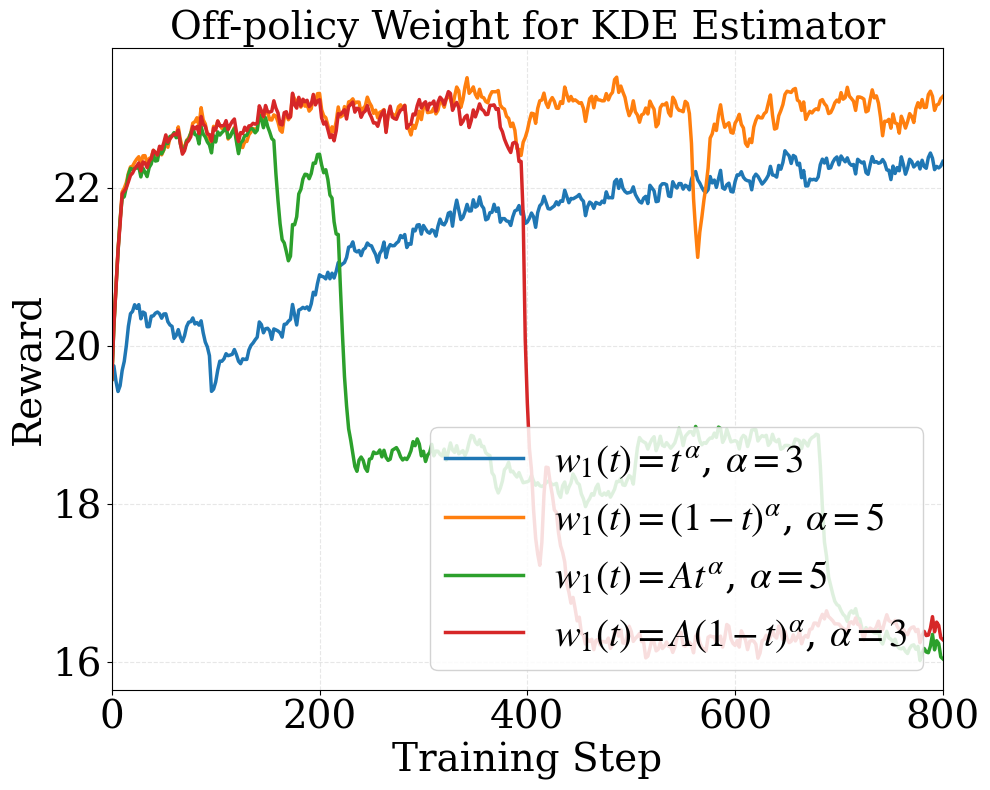}
            \caption{}
            \label{fig:6}
        \end{subfigure}
        
    \end{minipage}
    \caption{On-policy weight (row 1) and off-policy (row 2) weight ablations for the stochastic, deterministic, and our KDE estimator. The results for the same estimator are in the same column.}
    \label{fig:weight-ablation}
    \vspace{-8pt}
\end{figure}

We study three types of estimators: stochastic one-sample estimator $\widehat{\nabla V}_t^{\,\mathrm{sto}}$, deterministic one-sample estimator $\widehat{\nabla V}_t^{\,\mathrm{det}}$ and our proposed KDE estimator $\widehat{\nabla V}_t^{\,\mathrm{KDE}}$. As introduced in \secref{method-principled}, we investigate the weight: \(w_2(t)=t^\alpha\). We find that different sampler noise schedules affect the behavior (\appxref{add-results}), and we choose the best sampler noise schedule for weight ablation. The results are shown in \figref{weight-ablation} (first row). By adjusting the weights parameter $\alpha$ (see \figref{weight-visual} for visualization), we find that a different estimator exhibits a different behavior for weight preference. The stochastic estimators are not so sensitive w.r.t. the weight adjustment, $w_2=t$ performs slightly better (\figref{weight-ablation} (a)). Weight with larger $\alpha$ helps stabilize the training of the deterministic estimator, but can not fully resolve it without learning rate decay (\figref{weight-ablation} (b)). Our KDE estimator exhibits better training stability under weight with larger $\alpha$ without any extra design (\figref{weight-ablation} (c)). Larger $\alpha$ improves stability also implies that more training effort should be allocated to high-noise regions, which affects the sampling process at early stages.

\vspace{-2pt}
\paragraph{Off-policy weighting $w_1$.}
Based on the optimal on-policy weight, we adjust the off-policy weight $w_1$. As introduced in \secref{method-principled}, we adopt the following weight design: $w_1 = \{0, t^\alpha, A t^\alpha, (1-t)^\alpha, A(1-t)^\alpha\}$. For stochastic estimator, we find that $w_1=0$ works well (design of GRPO-Guard in \tabref{master}); the weight in $\{t^\alpha, (1-t)^\alpha\}$ overemphasize the regularization term and leads to slow convergence, while weights $\{At^\alpha, A(1-t)^\alpha\}$ leads to training instability (\figref{weight-ablation} (d)). Similarly, for the deterministic estimator and our KDE estimator, $t^\alpha$ degrades the training convergence speed and $\{At^\alpha, A(1-t)^\alpha\}$ causes training instability  (\figref{weight-ablation} (e,f)). We find that $w_1(t)=(1-t)^\alpha$ balances the weight between the off-policy regularization and the policy gradient in~\eqnref{unified-loss}, which implies that regularization in low-noise regions helps to improve the training stability.

\vspace{-3pt}
\subsection{Comparison with Prior Works}\label{sec:exp-smallscale}
\vspace{-3pt}
\begin{figure}[htbp]
    \centering

    \begin{minipage}{0.98\textwidth}
        \centering
    
        \begin{subfigure}[b]{0.24\textwidth}
            \centering
            \includegraphics[width=\linewidth]{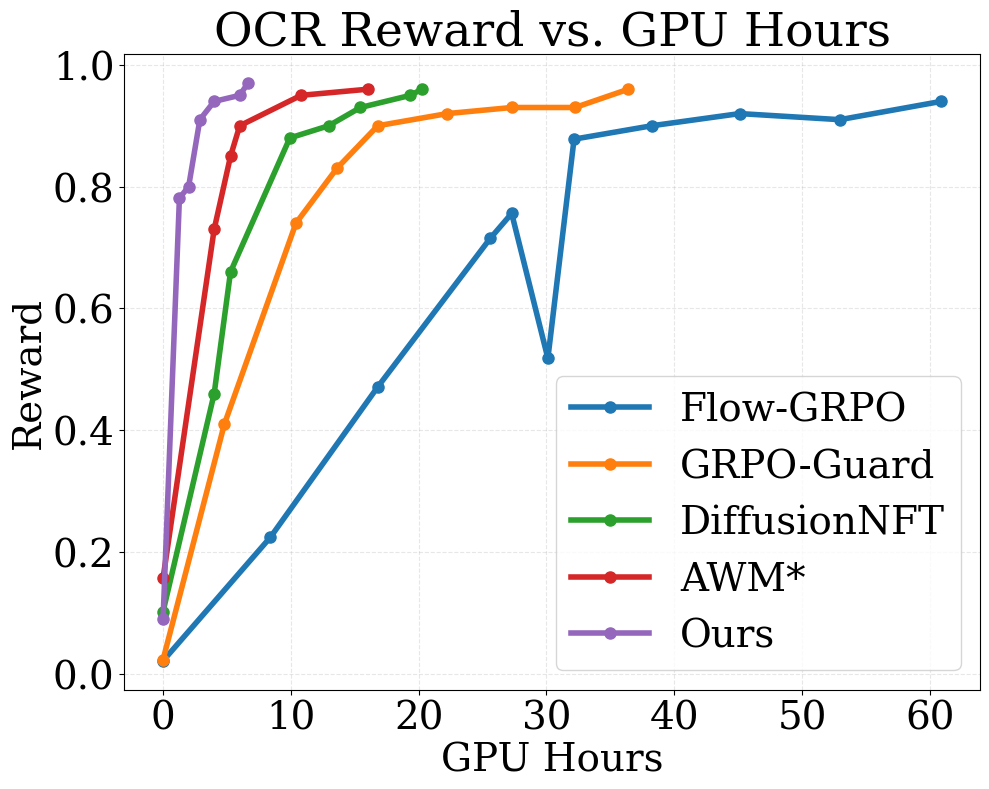}
            \caption{}
            \label{fig:1}
        \end{subfigure}%
        \hfill
        \begin{subfigure}[b]{0.24\textwidth}
            \centering
            \includegraphics[width=\linewidth]{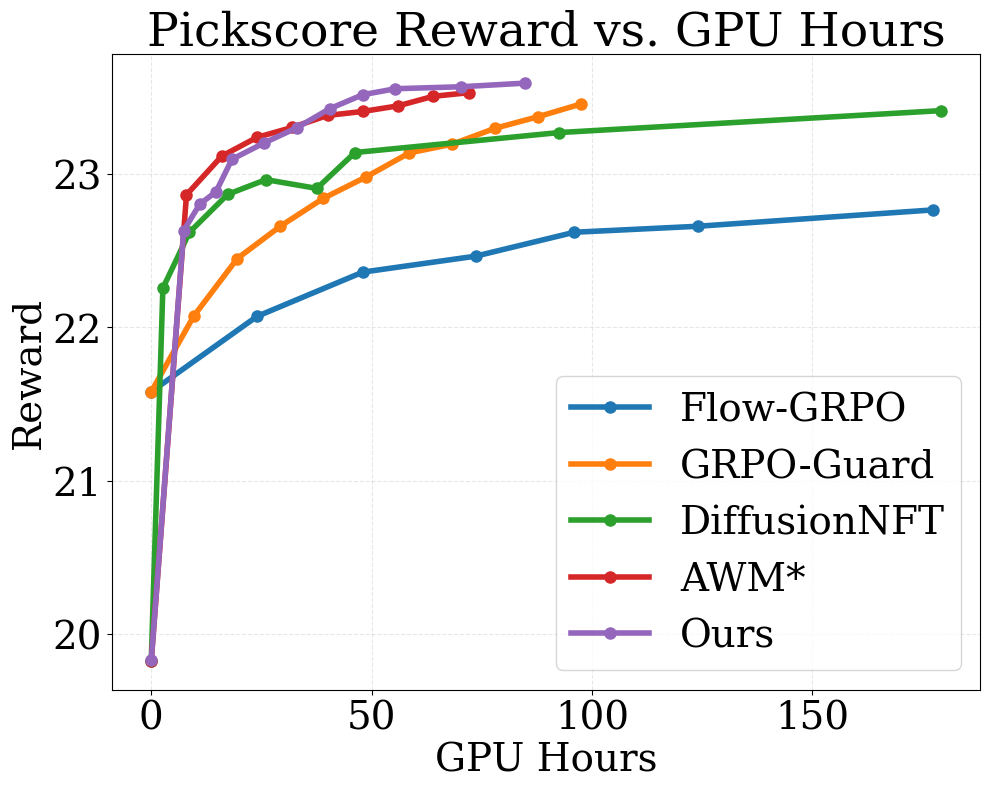}
            \caption{}
            \label{fig:2}
        \end{subfigure}%
        \hfill
        \begin{subfigure}[b]{0.24\textwidth}
            \centering
            \includegraphics[width=\linewidth]{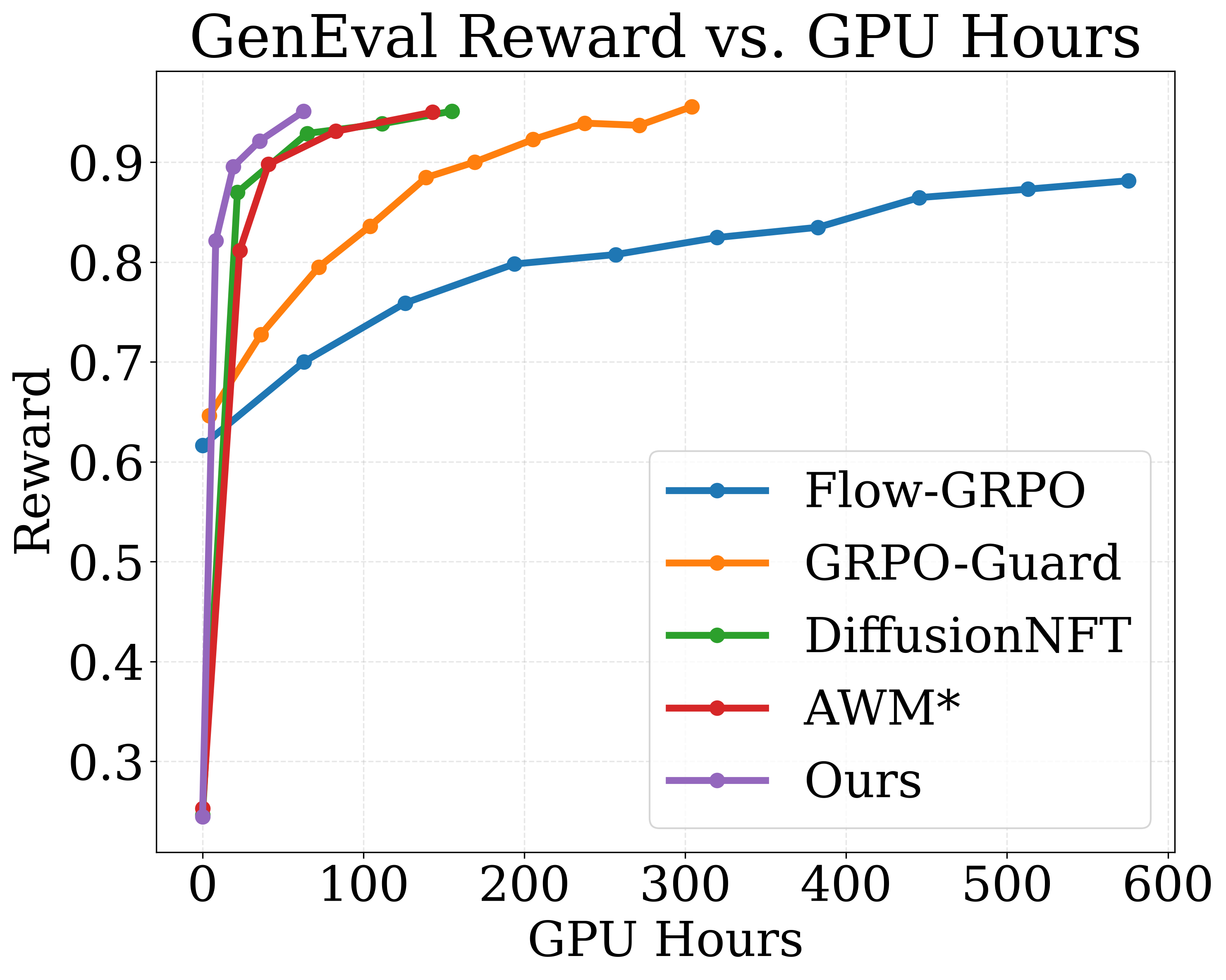}
            \caption{}
            \label{fig:3}
        \end{subfigure}    
        \hfill
        \begin{subfigure}[b]{0.24\textwidth}
            \centering
            \includegraphics[width=\linewidth]{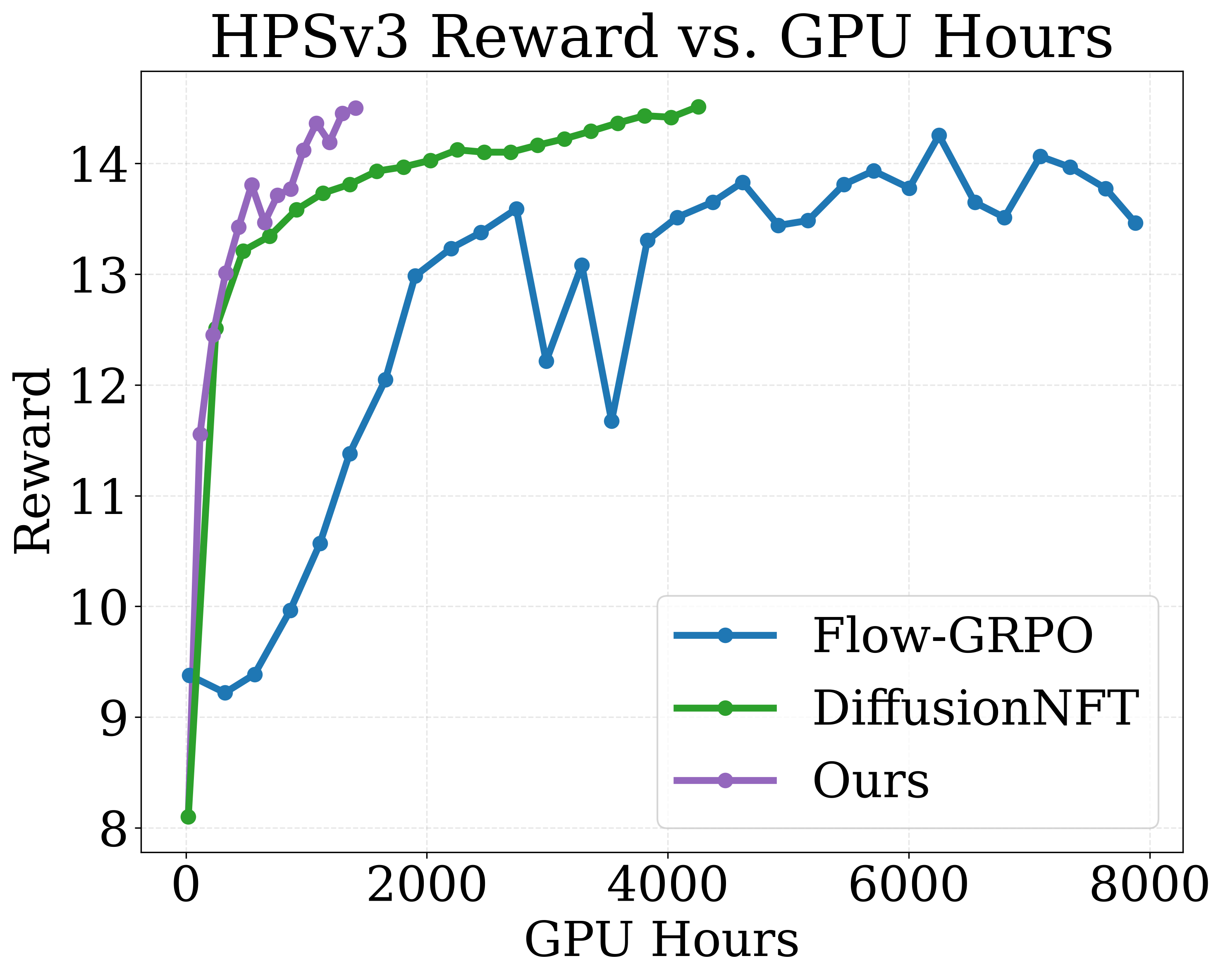}
            \caption{}
            \label{fig:3}
        \end{subfigure}        
    \end{minipage}
    \caption{\textbf{(a,b,c).} Training time comparison between our method and several baselines. \textbf{(d).} Comparison between Flow-GRPO, DiffusionNFT, and our method on Qwen-Image, HPSv3 reward. * means using the official implementation with different training configs compared to other baselines.}
    \label{fig:exp-hour}
    \vspace{-8pt}
\end{figure}
Based on the estimator--optimal weight mapping constructed in \secref{mapping}, we perform a training-time comparison between our best setting and previous strong baselines across several rewards. We use our KDE estimator with its optimal weighting $w_1=t^\alpha, w_2=(1-t)^\alpha$. The results are summarized in \figref{exp-hour} (a,b,c). On OCR reward, our method significantly accelerates training convergence compared to previous SOTA methods: 2$\times$ faster than AWM, $3\times$ faster than DiffusionNFT. On Pickscore and GenEval rewards, our method achieves comparable performance with AWM and DiffusionNFT without using other designs (e.g., specific EMA-KL loss in training, elaborate sampling design, training time-step selection), and outperforms all other baselines. We also conduct experiments on the large-scale model Qwen-Image~\citep{wu2025qwen} with HPSv3 reward~\citep{ma2025hpsv3}, and the result in \figref{exp-hour}(d) shows that our method achieves $4\times$ faster convergence speed than DiffusionNFT under the same configurations. 

\begin{wrapfigure}{r}{0.5\textwidth}
  \vspace{-8pt}
  \centering
  \includegraphics[width=0.95\linewidth]{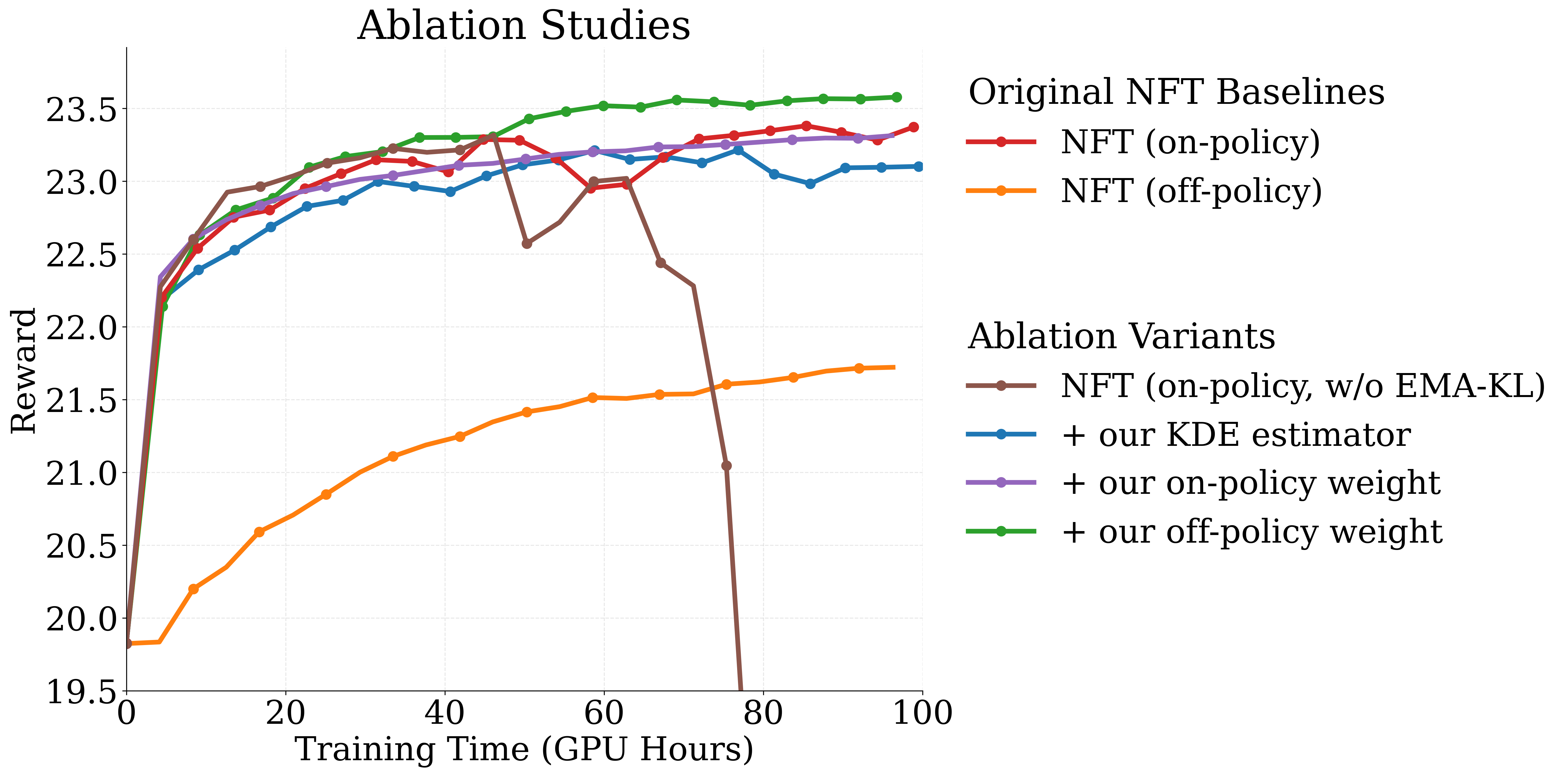}
  \caption{Ablation studies on the components of our final recipe.}
  \vspace{-16pt}
  \label{fig:ablation}
\end{wrapfigure}

\paragraph{Ablation study.}
To isolate the source of the gain, we start the ablation study from DiffusionNFT without its EMA-KL stabilization. The results are summarized in \figref{ablation}. The without EMA-KL baseline is unstable and eventually fails after 60 GPU hours. Replacing only its deterministic one-sample estimator with our KDE estimator improves its stability but hurts the performance; variance reduction alone is therefore insufficient. Using our on-policy weight helps stabilize and improve training performance. Finally, combining KDE with both our on-policy and off-policy weights reaches $23.52$ after 60 hours. For comparison, the original DiffusionNFT (on-policy, with EMA-KL) reaches $23.382$ after 160 hours, so our recipe is roughly $3\times$ faster.

\vspace{-2pt}
\section{Conclusion}\label{sec:conclusion}
\vspace{-2pt}
In this work, we develop a unified path-space framework that places previous diffusion-RL algorithms on a common theoretical foundation. Starting from the original RL objective, we derive a general training objective based on the variance-reduced form of the original objective. Previous reverse-trajectory methods and forward-matching methods can all be viewed as instances of this general objective. The derivation also yields a unified design space, including the value-gradient estimation, weight functions, and sampling choices. We propose a multi-sample KDE value gradient estimator and find a mapping between different estimators and their optimal weight design. Experiments on SD3.5-M and Qwen-Image models validate our theoretical explanation and show the empirical improvement of our result recipe.



\bibliography{main}
\bibliographystyle{plainnat}


\newpage
\appendix
\section{Organization of Appendix}
\label{app:organisation}

The organization of the appendix is as follows. \appxref{backgrounds} provides background in stochastic calculus. \appxref{proofs} gives all the proofs of the theorems in the main text. \appxref{table-proof}  gives the  derivations in \tabref{master}. \appxref{related} discusses the related works. \appxref{experimental-details} provides experimental details for \secref{exp} and shows some additional results. Finally, qualitative examples are shown in \appxref{qualitative}.

\section{Backgrounds for Stochastic Analysis}\label{appx:backgrounds}
In this section, we recall some basic results from stochastic calculus. Please see \citet{oksendal2013stochastic} or \citet{karatzas2014brownian} for details. Assume the probability space is $(\Omega,\clF,\bbP)$. A $d$-dimensional stochastic process is a parametrized collection of random vectors $\{\bfxx_t\}_{t\in T}$, where $T$ is the time interval. For each $\omega\in\Omega$ we can consider the function $t\to\bfxx_t(\omega), ~t\in T$, which is called the path of $\bfxx_t$. Assume $\bfxx$ is jointly measurable in $(t,\omega)$, then the stochastic process is a probability measure $\bbP$ on the measurable space $((\bbR^d)^T, \clB)$, where $(\bbR^d)^T$ contains all functions from $T$ to $\bbR^d$. In this paper, we only consider the stochastic process with continuous sample paths; then the stochastic process is a probability measure on the measurable space $(C(T, \bbR^d), \clB)$, where  $C(T, \bbR^d)$ contains all continuous functions from $T$ to $\bbR^d$. In this paper, we focus on the diffusion process, which can be defined by its generator:

\begin{definition}\label{def:generator}
The \emph{infinitesimal generator} of a stochastic process $(\bfxx_t)_t$ for a function $\phi(\bfxx)$ is
    \begin{equation}
        \clL_t \phi(\bfxx) = \lim_{s\to0^+}\frac{\bbE [\phi(\bfxx_{t+s})|\bfxx_{t}=\bfxx]-\phi(\bfxx)}{s},
    \end{equation}
     where $\phi$ is a smooth function. For an Itô process defined as the solution to the SDE $\dd \bfxx_t=\bfff(\bfxx_t,t) \ud t + g_t \ud \bfww_t$, the generator is
     \begin{equation}
         \clL_t = \sum_{i=1}^d \bfff^i(\bfxx,t) \partial_i + \frac{g_t^2}{2} \sum_{i=1}^d \partial_i^2, \quad\text{where}\quad \partial_i=\frac{\partial}{\partial x_i}.
     \end{equation}
\end{definition}

In stochastic calculus, It\^{o}'s lemma is a useful tool for calculating derivatives as the chain rule in calculus.
\begin{lemma}[It\^{o}'s lemma; \citep{oksendal2013stochastic}, Thm.~4.2.1]
    Assume $\bfxx_t$ is the solution of SDE $\dd \bfxx_t=\bfff(\bfxx_t,t)\dd t+g_t\dd \bfww_t$. Then for any function  $f\in C^2(\bbR^d\times \bbR)$, we have
    \begin{align}
        \dd f(\bfxx_t,t)=\left(\partial_tf(\bfxx_t,t)+\nabla_\bfxx f(\bfxx_t,t)\cdot \bfff(\bfxx_t,t)+\frac{g_t^2}{2}\Delta f(\bfxx_t,t)\right)\dd t + g_t\nabla_\bfxx f(\bfxx_t,t)\cdot\dd \bfww_t.
    \end{align}
\end{lemma}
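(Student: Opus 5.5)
The statement is It\^o's lemma, so I will follow the standard route: second-order Taylor expansion combined with the quadratic-variation rules for Brownian motion. The plan is to work on a fixed interval $[t_0,t]$ and take a partition $t_0=s_0<s_1<\dots<s_n=t$ with mesh going to zero. Write the increment as a telescoping sum,
\begin{align*}
f(\bfxx_t,t)-f(\bfxx_{t_0},t_0)=\sum_{k}\big(f(\bfxx_{s_{k+1}},s_{k+1})-f(\bfxx_{s_k},s_k)\big).
\end{align*}
Then expand each summand by Taylor's theorem to second order in both space and time, around $(\bfxx_{s_k},s_k)$, with increments $\Delta s_k$ and $\Delta\bfxx_k=\int_{s_k}^{s_{k+1}}\bfff\,\dd s+g\,\Delta\bfww_k$.

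First I will reduce to a nice case by localization. Stop the process at $\tau_N=\inf\{s:|\bfxx_s|\ge N\}$. This makes $f$, its derivatives up to order two, $\bfff$ and $g$ bounded on the relevant set. The identity then transfers back by letting $N\to\infty$, since the paths are continuous and $\tau_N\to\infty$ almost surely.

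Next, I will identify the limits term by term:
\begin{itemize}
\item The first-order terms $\partial_tf\,\Delta s_k+\nabla f\cdot\Delta\bfxx_k$ converge to $\int(\partial_tf+\nabla f\cdot\bfff)\,\dd s+\int g\nabla f\cdot \dd\bfww_s$. The $\dd s$ part converges as a Riemann sum. The stochastic part converges in $L^2$ to the It\^o integral, because the integrand is evaluated at the left endpoint and is adapted.
\item In the second-order terms, the pieces involving $(\Delta s_k)^2$ and $\Delta s_k\,\Delta\bfww_k$ vanish in the limit. Their sums are $O(\text{mesh})$ and $O(\text{mesh}^{1/2})$ respectively in $L^2$.
\item The remaining second-order piece is $\tfrac12\sum_{i,j}\partial_{ij}f\,g^2\,\Delta w^i_k\Delta w^j_k$.
\end{itemize}

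The main obstacle is this last piece, i.e.\ showing
\begin{align*}
\textstyle\sum_k \partial_{ij}f(\bfxx_{s_k},s_k)\,g_{s_k}^2\,\Delta w^i_k\Delta w^j_k\to\delta_{ij}\int \partial_{ii}f\,g^2\,\dd s
\end{align*}
in $L^2$. I will handle it by centering: replace $\Delta w^i_k\Delta w^j_k$ by $\Delta w^i_k\Delta w^j_k-\delta_{ij}\Delta s_k$. The centered terms are martingale differences, because the coefficient is $\clF_{s_k}$-measurable and bounded. Their cross terms are therefore orthogonal, and each has second moment $O(\Delta s_k^2)$, so the $L^2$ norm of the sum is $O(\text{mesh})\to0$. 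What remains is a Riemann sum producing $\tfrac{g^2}{2}\Delta f$.

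Finally, the Taylor remainder is bounded by $o(|\Delta\bfxx_k|^2+\Delta s_k)$ with a uniform modulus, using uniform continuity of the second derivatives on the compact set. Since $\sum|\Delta\bfxx_k|^2$ stays bounded in probability, the remainder vanishes. Passing to an almost surely convergent subsequence then gives the stated differential identity.
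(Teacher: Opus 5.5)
The paper does not prove this lemma itself; it only cites Thm.~4.2.1 of {\O}ksendal. Your sketch (Taylor expansion, reduction to bounded second derivatives, and the centered martingale-difference estimate for $\sum_k a_k(\Delta w^i_k\Delta w^j_k-\delta_{ij}\Delta s_k)$) is essentially that textbook argument, so it is correct. Two small points. First, the squared $L^2$ norm of the centered sum is $O(\mathrm{mesh})$, so its $L^2$ norm is $O(\mathrm{mesh}^{1/2})$ rather than $O(\mathrm{mesh})$; it still vanishes. Second, replacing $\int_{s_k}^{s_{k+1}} g_s\,dw_s$ by $g_{s_k}\Delta w_k$, and the final Riemann-sum step, tacitly assume $g$ is continuous, which {\O}ksendal sidesteps by first reducing to elementary integrands.
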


For the calculation of the value function, the following Kolmogorov's backward equation is useful.
\begin{proposition}[Kolmogorov's Backward Equation; \citep{oksendal2013stochastic}, Thm.~8.1.1]\label{prop:backward-eqn}
    Let $V_t(\bfxx)=\bbE[R(\bfxx_T,c)|\bfxx_t=\bfxx]$ be the value function. Then $V$ is the solution of the following PDE
    \begin{align}
        \frac{\partial V}{\partial t}+\clA_tV=0,\quad V_T(\bfxx)=R(\bfxx,c),
    \end{align}
    where $\bfxx_t$ satisfies the SDE $\dd \bfxx_t=\bfff(\bfxx_t,t)\dd t+g_t\dd \bfww_t$ and $\clA_t$ is its generator:
    \begin{align}
        \clA f(\bfxx,t)=\bfff(\bfxx,t)\cdot\nabla f(\bfxx,t)+\frac{g_t^2}{2}\Delta f(\bfxx,t), \quad\forall f\in C^2(\bbR^d\times\bbR).
    \end{align} 
\end{proposition}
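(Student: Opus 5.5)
The plan is to obtain the PDE from the martingale property of the conditional expectation together with It\^o's lemma. This is the standard route of \citet{oksendal2013stochastic}. I will work under enough regularity that $V\in C^{1,2}(\bbR^d\times[0,T))$: smooth, Lipschitz drift $\bfff$, continuous $g_t>0$, and $R(\cdot,c)$ smooth with polynomially bounded derivatives. Under these assumptions differentiation under the expectation is justified.

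\textbf{Step 1 (martingale).} Let $\clF_t$ be the natural filtration. By the Markov property of the strong solution of $\dd\bfxx_t=\bfff\,\dd t+g_t\,\dd\bfww_t$,
\[
V_t(\bfxx_t)=\bbE[R(\bfxx_T,c)\mid\bfxx_t]=\bbE[R(\bfxx_T,c)\mid\clF_t].
\]
Hence $M_t:=V_t(\bfxx_t)$ is a martingale by the tower property. The terminal condition $V_T(\bfxx)=R(\bfxx,c)$ holds directly from the definition.

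\textbf{Step 2 (It\^o decomposition).} Apply It\^o's lemma to $f(\bfxx,t)=V_t(\bfxx)$:
\[
\dd M_t=\big(\partial_tV+\bfff\cdot\nabla V+\tfrac{g_t^2}{2}\Delta V\big)(\bfxx_t,t)\,\dd t+g_t\nabla V(\bfxx_t,t)\cdot\dd\bfww_t .
\]
The stochastic integral is a local martingale, so the process $\int_0^t(\partial_sV+\clA_sV)(\bfxx_s,s)\,\dd s$ is a continuous local martingale of finite variation. Such a process is identically zero. Therefore $(\partial_tV+\clA_tV)(\bfxx_t,t)=0$ for a.e.\ $t$, almost surely.

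\textbf{Step 3 (pointwise PDE).} Because $g_t>0$, the law of $\bfxx_t$ started from any point has a strictly positive density on $\bbR^d$. So the continuous function $h:=\partial_tV+\clA_tV$ vanishes on a set of full measure, and by continuity it vanishes everywhere.

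A cleaner alternative uses the same ingredients in generator form and avoids the support argument. Start the process at $\bfxx_t=\bfxx$. Then
\[
V_t(\bfxx)=\bbE[V_{t+s}(\bfxx_{t+s})\mid\bfxx_t=\bfxx].
\]
Subtract $V_{t+s}(\bfxx)$ from both sides, divide by $s$ and let $s\to0^+$. By \defref{generator} this gives $-\partial_tV=\clA_tV$ directly.

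\textbf{Main obstacle.} The main difficulty is regularity: knowing a priori that $V\in C^{1,2}$, so that It\^o's lemma and the generator limit (uniformly in $t$) apply. I would handle this by citing smooth dependence of the stochastic flow $\bfxx\mapsto\bfxx^{t,\bfxx}_T$ on initial data, which lets one differentiate under the expectation. If $R$ is only bounded measurable, as for a black-box reward, I would instead use uniform parabolicity ($g_t>0$), which gives a smooth transition density, and differentiate that density.
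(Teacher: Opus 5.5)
Your proof is correct under the regularity you impose, but it is not the argument behind the paper's citation. The paper gives no proof of its own and points to \citet{oksendal2013stochastic}, Thm.~8.1.1. That theorem is stated for a time-homogeneous diffusion in initial-value form ($\partial_t u=Au$, $u(0,\cdot)=f$, $f\in C_0^2$) and is proved at the semigroup level. Dynkin's formula gives differentiability of $u$ in $t$. The Markov property then rewrites $\bigl(u(t+r,x)-u(t,x)\bigr)/r$ as the generator difference quotient of the \emph{fixed} function $u(t,\cdot)$. So the existence of the generator limit is a conclusion rather than an assumption. The price is that $A$ is only the abstract generator, which coincides with $\bfff\cdot\nabla+\tfrac{g^2}{2}\Delta$ once $u(t,\cdot)\in C^2$ is known.

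Your main route instead assumes $V\in C^{1,2}$ up front and reads off the PDE from the fact that a continuous local martingale of finite variation is constant. This gives a direct treatment of the time-inhomogeneous, terminal-value form actually stated here. Thm.~8.1.1 covers that form only after a space-time augmentation, which the paper leaves implicit. Your computation is also exactly the one the paper reuses in \eqnref{app-dV}, where smoothness of $V$ is assumed anyway.

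Two simplifications are available. First, Step~3 does not need the positive-density argument. Start the process from $(t_0,\bfxx)$; then $s\mapsto h(\bfxx_s,s)$ is a.s.\ continuous with zero integral over every $[t_0,t]$, so it vanishes at $s=t_0$, where $\bfxx_{t_0}=\bfxx$ is deterministic. Since $(t_0,\bfxx)$ is arbitrary, $h\equiv 0$.

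Second, in your generator-form alternative the function $V_{t+s}$ moves with $s$, so Definition~\ref{def:generator} does not apply verbatim. Instead, start the process at time $t-s$ and write $V_{t-s}(\bfxx)=\bbE[V_t(\bfxx_t)\mid\bfxx_{t-s}=\bfxx]$. Now the generator acts on the fixed function $V_t$; this is the inhomogeneous analogue of the textbook trick. It gives the left derivative $\partial_t^-V_t=-\clA_tV_t$ without assuming differentiability in $t$ beforehand.
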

For the calculation of the importance sampling ratio between two path measures of diffusion processes, Girsanov's theorem is useful:
\begin{theorem}[Girsanov; \citep{oksendal2013stochastic}, Thm.~8.6.6]
\label{thm:girsanov}
Let $\bfxx_t,\bfyy_t$ solve the coupled SDEs on $\bbR^d$
\begin{align}
    \dd\bfxx_t = \bfuu(\bfxx_t,t)\dd t + g_t\dd\bfww_t,\quad
    \dd\bfyy_t = \bfvv(\bfyy_t,t)\dd t + g_t\dd\bfww_t,\nonumber
\end{align}
with the same initial condition and diffusion coefficient, and let $\bbP,\bbQ$ be their path measures. Assume $\bfuu,\bfvv$ are sufficiently smooth and satisfy Novikov's condition
$\bbE[\exp(\tfrac{1}{2}\int_0^T\lrVert(\bfuu-\bfvv)/g_t\rVert^2\dd t)]<\infty$. Then
\begin{align}
    \frac{\dd\bbP}{\dd\bbQ}=\exp\!\left(\int_0^T\tfrac{1}{g_t}(\bfuu-\bfvv)(\bfyy_t,t)\cdot\dd\bfww_t-\int_0^T\tfrac{1}{2g_t^2}\lrVert{(\bfuu-\bfvv)(\bfyy_t,t)}^2\dd t\right).\nonumber
\end{align}
\end{theorem}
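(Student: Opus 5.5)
The plan is the classical change-of-measure argument: build an exponential martingale from the drift difference, tilt the law of $\bfyy$ by it, and show that under the tilted measure $\bfyy$ becomes a weak solution of the $\bfuu$-SDE. Throughout, the Brownian motion $\bfww_t$ is the one driving $\bfyy$ under the underlying probability $\bbP_0$, and $\bbQ$ is the law of $\bfyy$ on $C([0,T],\bbR^d)$.

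\textbf{Step 1 (martingale).} Set $\bfbb_t:=\tfrac{1}{g_t}(\bfuu-\bfvv)(\bfyy_t,t)$ and
\begin{align}
Z_t:=\exp\!\left(\int_0^t\bfbb_s\cdot\dd\bfww_s-\tfrac12\int_0^t\lrVert{\bfbb_s}^2\dd s\right).\nonumber
\end{align}
By It\^o's lemma applied to $\exp(\cdot)$, $\dd Z_t=Z_t\,\bfbb_t\cdot\dd\bfww_t$, so $Z$ is a nonnegative local martingale and hence a supermartingale. This alone gives only $\bbE[Z_T]\le 1$. Novikov's condition $\bbE[\exp(\tfrac12\int_0^T\lrVert{\bfbb_t}^2\dd t)]<\infty$ upgrades this to $\bbE[Z_T]=1$, so $Z$ is a true martingale.

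\textbf{Step 2 (new Brownian motion).} Define a new probability $\widetilde{\bbP}$ by $\dd\widetilde{\bbP}=Z_T\,\dd\bbP_0$, and put
\begin{align}
\widetilde{\bfww}_t:=\bfww_t-\int_0^t\bfbb_s\,\dd s.\nonumber
\end{align}
I will show $\widetilde{\bfww}$ is a $\widetilde{\bbP}$-Brownian motion via L\'evy's characterization.
\begin{itemize}
\item It is continuous, and its quadratic covariation is $t\bfI$, which is unchanged by an equivalent change of measure.
\item It is a $\widetilde{\bbP}$-local martingale. By Bayes' rule, this is equivalent to $Z_t\widetilde{\bfww}_t$ being a $\bbP_0$-local martingale. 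The It\^o product rule gives
\begin{align}
\dd(Z_t\widetilde{\bfww}_t)=Z_t\,\dd\bfww_t-Z_t\bfbb_t\,\dd t+\widetilde{\bfww}_t\,\dd Z_t+Z_t\bfbb_t\,\dd t,\nonumber
\end{align}
where the last term is the covariation $\dd\langle Z,\bfww\rangle_t$. The two drift terms cancel, leaving only $\dd\bfww_t$ and $\dd Z_t$ terms, which are local-martingale increments.
\end{itemize}

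\textbf{Step 3 (identify the law).} Substituting $\dd\bfww_t=\dd\widetilde{\bfww}_t+\bfbb_t\,\dd t$ into the equation for $\bfyy$ gives
\begin{align}
\dd\bfyy_t=\bfvv\,\dd t+(\bfuu-\bfvv)\,\dd t+g_t\,\dd\widetilde{\bfww}_t=\bfuu(\bfyy_t,t)\,\dd t+g_t\,\dd\widetilde{\bfww}_t,\nonumber
\end{align}
with the same initial condition. So under $\widetilde{\bbP}$, $\bfyy$ is a weak solution of the $\bfuu$-SDE. The smoothness assumption on $\bfuu$ (locally Lipschitz, with linear growth) gives pathwise uniqueness, hence uniqueness in law by Yamada--Watanabe. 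Therefore the law of $\bfyy$ under $\widetilde{\bbP}$ is $\bbP$.

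\textbf{Step 4 (path-space density).} To get $\dd\bbP/\dd\bbQ$, I need $Z_T$ to be a functional of the path of $\bfyy$. Since $g_t>0$, we have $\dd\bfww_t=g_t^{-1}(\dd\bfyy_t-\bfvv\,\dd t)$. Hence $Z_T=F(\bfyy_{[0,T]})$ for a measurable $F$, defined via the stochastic integral against $\bfyy$. Then for every bounded measurable $\Phi$,
\begin{align}
\bbE_{\bbP}[\Phi]=\bbE_{\bbP_0}[Z_T\,\Phi(\bfyy)]=\bbE_{\bbQ}[F\,\Phi],\nonumber
\end{align}
which is exactly the claimed formula.

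\textbf{Main obstacle.} I expect the hardest step to be Step 1, showing that $Z$ is a true martingale rather than merely a local one; this is precisely where Novikov's condition is spent. If a self-contained argument is wanted, I would first try the standard route: show $\bbE[Z_T^{(a)}]=1$ for the scaled exponentials with parameter $a\in(0,1)$ by uniform integrability, using H\"older together with the Novikov bound, and then let $a\uparrow1$. A secondary technical point is that the It\^o integral in Step 4 is defined only up to null sets. This is handled by working with a progressively measurable version on canonical path space.
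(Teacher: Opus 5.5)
Your proposal is correct and matches the standard proof the paper relies on: the paper gives no argument of its own, only a citation to \O{}ksendal's Thm.~8.6.6, which is proved exactly as you do, by making the exponential a true martingale via Novikov, showing the shifted process is a Brownian motion under the tilted measure, and using weak uniqueness of the $\bfuu$-SDE to identify the tilted law of $\bfyy$ with $\bbP$. The textbook statement leaves implicit that $Z_T$ is a functional of the $\bfyy$-path; your Step~4, which uses $g_t>0$ to recover $\bfww$ from $\bfyy$ so that $Z_T$ becomes a genuine path density, handles this correctly.
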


\section{Proofs}
\label{appx:proofs}

\subsection{Setup: path-space importance sampling}
\label{app:setup}
Let $\bfvv_\base,\bfvv_\theta,\bfvv_\refm:\bbR^d\times[0,1]\to\bbR^d$ be three smooth velocity fields, and write $\bbQ,\bbP_\theta,\bbP_\refm$ for the corresponding path measures on $C([0,1],\bbR^d)$:
\begin{align}
    &\bbP_\theta:~\dd\bfxx_t = \left[\bfvv_\theta(\bfxx_t,t) + \frac{\eta_t}{1-t}\big(\bfxx_t+(1-t)\bfvv_\theta(\bfxx_t,t)\big)\right]\dd t + \sqrt{\frac{2t\eta_t}{1-t}}\dd\bfww_t,\\
    &\bbQ:~\dd\bfxx_t^\base = \left[\bfvv_\base(\bfxx_t^\base,t) + \frac{\eta_t}{1-t}\big(\bfxx_t^\base+(1-t)\bfvv_\base(\bfxx_t^\base,t)\big)\right]\dd t + \sqrt{\frac{2t\eta_t}{1-t}}\dd\bfww_t,\\
    &\bbP_\refm:~\dd\bfxx_t^\refm = \left[\bfvv_\refm(\bfxx_t^\refm,t) + \frac{\eta_t}{1-t}\big(\bfxx_t^\refm+(1-t)\bfvv_\refm(\bfxx_t^\refm,t)\big)\right]\dd t + \sqrt{\frac{2t\eta_t}{1-t}}\dd\bfww_t,
\end{align}
where $\bfxx_1, \bfxx_1^\base, \bfxx_1^\refm\sim\clN(\bfzro,\bfI)$.

Then the original objective \eqnref{raw-obj} becomes $\max \bbE_c\clL(\theta,c)$, where 
\begin{align}
    \clL(\theta,c)=\bbE_{\bbP_\theta}[R(\bfxx_0,c)]-\beta\KL(\bbP_\theta\Vert\bbP_\refm).
\end{align}
If we want to use the importance sampling technique, the objective becomes
\begin{align}
     \clL(\theta,c)=\bbE_{\bbQ}\left[\frac{\dd \bbP_\theta}{\dd \bbQ}R(\bfxx_0,c)\right]-\beta\bbE_{\bbQ}\left[\frac{\dd \bbP_\theta}{\dd \bbQ}\log\left(\frac{\dd\bbP_\theta}{\dd\bbP_\refm}\right)\right],
\end{align}
where $\frac{\dd \bbP_\theta}{\dd \bbQ}$ is the Radon-Nikodym derivative of $\bbP_\theta$ with respect to $\bbQ$. By \thmref{girsanov}, 
\begin{align}\label{eqn:is-ratio-appx}
   M_1(\theta)=-\!\int_0^{1}\!\sqrt{2\wwt_{1-r}}\,\Delta\bfvv_\theta(\bfxx_{1-r},1-r)\cdot\dd\bfww_r\,-\,\int_0^{1}\!\wwt_{1-r}\,\lrVert{\Delta\bfvv_\theta(\bfxx_{1-r},1-r)}^2\dd r,
\end{align}
where $r=1-t$,  $\Delta\bfvv_\theta:=\bfvv_\theta-\bfvv_\base$ and $\wwt_t\;=\;\frac{(1+\eta_t)^2}{4\eta_t}\cdot\frac{1-t}{t}$. The results recover~\eqnref{is-policy-loss}.

\subsection{Proof of \propref{policy-grad}}\label{appx:proof-policy-grad}

\begin{proposition}[Stochastic-integral path-space estimator]\label{prop:appx-policy-grad}
 Assume that the clipping or a small step size keeps $|\exp(M_1(\theta^-))-1|<\epsilon$ and $\bbE_\bbQ[\Vert A\nabla_\theta M_1(\theta)\Vert]<M$ bounded, where $\theta^-$ denotes the stop-gradient version of the current parameters. Then $-\nabla_\theta\clL^{\text{policy}}(\theta,c)\to$
    \begin{align}
        \nabla_\theta\bbE_\bbQ\!\left[A(\bfxx_0,c)\int_0^{1}\!\sqrt{2\wwt_{1-r}}\,\Delta\bfvv_\theta(\bfxx_{1-r},1-r)\cdot\dd\bfww_r+A(\bfxx_0,c)\int_0^{1}\!\wwt_{1-r}\,\lrVert{\Delta\bfvv_\theta(\bfxx_{1-r},r)}^2\dd r\right],
    \end{align}
    as $\epsilon\to 0$.
\end{proposition}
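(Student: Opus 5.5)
The plan is to differentiate under the expectation $\bbE_\bbQ$. This is legitimate because the proposal $\bbQ$ does not depend on $\theta$: the Brownian motion $\bfww$ and the rollout paths are fixed, and $\theta$ enters only through the Girsanov exponent $M_1(\theta)$ of \eqnref{is-ratio-appx}. Writing $\clL^{\text{policy}}(\theta,c)=\bbE_\bbQ[\exp(M_1(\theta))A(\bfxx_0,c)]$, I will first justify interchanging $\nabla_\theta$ with $\bbE_\bbQ$. The assumption $\bbE_\bbQ[\Vert A\nabla_\theta M_1(\theta)\Vert]<M$, together with the closeness of $\exp(M_1)$ to $1$, gives the domination needed for this step. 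The log-derivative identity then yields
\begin{align*}
\nabla_\theta\clL^{\text{policy}}(\theta,c)=\bbE_\bbQ\!\left[\exp(M_1(\theta^-))\,A(\bfxx_0,c)\,\nabla_\theta M_1(\theta)\right].
\end{align*}
Here $\theta^-$ marks that the prefactor is evaluated at the current parameters and carries no gradient.

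Next I split the prefactor as $\exp(M_1(\theta^-))=1+(\exp(M_1(\theta^-))-1)$. The leading part is $\bbE_\bbQ[A\nabla_\theta M_1(\theta)]$. For the remainder, the pointwise bound $|\exp(M_1(\theta^-))-1|<\epsilon$ gives
\begin{align*}
\left\|\bbE_\bbQ\!\left[(\exp(M_1(\theta^-))-1)A\nabla_\theta M_1(\theta)\right]\right\|\le\epsilon\,\bbE_\bbQ\!\left[\Vert A\nabla_\theta M_1(\theta)\Vert\right]<\epsilon M,
\end{align*}
which tends to $0$ as $\epsilon\to0$. So $-\nabla_\theta\clL^{\text{policy}}\to-\bbE_\bbQ[A\nabla_\theta M_1(\theta)]=\nabla_\theta\bbE_\bbQ[-A M_1(\theta)]$, where the last equality uses the same interchange again.

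Finally I substitute the explicit form \eqnref{is-ratio-appx}. Since
\begin{align*}
-M_1(\theta)=\int_0^1\sqrt{2\wwt_{1-r}}\,\Delta\bfvv_\theta\cdot\dd\bfww_r+\int_0^1\wwt_{1-r}\lrVert{\Delta\bfvv_\theta}^2\dd r,
\end{align*}
multiplying by $A(\bfxx_0,c)$ gives exactly Terms (A) and (B) in the claimed expression. For the stochastic part, $\nabla_\theta$ must pass through the It\^o integral; this is valid because $\bfww$ is $\theta$-independent under $\bbQ$ and the integrand $\Delta\bfvv_\theta$ is adapted and smooth in $\theta$.

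The main obstacle is rigor in the two interchange steps: the gradient through $\bbE_\bbQ$ and the gradient through the It\^o integral. My first approach is to treat them at the level of the Euler--Maruyama discretization, where everything is a finite Gaussian computation and differentiation is elementary. I would then pass to the limit using $L^2$ convergence of Itô sums, with the integrability hypothesis serving as the uniform bound. I would also note that the statement's "$=_\theta$" should be read as equality of gradients, consistent with \propref{policy-grad}.
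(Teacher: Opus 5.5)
Your proposal is correct and follows the paper's proof essentially step for step. Both use the chain rule with the stop-gradient prefactor $\exp(M_1(\theta^-))$, split it as $1+(\exp(M_1(\theta^-))-1)$ with the remainder bounded by $\epsilon M$, and substitute the explicit Girsanov exponent $M_1(\theta)$. Your extra discussion of interchanging $\nabla_\theta$ with $\bbE_\bbQ$ and with the It\^o integral goes beyond the paper, which takes both steps for granted. Strictly, the bound $\bbE_\bbQ[\Vert A\nabla_\theta M_1(\theta)\Vert]<M$ at a single $\theta$ does not by itself give the local domination you invoke, but the paper makes the same interchange without comment.
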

\begin{proof}
    Consider the policy-gradient term $\clL^{\text{policy}}(\theta,c)=\bbE_\bbQ[\exp(M_1(\theta))A(\bfxx_{0},c)]$ of \eqnref{is-policy-loss}. 
    The gradient of the policy-gradient term is given by
    \begin{align}\label{eqn:app-taylor}
        \nabla_\theta\clL^{\text{policy}}(\theta,c) = \bbE_\bbQ[A(\bfxx_{0},c)\exp(M_1(\theta^-))\nabla_\theta M_1(\theta)].
    \end{align}
    Under the assumption $|\exp(M_1(\theta^-))-1|<\epsilon$,
    \begin{align}
        &\Vert\nabla_\theta\clL^{\text{policy}}(\theta,c)-\bbE_\bbQ[A(\bfxx_{0},c)\nabla_\theta M_1(\theta)]\Vert\\&=\lrVert{\bbE_\bbQ\lrbrack{(\exp(M_1(\theta^-))-1)A\nabla_\theta M_1(\theta)}}\leq \bbE_\bbQ\lrbrack{|\exp(M_1(\theta^-))-1|\Vert A\nabla_\theta M_1(\theta)\Vert}\\
        &\leq \epsilon M.
    \end{align}
     By \eqnref{is-ratio-appx},
    \begin{align}
        M_1(\theta)=-\!\int_0^{1}\!\sqrt{2\wwt_{1-r}}\,\Delta\bfvv_\theta(\bfxx_{1-r},1-r)\cdot\dd\bfww_r\,-\,\int_0^{1}\!\wwt_{1-r}\,\lrVert{\Delta\bfvv_\theta(\bfxx_{1-r},r)}^2\dd r.
    \end{align}
    Then we conclude that $-\nabla_\theta\clL^{\text{policy}}(\theta,c)\to -\bbE_\bbQ[A(\bfxx_{0},c)\nabla_\theta M_1(\theta)]=$
    \begin{align}
        \nabla_\theta\bbE_\bbQ\!\left[A(\bfxx_0,c)\int_0^{1}\!\sqrt{2\wwt_{1-r}}\,\Delta\bfvv_\theta(\bfxx_{1-r},1-r)\cdot\dd\bfww_r+A(\bfxx_0,c)\int_0^{1}\!\wwt_{1-r}\,\lrVert{\Delta\bfvv_\theta(\bfxx_{1-r},r)}^2\dd r\right],
    \end{align}
    as $\epsilon\to 0$.
\end{proof}

\subsection{Proof of \thmref{variance-reduction}}\label{appx:proof-vr}

\paragraph{Convention on time direction.}
Before starting the proof, we fix the time convention. The FM-time $t\in[0,1]$ used in the main text has $t=0$ at the clean sample and $t=1$ at pure noise; under this convention, the sampling process runs backward from $t=1$ to $t=0$. For the It\^o calculus below, we use a sampling time $r:=1-t\in[0,1]$ and denote $\bfyy_r:=\bfxx_{1-r}=\bfxx_t$ that runs forward during sampling, with $r=0$ at the initial noise $\bfyy_0=\bfxx_1$ and $r=1$ at the final sample $\bfyy_1=\bfxx_{0}$.

\begin{theorem}[Variance-reduced path-space estimator]\label{thm:appx-variance-reduction}
Let $q$ be the path measure of the sampling SDE \eqnref{bg-samplingsde} with base drift $\bfvv_\base$ and noise schedule $\eta_t>0$, and let $V_t(\bfxx,c):=\bbE_q[A(\bfxx_0,c)\mid\bfxx_t=\bfxx]$ be the associated value function. Additionally, assume that the SDE coefficients and $V_t$ are smooth.
Under the trust-region condition of \propref{policy-grad}, the policy gradient of \eqnref{is-policy-loss} admits the (locally) variance-reduced representation
\begin{align}\label{eqn:appx-vr-form}
\resizebox{0.93\textwidth}{!}{$
\displaystyle
-\clL^{\text{policy}}(\theta,c)= {}_\theta\bbE_q\left[\int_0^1\left((1+\eta_t)\Delta\bfvv_\theta(\bfxx_t,t)\cdot\nabla_{\bfxx_t}V_t(\bfxx_t,c)+A(\bfxx_0,c)\wwt_t\lrVert{\Delta\bfvv_\theta(\bfxx_t,t)}^2\right)\dd t\right].
$}
\end{align}
Let $\widehat{\nabla V}_{t,\Delta t}^{\,\mathrm{sto}}:=A(\bfxx_0,c)\sqrt{(1-t)/(2\eta_t t\Delta t)}\,\bfeps_t$ be the local discretization of Term (A) in \eqnref{sde-estimator}. Assume $\widehat{\nabla V}_{t,\Delta t}^{\,\mathrm{sto}}$ is square-integrable for any interior time $t\in(0,1)$. When $\Delta t\downarrow0$, we have
\begin{align}
\mathrm{Var}_{\bbQ}\!\left(
\widehat{\nabla V}_{t,\Delta t}^{\,\mathrm{sto}}\right)
\geq
\mathrm{Var}_{\bbQ}\!\left(
\nabla_{\bfxx_t}V_t(\bfxx_t,c)\right).
\label{eqn:app-local-rb-var}
\end{align}
\end{theorem}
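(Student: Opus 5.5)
Starting from \propref{policy-grad}, it suffices to show that Term~(A), $\bbE_\bbQ[A(\bfxx_0,c)\int_0^1\sqrt{2\wwt_{1-r}}\,\Delta\bfvv_\theta(\bfyy_r,1-r)\cdot\dd\bfww_r]$, equals $\bbE_\bbQ[\int_0^1(1+\eta_t)\Delta\bfvv_\theta\cdot\nabla V_t\,\dd t]$. Term~(B) is carried over unchanged; changing variables $r=1-t$ turns it into the $A\wwt_t\lrVert{\Delta\bfvv_\theta}^2$ term.

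\textbf{Martingale representation.} Work in sampling time $r$ with $\bfyy_r=\bfxx_{1-r}$. Write $g_r=\sqrt{2t\eta_t/(1-t)}$ for the diffusion coefficient of $\bbQ$ and $U_r(\bfyy):=V_{1-r}(\bfyy,c)=\bbE_\bbQ[A(\bfyy_1,c)\mid\bfyy_r=\bfyy]$. By the Markov property, $U_r(\bfyy_r)$ is a $\bbQ$-martingale. The Kolmogorov backward equation (\propref{backward-eqn}) gives $\partial_r U+\clA_r U=0$. Applying It\^o's lemma to $U_r(\bfyy_r)$, the $\dd r$ terms cancel, so
\[
A(\bfyy_1,c)=U_0(\bfyy_0)+\int_0^1 g_r\,\nabla U_r(\bfyy_r)\cdot\dd\bfww_r.
\]
Substitute this into Term~(A). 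The constant $U_0(\bfyy_0)$ multiplies a mean-zero It\^o integral. Because $\bfyy_0$ is $\clF_0$-measurable, that product has zero expectation. For the remaining product of two It\^o integrals, use It\^o's isometry (the polarized form $\bbE[\int a\cdot\dd\bfww\int b\cdot\dd\bfww]=\bbE\int a\cdot b\,\dd r$). Square-integrability of $\Delta\bfvv_\theta$ and $\nabla U$ is exactly the smoothness/integrability assumption in the statement. This gives
\[
\text{Term (A)}=\bbE_\bbQ\!\int_0^1\sqrt{2\wwt_{t}}\,g_r\,\Delta\bfvv_\theta(\bfxx_t,t)\cdot\nabla_{\bfxx_t}V_t\,\dd r .
\]
Finally, check the coefficient: $\sqrt{2\wwt_t}\,g=\sqrt{\tfrac{(1+\eta_t)^2(1-t)}{2\eta_t t}}\cdot\sqrt{\tfrac{2t\eta_t}{1-t}}=1+\eta_t$, and $\dd r=\dd t$ after reorienting the integral. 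This yields \eqnref{appx-vr-form}.

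\textbf{Variance inequality.} Here I would use a Rao--Blackwell/conditional-variance argument. Over a step $[r,r+\Delta r]$, $\bfeps_t=\Delta\bfww_r/\sqrt{\Delta r}$. Use the martingale representation again to write $A=U_r(\bfyy_r)+\int_r^{r+\Delta r}g\nabla U\cdot\dd\bfww+(\text{increments orthogonal to }\Delta\bfww_r)$. Conditioning on $\bfyy_r$ then gives
\[
\bbE\big[\widehat{\nabla V}^{\mathrm{sto}}_{t,\Delta t}\mid\bfyy_r\big]=\sqrt{\tfrac{1-t}{2\eta_t t\Delta t}}\cdot\tfrac{1}{\sqrt{\Delta t}}\,\bbE\Big[\textstyle\int_r^{r+\Delta r}g\nabla U\,\dd s\,\Big|\,\bfyy_r\Big]\to\tfrac{g_r}{g_r}\,\cdots ,
\]
which tends to $\nabla_{\bfxx_t}V_t(\bfxx_t,c)$ up to the normalization fixed by $s(t)$. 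I would verify the constants carefully, since they should match claim (i) of \propref{estimator-unbiased}. The law of total variance, $\mathrm{Var}(X)=\mathrm{Var}(\bbE[X\mid\bfyy_r])+\bbE\,\mathrm{Var}(X\mid\bfyy_r)\ge\mathrm{Var}(\bbE[X\mid\bfyy_r])$, combined with $L^2$ convergence of the conditional mean as $\Delta t\downarrow0$, then gives \eqnref{app-local-rb-var} in the limit. In fact the left side blows up like $1/\Delta t$, so the inequality holds with a large margin.

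\textbf{Main obstacle.} The delicate step is justifying the limit interchange and the conditional-mean convergence. This needs continuity of $\nabla U$ along paths and uniform integrability, which is where the smoothness and square-integrability assumptions enter. The first part is also delicate near $t\to0,1$, where $\wwt_t$ and $g$ are singular. I would handle it by working on $[\delta,1-\delta]$ and letting $\delta\to0$ by dominated convergence, using that the product $\sqrt{2\wwt_t}\,g=1+\eta_t$ stays bounded.
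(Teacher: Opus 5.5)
Your proposal is correct and follows essentially the same route as the paper. The Kolmogorov backward equation plus It\^o's formula give $\dd V_r=g_r\nabla\bar V_r\cdot\dd\bfww_r$, and the identity then comes from the expected cross-variation with $\sqrt{2\wwt_{1-r}}\,\Delta\bfvv_\theta$; you use the polarized It\^o isometry where the paper applies It\^o's product rule to $V_rI_r$, which is the same computation. The variance bound uses the same conditional-mean limit combined with the law of total variance, and the constants you left to check close exactly: since $\sqrt{(1-t)/(2\eta_t t)}=1/g_r$, one has $\widehat{\nabla V}^{\,\mathrm{sto}}_{t,\Delta t}=A(\bfxx_0,c)\,(\bfww_{r+\Delta r}-\bfww_r)/(g_r\Delta r)$, whose conditional mean tends to $\nabla\bar V_r(\bfyy_r,c)$ with no leftover normalization.
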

\begin{proof}
We prove the identity for the policy-gradient term; the quadratic Term~(B) in \eqnref{sde-estimator} is carried through unchanged. To avoid mixing the two time directions, we use the forward sampling time process $\bfyy_r:=\bfxx_{1-r}$ for $r\in[0,1]$. 
Then the sampling SDE (\eqnref{bg-samplingsde}) becomes
\begin{align}
    \bbQ:\quad\dd\bfyy_r = \underbrace{-\left[\bfvv_\base(\bfxx_r,1-r) + \frac{\eta_{1-r}}{r}\big(\bfyy_r+r\bfvv_\base(\bfyy_r,1-r)\big)\right]}_{:=\bfmu_r(\bfyy_r)}\dd r + \underbrace{\sqrt{\frac{2(1-r)\eta_{1-r}}{r}}}_{:=g_r}\,\dd\bfww_r,
\end{align}
where $\bfyy_0\sim\clN(\bfzro,\bfI)$. Under $\bbQ$, we simply denote the sampling process as
\begin{align}
\dd\bfyy_r=\bfmu_r(\bfyy_r)\dd r+g_{r}\dd\bfww_r.
\label{eqn:app-sampling-time-sde}
\end{align}
All calculations can first be performed on $[\varepsilon,1-\varepsilon]$ and then extended to $[0,1]$ by square-integrability and a limiting argument. Let
\begin{align}
H_r&:=\sqrt{2\wwt_{1-r}}\,\Delta\bfvv_\theta(\bfyy_r,1-r),
\label{eqn:app-H}\\
I_r&:=\int_0^r H_s\cdot\dd\bfww_s, \label{eqn:app-I}\\
\bar V_r(\bfxx,c) &:=\bbE_\bbQ[A(\bfyy_1,c)\mid\bfyy_r=\bfxx],\qquad
V_r:=\bar V_r(\bfyy_r,c).
\label{eqn:app-V}
\end{align}
By square-integrability, $I_r$ is an It\^o martingale. The process $V_r=\bbE_\bbQ[A(\bfyy_1,c)\mid\clF_r]$ is also a martingale by the tower property, with $V_1=A(\bfyy_1,c)=A(\bfxx_0,c)$. The Kolmogorov backward equation for $\bar V_r$ under \eqnref{app-sampling-time-sde} is
\begin{align}
\partial_r\bar V_r+\bfmu_r\cdot\nabla_{\bfxx}\bar V_r
+\frac{g_{r}^2}{2}\Delta_{\bfxx}\bar V_r=0.
\label{eqn:app-backward}
\end{align}
Applying It\^o's formula and using \eqnref{app-backward} eliminates the drift term:
\begin{align}
\dd V_r&=\partial_r\bar{V}_r(\bfyy_r,c)\dd r+\nabla_\bfxx\bar{V}_r(\bfyy_r,c)\cdot\dd\bfyy_r+\frac{1}{2}\Delta\bar{V}_r(\bfyy_r,c) \dd [\bfyy_r,\bfyy_r],\\
&=\partial_r\bar{V}_r(\bfyy_r,c)\dd r+\nabla_\bfxx\bar{V}_r(\bfyy_r,c)\cdot(\bfmu_r(\bfyy_r)\dd r+g_{r}\dd\bfww_r)+\frac{g_r^2}{2}\Delta\bar{V}_r(\bfyy_r,c) \dd r,\\
&=\underbrace{\lrparen{\partial_r\bar{V}_r(\bfyy_r,c) +\nabla_\bfxx\bar{V}_r(\bfyy_r,c)\cdot\bfmu_r(\bfyy_r) +\frac{g_r^2}{2}\Delta\bar{V}_r(\bfyy_r,c)}}_{=0,\,\text{by \eqnref{app-backward}}}\dd r+g_{r}\nabla_\bfxx\bar{V}_r(\bfyy_r,c)\cdot\dd\bfww_r\\
&=g_{r}\nabla_{\bfxx}\bar V_r(\bfyy_r,c)\cdot\dd\bfww_r,
\label{eqn:app-dV}
\end{align}
where $\Delta=\sum_{i=1}^d\frac{\partial^2}{\partial x_i^2}$ is the Laplace operator and $[\cdot,\cdot]$ is the quadratic variation of the diffusion process.
It remains to identify the cross-variation of the two martingales. \eqnref{app-H} and \eqnref{app-dV} give
\begin{align}
\dd[V,I]_r
&=g_{r}\sqrt{2\wwt_{1-r}}\,
\Delta\bfvv_\theta(\bfyy_r,1-r)\cdot
\nabla_{\bfxx}\bar V_r(\bfyy_r,c)\,\dd r\\
&=(1+\eta_{1-r})\,
\Delta\bfvv_\theta(\bfyy_r,1-r)\cdot
\nabla_{\bfxx}\bar V_r(\bfyy_r,c)\,\dd r,
\label{eqn:app-covariation}
\end{align}
where the second equality follows from
\begin{align}
g_{r}\sqrt{2\wwt_{1-r}}
=\sqrt{\frac{2t\eta_t}{1-t}}\,
\sqrt{\frac{(1+\eta_t)^2(1-t)}{2\eta_t t}}
=1+\eta_t,\qquad t=1-r.
\end{align}
It\^o's product rule now yields
\begin{align}
\dd(V_rI_r)
&=V_r\dd I_r+I_r\dd V_r+\dd[V,I]_r\\
&=\left(V_rH_r+I_rg_{r}\nabla_{\bfxx}\bar V_r(\bfyy_r,c)\right)\cdot\dd\bfww_r
 + (1+\eta_{1-r})\Delta\bfvv_\theta(\bfyy_r,1-r)\cdot
\nabla_{\bfxx}\bar V_r(\bfyy_r,c)\,\dd r.
\end{align}
After integration, 
\begin{align}
V_1I_1&=V_0I_0+\int_0^1\left[V_rH_r+g_{r}I_r\nabla_{\bfxx}\bar V_r(\bfyy_r,c)\right]\cdot\dd\bfww_r
 \\&\quad + \int_0^1(1+\eta_{1-r})\Delta\bfvv_\theta(\bfyy_r,1-r)\cdot
\nabla_{\bfxx}\bar V_r(\bfyy_r,c)\,\dd r.
\end{align}
Take expectation on both sides and by $I_0=0$, we get
\begin{align}
\bbE_\bbQ[V_1I_1]&=\bbE_\bbQ\!\left[
A(\bfxx_0,c)\int_0^1\sqrt{2\wwt_{1-r}}\,
\Delta\bfvv_\theta(\bfyy_{r},1-r)\cdot\dd\bfww_r\right]
\nonumber\\
\qquad&=\bbE_\bbQ\!\left[
\int_0^1(1+\eta_{1-r})\Delta\bfvv_\theta(\bfyy_{r},1-r)\cdot
\nabla \bar{V}_{r}(\bfyy_{r},c)\,\dd r\right]\\
&\quad+\underbrace{\bbE_\bbQ\left[\int_0^1\left(V_rH_r+g_{r}I_r\nabla_{\bfxx}\bar V_r(\bfyy_r,c)\right)\cdot\dd\bfww_r\right]}_{=0,\quad\text{by the martingale property of It\^o integral}}\\
&=\bbE_\bbQ\!\left[
\int_0^1(1+\eta_{1-r})\Delta\bfvv_\theta(\bfyy_{r},1-r)\cdot
\nabla \bar{V}_{r}(\bfyy_{r},c)\,\dd r\right].
\label{eqn:app-main-identity}
\end{align}
Changing variables back to FM-time $t=1-r$ turns the right-hand side into the first term of \eqnref{appx-vr-form}. Restoring the unchanged quadratic Term~(B) gives
\begin{align}
\resizebox{0.93\textwidth}{!}{$
\displaystyle
-\clL^{\text{policy}}(\theta,c)= {}_\theta\bbE_q\left[\int_0^1\left((1+\eta_t)\Delta\bfvv_\theta(\bfxx_t,t)\cdot\nabla_{\bfxx_t}V_t(\bfxx_t,c)+A(\bfxx_0,c)\wwt_t\lrVert{\Delta\bfvv_\theta(\bfxx_t,t)}^2\right)\dd t\right].
$}
\end{align}
which is exactly \eqnref{appx-vr-form}. 

We now prove \eqnref{app-local-rb-var}. For a positive sampling-time step $\Delta r$, 
\begin{align}
\widehat{\nabla V}_{t,\Delta r}^{\,\mathrm{sto}}
=\frac{A(\bfyy_1,c)}{g_r\Delta r}
\big(\bfww_{r+\Delta r}-\bfww_r\big),\qquad t=1-r.
\end{align}
By the tower property, the martingale identity
$V_{r+\Delta r}=\bbE_\bbQ[A(\bfyy_1,c)\mid\clF_{r+\Delta r}]$, and \eqnref{app-dV},
\begin{align}
\bbE_\bbQ\!\left[
\widehat{\nabla V}_{t,\Delta r}^{\,\mathrm{sto}}
\mid\clF_r\right]
&=\frac{1}{g_r\Delta r}
\bbE_\bbQ\!\left[
(V_{r+\Delta r}-V_r)
(\bfww_{r+\Delta r}-\bfww_r)
\mid\clF_r\right]\\
&=\frac{1}{g_r\Delta r}
\bbE_\bbQ\!\left[
\int_r^{r+\Delta r}
g_s\nabla_{\bfxx}\bar V_s(\bfyy_s,c)\,\dd s
\middle|\clF_r\right].
\label{eqn:app-local-rb-proof}
\end{align}
By Jensen's inequality and Fubini's theorem,
\begin{align}
\left\|
\bbE_\bbQ\!\left[
\widehat{\nabla V}_{t,\Delta r}^{\,\mathrm{sto}}\mid\clF_r\right]
-\nabla_{\bfxx}\bar V_r(\bfyy_r,c)
\right\|_{L^2(\bbQ)}
&\leq
\frac{1}{g_r\Delta r}
\int_r^{r+\Delta r}
\left\|
g_s\nabla_{\bfxx}\bar V_s(\bfyy_s,c)
-g_r\nabla_{\bfxx}\bar V_r(\bfyy_r,c)
\right\|_{L^2(\bbQ)}
\,\dd s\\
&\xrightarrow[\Delta r\downarrow0]{}0.
\end{align}
By the Markov property,
$\bbE_\bbQ[\widehat{\nabla V}_{t,\Delta t}^{\,\mathrm{sto}}\mid\clF_r]
=\bbE_q[\widehat{\nabla V}_{t,\Delta t}^{\,\mathrm{sto}}\mid\bfxx_t]$.
By the property of conditional variance, 
\begin{align}
\mathrm{Var}_{\bbQ}\!\left(
\widehat{\nabla V}_{t,\Delta t}^{\,\mathrm{sto}}\right)
&=
\mathrm{Var}_{\bbQ}\!\left(
\bbE_{\bbQ}[
\widehat{\nabla V}_{t,\Delta t}^{\,\mathrm{sto}}\mid\bfxx_t]\right)
+\bbE_{q}\!\left[
\mathrm{Var}_{q}\!\left(
\widehat{\nabla V}_{t,\Delta t}^{\,\mathrm{sto}}\mid\bfxx_t\right)\right]\\
&\geq
\mathrm{Var}_{\bbQ}\!\left(
\bbE_{\bbQ}[
\widehat{\nabla V}_{t,\Delta t}^{\,\mathrm{sto}}\mid\bfxx_t]\right)\xrightarrow[\Delta t\downarrow0]{}\mathrm{Var}_{\bbQ}\!\left(
\nabla_{\bfxx_t}V_t(\bfxx_t,c)\right).
\end{align}
Identifying the positive step magnitudes $\Delta r=\Delta t$ proves \eqnref{app-local-rb-var}.
\end{proof}

\subsection{Proof of \propref{single-point}}\label{appx:proof-single-point}

\begin{proposition}\label{prop:appx-single-point}
The gradient of the value function admits the closed-form expression
\begin{align}
    \nabla_{\bfxx_t} V_t(\bfxx_t, c)=\bbE_{q(\bfxx_0\mid\bfxx_t)}\!\left[\nabla_{\bfxx_t} \log q(\bfxx_t\mid\bfxx_0)\,A(\bfxx_0,c)\right] - V_t(\bfxx_t,c)\,\bbE_{q(\bfxx_0\mid\bfxx_t)}\!\left[\nabla_{\bfxx_t} \log q(\bfxx_t\mid\bfxx_0)\right].
\end{align}
Let $\bfvv(\bfxx_t,\bfxx_0):=(\bfxx_t-\bfxx_0)/t$ denote the FM conditional velocity. Under $q$ with $\bfvv_\base=\bbE[\bfvv\mid\bfxx_t]\approx\bfvv_\old$ and $\eta_t\equiv 1$ (i.e.,~$q$ is the reverse of the forward noising process), this specialises to
\begin{align}\label{eqn:appx-gradV-centered}
    \nabla_{\bfxx_t} V_t(\bfxx_t,c)=-\tfrac{1-t}{t}\,\bbE_{q(\bfxx_0\mid\bfxx_t)}\!\left[A(\bfxx_0,c)\,\big(\bfvv(\bfxx_t,\bfxx_0)-\bfvv_\base(\bfxx_t,t)\big)\right].
\end{align}
The corresponding one-sample estimator on a rollout trajectory $\bfxx_{0:1}$ is 
\begin{align}
\widehat{\nabla V}_t^{\,\mathrm{det}}(\bfxx_t,c):=-\tfrac{1-t}{t}A(\bfxx_0,c)\big(\bfvv(\bfxx_t,\bfxx_0)-\bfvv_\old(\bfxx_t,t)\big).
\end{align}
\end{proposition}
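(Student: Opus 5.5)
The proof has three steps. First, write the value function as a posterior average and differentiate under the integral. Second, specialise the resulting score to the Gaussian flow-matching kernel. Third, read off the one-sample estimator.

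\textbf{Step 1: Bayes-rule identity.} Write
\[
V_t(\bfxx_t,c)=\int A(\bfxx_0,c)\,q(\bfxx_0\mid\bfxx_t)\,\dd\bfxx_0,
\qquad
q(\bfxx_0\mid\bfxx_t)=\frac{q(\bfxx_t\mid\bfxx_0)\,q(\bfxx_0)}{q(\bfxx_t)}.
\]
Differentiating the log of the posterior in $\bfxx_t$ gives
\[
\nabla_{\bfxx_t}\log q(\bfxx_0\mid\bfxx_t)=\nabla_{\bfxx_t}\log q(\bfxx_t\mid\bfxx_0)-\nabla_{\bfxx_t}\log q(\bfxx_t).
\]
The marginal score equals the posterior mean of the conditional score, $\nabla\log q(\bfxx_t)=\bbE_{q(\bfxx_0\mid\bfxx_t)}[\nabla_{\bfxx_t}\log q(\bfxx_t\mid\bfxx_0)]$; this follows by differentiating $q(\bfxx_t)=\int q(\bfxx_t\mid\bfxx_0)q(\bfxx_0)\,\dd\bfxx_0$. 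Now pass the gradient inside the integral, which requires square-integrability of $A$ and Gaussian tails of the kernel for dominated convergence. This gives
\[
\nabla_{\bfxx_t}V_t=\bbE\big[A\,\nabla\log q(\bfxx_t\mid\bfxx_0)\big]-V_t\,\bbE\big[\nabla\log q(\bfxx_t\mid\bfxx_0)\big],
\]
which is the first claim. Equivalently, $\nabla_{\bfxx_t}V_t$ is the posterior covariance of $A$ with the conditional score.

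\textbf{Step 2: Specialisation.} When $\eta_t\equiv1$, \eqnref{bg-samplingsde} is the time reversal of the forward process \eqnref{fm-forward-sde}, and $\bfvv_\base=\bbE[\bfvv\mid\bfxx_t]$ is exactly the marginal FM velocity. So $q$ has the same one-time marginals and the same posterior as the forward noising pair, and $q(\bfxx_t\mid\bfxx_0)=\clN((1-t)\bfxx_0,t^2\bfI)$. Its score is $-(\bfxx_t-(1-t)\bfxx_0)/t^2$.

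The key algebra is to express this score through $\bfvv=(\bfxx_t-\bfxx_0)/t$. Since $\bfxx_t-(1-t)\bfxx_0=(1-t)(\bfxx_t-\bfxx_0)+t\bfxx_t$, the score becomes
\[
\nabla_{\bfxx_t}\log q(\bfxx_t\mid\bfxx_0)=-\tfrac{1-t}{t}\,\bfvv-\tfrac{1}{t}\bfxx_t.
\]
The term $-\bfxx_t/t$ does not depend on $\bfxx_0$, so it cancels in the covariance form of Step 1. What remains is
\[
\nabla_{\bfxx_t}V_t=-\tfrac{1-t}{t}\Big(\bbE[A\,\bfvv]-V_t\,\bbE[\bfvv\mid\bfxx_t]\Big)
=-\tfrac{1-t}{t}\,\bbE\big[A\,(\bfvv-\bfvv_\base)\big],
\]
using $\bbE[\bfvv\mid\bfxx_t]=\bfvv_\base$. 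This is \eqnref{appx-gradV-centered}.

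\textbf{Step 3: One-sample estimator.} Along a rollout of $q$, the pair $(\bfxx_0,\bfxx_t)$ is distributed so that $\bfxx_0\sim q(\bfxx_0\mid\bfxx_t)$ given $\bfxx_t$, by the Markov property of the path measure. Dropping the conditional expectation in \eqnref{appx-gradV-centered} and replacing $\bfvv_\base$ by its approximation $\bfvv_\old$ therefore gives $\widehat{\nabla V}_t^{\,\mathrm{det}}$.

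\textbf{Main obstacle.} The delicate point is the identification in Step 2: that the $\eta_t=1$ SDE with drift $\bbE[\bfvv\mid\bfxx_t]$ reproduces the forward-process joint law of $(\bfxx_0,\bfxx_t)$. I would justify this by citing the time-reversal statement already made below \eqnref{bg-samplingsde}, namely that reversal preserves all finite-dimensional marginals. Justifying the exchange of gradient and integral is routine by comparison.
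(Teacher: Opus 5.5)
Your proposal is correct and follows essentially the same argument as the paper. You differentiate $V_t$ as a posterior average, where your log-posterior/marginal-score identity is a repackaging of the paper's quotient-rule computation; you split the Gaussian score as $-\tfrac{1-t}{t}\bfvv-\tfrac{1}{t}\bfxx_t$ so that the $\bfxx_0$-independent part cancels against the $V_t$ term; and you read off the one-sample estimator with $\bfvv_\old$ in place of $\bbE[\bfvv\mid\bfxx_t]$, making explicit the covariance structure and the conditions for differentiating under the integral that the paper leaves implicit.
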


\begin{proof}
    Under $\bbQ$ with $\eta_t\!\equiv\!1$, the sampling SDE reduces to the reverse of the forward noising process (\secref{background-fm}), and the forward transition kernel is $q(\bfxx_t\mid\bfxx_0)=\clN(\bfxx_t;(1-t)\bfxx_0,t^2\bfI)$. Differentiating the value function, we get
\begin{align}
    \nabla_{\bfxx_t} V_t(\bfxx_t)&=\nabla_{\bfxx_t}\frac{\bbE_{\bfxx_0}[R(\bfxx_0,c)q(\bfxx_t|\bfxx_0)]}{\bbE_{\bfxx_0}[q(\bfxx_t|\bfxx_0)]}\\
    &=\frac{\bbE_{\bfxx_0}\lrbrack{\nabla_{\bfxx_t} q(\bfxx_t|\bfxx_0)R(\bfxx_0,c)}}{\bbE_{\bfxx_0}\lrbrack{q(\bfxx_t|\bfxx_0)}}-\frac{\bbE_{\bfxx_0}\lrbrack{ q(\bfxx_t|\bfxx_0)R(\bfxx_0,c)}~\bbE_{\bfxx_0}\lrbrack{ \nabla_{\bfxx_t} q(\bfxx_t|\bfxx_0)}}{\lrparen{\bbE_{\bfxx_0}\lrbrack{q(\bfxx_t|\bfxx_0)}}^2}
    \\
    &=\frac{\bbE_{\bfxx_0}\lrbrack{q(\bfxx_t|\bfxx_0)\nabla_{\bfxx_t} \log q(\bfxx_t|\bfxx_0)R(\bfxx_0,c)}}{\bbE_{\bfxx_0}\lrbrack{q(\bfxx_t|\bfxx_0)}} \\&\quad -\frac{\bbE_{\bfxx_0}\lrbrack{ q(\bfxx_t|\bfxx_0)R(\bfxx_0,c)}~\bbE_{\bfxx_0}\lrbrack{  q(\bfxx_t|\bfxx_0)\nabla_{\bfxx_t} \log q(\bfxx_t|\bfxx_0)}}{\lrparen{\bbE_{\bfxx_0}\lrbrack{q(\bfxx_t|\bfxx_0)}}^2}\\
    &=\bbE_{q(\bfxx_0|\bfxx_t)}\left[\nabla_{\bfxx_t} \log q(\bfxx_t|\bfxx_0)R(\bfxx_0,c)\right] - V_t(\bfxx_t) \bbE_{q(\bfxx_0|\bfxx_t)}\lrbrack{\nabla_{\bfxx_t} \log q(\bfxx_t|\bfxx_0)}.\label{eqn:app-bayes}
\end{align}
The Gaussian score of $q(\bfxx_t|\bfxx_0)$ is
\begin{align}\label{eqn:app-gaussian-score}
    \nabla_\bfxx\log q(\bfxx_t\mid\bfxx_0) = -\frac{\bfxx_t-(1-t)\bfxx_0}{t^2} = -\frac{1-t}{t}\cdot\bfvv(\bfxx_t,\bfxx_0)-\frac{1}{t}\bfxx_t,
\end{align}
where $\bfvv(\bfxx_t,\bfxx_0):=(\bfxx_t-\bfxx_0)/t$ is the FM conditional velocity. Substituting into \eqnref{app-bayes} and replacing $R$ with the group-relative advantage $A$ (it can be viewed as a normalized reward function),
\begin{align}
    \nabla_\bfxx V_t(\bfxx_t,c) = -\tfrac{1-t}{t}\,\bbE_{q(\bfxx_0\mid\bfxx_t)}\!\left[A(\bfxx_0,c)(\bfvv(\bfxx_t,\bfxx_0)-\bbE[\bfvv(\bfxx_t,\bfxx_0)|\bfxx_t]\right].\label{eqn:app-v-simple}
\end{align}
Any good approximation $\bfvv_\old(\bfxx_t,t)\approx\bbE[\bfvv|\bfxx_t]$ (which holds whenever $\bfvv_\old$ is a well-trained diffusion model) gives
\begin{align}
    \nabla_\bfxx V_t(\bfxx_t,c) = -\tfrac{1-t}{t}\,\bbE_{q(\bfxx_0\mid\bfxx_t)}\!\left[A(\bfxx_0,c)\,(\bfvv(\bfxx_t,\bfxx_0)-\bfvv_\old(\bfxx_t,t))\right],\label{eqn:app-v-centered}
\end{align}
whose one-sample Monte Carlo estimator is $\widehat{\nabla V}_t=-\tfrac{1-t}{t}A(\bfvv(\bfxx_t,\bfxx_0)-\bfvv_\old(\bfxx_t,t))$.
\end{proof}

\subsection{Proof of \propref{estimator-unbiased}}
\label{appx:proof-estimator-unbiased}
\begin{proposition}[Properties of the value-gradient estimators]\label{prop:appx-estimator-unbiased}
Let $A(\bfxx_0,c)$ be square-integrable. Assume $\bfvv_\base=\bfvv_\old$, then the following arguments hold:
\begin{enumerate}
    \item[(i).] The discretized stochastic one-sample estimator is asymptotically unbiased in the continuous-time It\^o limit of \thmref{variance-reduction}: $\lim_{\Delta t\to0}\bbE\!\left[\widehat{\nabla V}_{t,\Delta t}^{\,\mathrm{sto}}\mid\bfxx_t\right]=\nabla_{\bfxx_t}V_t(\bfxx_t,c)$;
    \item[(ii).] Assume $\eta_t\equiv1$, $\bfvv_\old(\bfxx_t,t)=\bbE[\bfvv(\bfxx_t,\bfxx_0)\mid\bfxx_t]$, then the deterministic one-sample estimator is unbiased: $\bbE\!\left[\widehat{\nabla V}_t^{\,\mathrm{det}}\mid\bfxx_t\right]=\nabla_{\bfxx_t}V_t(\bfxx_t,c)$;
    \item[(iii).] Under the assumption of (ii) and $h=1$, the KDE estimator is unbiased and has smaller variance than the deterministic one-sample estimator:
    \begin{align}
        \bbE\!\left[\widehat{\nabla V}_t^{\,\mathrm{KDE}}(\bfxx_t^j,c)\mid\bfxx_t^j\right]
        =\nabla_{\bfxx_t}V_t(\bfxx_t^j,c),\quad
        \mathrm{Var}\!\left(\widehat{\nabla V}_t^{\,\mathrm{KDE}}\mid\bfxx_t^j\right)
        \leq \mathrm{Var}\!\left(\widehat{\nabla V}_t^{\,\mathrm{det}}\mid\bfxx_t^j\right).
    \end{align}
\end{enumerate}
\end{proposition}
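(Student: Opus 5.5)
We treat the three parts separately, reusing the results already established.

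For (i), we take the second half of the proof of \thmref{variance-reduction}. Write the stochastic estimator in sampling time as $\widehat{\nabla V}^{\,\mathrm{sto}}_{t,\Delta r}=A(\bfyy_1,c)(\bfww_{r+\Delta r}-\bfww_r)/(g_r\Delta r)$. The tower property replaces $A(\bfyy_1,c)$ by $V_{r+\Delta r}-V_r$; the subtracted term $V_r$ is harmless because the Brownian increment has zero conditional mean. By \eqnref{app-dV} and the It\^o isometry, the conditional expectation given $\clF_r$ equals $\frac{1}{g_r\Delta r}\bbE[\int_r^{r+\Delta r}g_s\nabla\bar V_s(\bfyy_s,c)\,\dd s\mid\clF_r]$. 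Continuity in $L^2$ of $s\mapsto g_s\nabla\bar V_s(\bfyy_s)$ at interior times, which follows from the smoothness assumptions, gives convergence to $\nabla\bar V_r(\bfyy_r,c)$. The Markov property then turns conditioning on $\clF_r$ into conditioning on $\bfxx_t$. This is already essentially done in the appendix, so the task here is mostly citation.

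For (ii), we condition on $\bfxx_t$ and average $\widehat{\nabla V}^{\,\mathrm{det}}_t$ over the posterior $q(\bfxx_0\mid\bfxx_t)$. When $\eta_t\equiv1$, $q$ is the exact time reversal of the forward noising process. Hence the joint law of $(\bfxx_0,\bfxx_t)$ under $q$ is the forward coupling $\bfxx_0\sim q_0$, $\bfxx_t\sim\clN((1-t)\bfxx_0,t^2\bfI)$, and the rollout's $\bfxx_0$ given $\bfxx_t$ is a genuine posterior draw. Linearity then gives
\[
\bbE[\widehat{\nabla V}^{\,\mathrm{det}}_t\mid\bfxx_t]=-\tfrac{1-t}{t}\bbE_{q(\bfxx_0\mid\bfxx_t)}[A(\bfvv-\bfvv_\old)],
\]
which is exactly \eqnref{gradV-centered} from \propref{single-point}, since $\bfvv_\old=\bbE[\bfvv\mid\bfxx_t]$ holds exactly.

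For (iii), which is the main obstacle, we fix the query index $j$ and condition on $\bfxx_t^j$. The anchor $\bfxx_0^j$ is then a posterior draw. The other $G-1$ samples $\bfxx_0^i$ are i.i.d.\ from the marginal $q_0$ and independent of the anchor. With $h=1$, the kernel $K(\bfxx_t^j,\bfxx_0)$ equals $q(\bfxx_t^j\mid\bfxx_0)$ up to a constant, so the normalized weights $K_i/\sum_k K_k$ are exactly the self-normalized importance weights from prior $q_0$ to the posterior.

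Writing $f(\bfxx_0):=-\frac{1-t}{t}A(\bfxx_0,c)(\bfvv(\bfxx_t^j,\bfxx_0)-\bfvv_\old)$, the key step is an exchangeability argument in the style of \citet{bertrand2026closed}. The joint density of $(\bfxx_0^1,\dots,\bfxx_0^G)$ given $\bfxx_t^j$ is $\prod_i q_0(\bfxx_0^i)\cdot K_j/Z$, where $Z=\bbE_{q_0}[K]$. We symmetrize over which index plays the anchor by introducing a uniformly random anchor label $J$. Then the marginal density of the unordered collection is $\prod_i q_0(\bfxx_0^i)\cdot\frac{1}{G}\sum_k K_k/Z$, and conditionally on the collection, $J=i$ with probability $K_i/\sum_k K_k$. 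Consequently the KDE estimator $\sum_i \frac{K_i}{\sum_k K_k}f(\bfxx_0^i)$ equals $\bbE[f(\bfxx_0^J)\mid\text{collection}]$. By the tower property its mean is $\bbE[f(\bfxx_0^J)]=\nabla V_t$, using (ii). The same identity gives the variance bound via Jensen's inequality (the conditional variance decomposition): $\mathrm{Var}(\bbE[f(\bfxx_0^J)\mid\cdot])\le\mathrm{Var}(f(\bfxx_0^J))=\mathrm{Var}(\widehat{\nabla V}^{\,\mathrm{det}}_t\mid\bfxx_t^j)$.

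The delicate point is justifying that the estimator is invariant under relabeling of the anchor. The $\bfxx_t^i$ for $i\ne j$ must not enter the estimator, and the KDE evaluated at $\bfxx_t^j$ depends only on the $\bfxx_0^i$, so this holds. We also need $A$ square-integrable so that all the variances are finite.
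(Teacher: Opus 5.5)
Your proposal is correct and follows the paper's proof essentially step for step. In (i) you use the tower property together with the martingale representation \eqnref{app-dV}, the It\^o isometry and the Markov property; in (ii) you read $\widehat{\nabla V}_t^{\,\mathrm{det}}$ as the integrand of \eqnref{gradV-centered} under the exact posterior; and in (iii) you use the random-anchor-label Rao--Blackwell argument of \citet{bertrand2026closed}, with $\bbP(J=i\mid\cdot)=K_i/\sum_k K_k$, the tower property for unbiasedness and the law of total variance for the variance bound, where your explicit symmetrized-density computation simply spells out the paper's relabeling step.
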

\begin{proof}
Again, we use the forward sampling time $r=1-t$ from the proof of \thmref{variance-reduction}.
\paragraph{Proof of (i).} Let $\clF_r$ be the filtration generated by the proposal process. For a step $\Delta r>0$, define
\begin{align}
\bfeps_{r,\Delta r}
:=\frac{\bfww_{r+\Delta r}-\bfww_r}{\sqrt{\Delta r}}.
\end{align}
Recall from \eqnref{app-dV} that the value martingale $V_r=\bbE_\bbQ[A(\bfyy_1,c)\mid\clF_r]$ satisfies
\begin{align}
\dd V_r
=g_r\nabla_{\bfxx}\bar V_r(\bfyy_r,c)\cdot\dd\bfww_r,\qquad
g_r=\sqrt{\frac{2t\eta_t}{1-t}},\quad t=1-r.
\end{align}
By the tower property and the fact that $\bfww_{r+\Delta r}-\bfww_r$ has zero conditional mean,
\begin{align}
\bbE_\bbQ\!\left[
A(\bfyy_1,c)(\bfww_{r+\Delta r}-\bfww_r)\mid\clF_r\right]&=\bbE_\bbQ\!\left[\bbE[
A(\bfyy_1,c)(\bfww_{r+\Delta r}-\bfww_r)\mid\clF_{r+\Delta r}]\mid\clF_r\right]\\
&=\bbE_\bbQ\!\left[\bbE[
A(\bfyy_1,c)\mid\clF_{r+\Delta r}](\bfww_{r+\Delta r}-\bfww_r)\mid\clF_r\right]\\
&=\bbE_\bbQ\!\left[
V_{r+\Delta r}(\bfww_{r+\Delta r}-\bfww_r)\mid\clF_r\right]\\
&=\bbE_\bbQ\!\left[
(V_{r+\Delta r}-V_r)(\bfww_{r+\Delta r}-\bfww_r)\mid\clF_r\right]\\
&=g_r\nabla_{\bfxx}\bar V_r(\bfyy_r,c)\,\Delta r+o(\Delta r),
\end{align}
where the last equality follows from the It\^o isometry and the continuity of the martingale integrand. Consequently,
\begin{align}
\bbE_\bbQ\!\left[
A(\bfyy_1,c)\frac{\bfeps_{r,\Delta r}}{\sqrt{\Delta r}}
\ \middle|\ \clF_r\right]
=g_r\nabla_{\bfxx}\bar V_r(\bfyy_r,c)+o(1).
\end{align}
The stochastic estimator in \secref{method-designspace} can be written as
\begin{align}
\widehat{\nabla V}_{t,\Delta r}^{\,\mathrm{sto}}
=\frac{1}{g_r}\,
A(\bfyy_1,c)\frac{\bfeps_{r,\Delta r}}{\sqrt{\Delta r}}.
\end{align}
Taking $\Delta r\to0$ and using the Markov property, so that conditioning on $\clF_r$ reduces to conditioning on $\bfyy_r=\bfxx_t$, yields
\begin{align}
\lim_{\Delta r\to0}
\bbE_\bbQ\!\left[
\widehat{\nabla V}_{t,\Delta r}^{\,\mathrm{sto}}\mid\bfxx_t\right]
=\nabla_{\bfxx_t}V_t(\bfxx_t,c).
\end{align}
Identifying the positive sampling-time step $\Delta r$ with the discretization magnitude $\Delta t$ in the proposition proves (i). Thus the stochastic one-sample estimator is asymptotically unbiased.

\paragraph{Proof of (ii).}
When $\eta_t\equiv1$, the transition kernel is $q(\bfxx_t\mid\bfxx_0)=\clN(\bfxx_t;(1-t)\bfxx_0,t^2\bfI)$. Under the assumption $\bfvv_\old(\bfxx_t,t)=\bbE[\bfvv(\bfxx_t,\bfxx_0)\mid\bfxx_t]$, by \propref{single-point},
\begin{align}
\nabla_{\bfxx_t}V_t(\bfxx_t,c)
=-\frac{1-t}{t}\,
\bbE\!\left[
A(\bfxx_0,c)
\big(\bfvv(\bfxx_t,\bfxx_0)-\bfvv_\old(\bfxx_t,t)\big)
\mid\bfxx_t\right].
\end{align}
The random vector inside this conditional expectation is precisely $\widehat{\nabla V}_t^{\,\mathrm{det}}$. Therefore
\begin{align}
\bbE\!\left[
\widehat{\nabla V}_t^{\,\mathrm{det}}\mid\bfxx_t\right]
=\nabla_{\bfxx_t}V_t(\bfxx_t,c),
\end{align}
which proves the unbiasedness of the deterministic one-sample estimator.

\paragraph{Proof of (iii).}
The results can be directly deduced from Prop. 2 of \citet{bertrand2026closed} (see also Exercise 7.32 of \citet{casella2024statistical}), we give the detailed proof here for completeness. Fix a query sample $(\bfxx_t^j,c)$ from the rollout group and write $q_0$ for the endpoint marginal under the proposal. At $h=1$, the kernel in \eqnref{kde-estimator} is proportional to the likelihood $q(\bfxx_t^j\mid\bfxx_0)$; hence Bayes' rule gives
\begin{align}
q(\bfxx_0\mid\bfxx_t^j)
=\frac{K(\bfxx_t^j,\bfxx_0)q_0(\bfxx_0)}
{\bbE_{\bfzz\sim q_0}[K(\bfxx_t^j,\bfzz)]}.
\end{align}
Conditioned on $\bfxx_t^j$, the anchor endpoint $\bfxx_0^j$ is therefore distributed according to $q(\bfxx_0\mid\bfxx_t^j)$, while the remaining $G-1$ endpoints are independent samples from $q_0$. To expose the resulting Rao--Blackwell structure, introduce an auxiliary index $J\sim\mathrm{Unif}\{1,\ldots,G\}$, sample $\bfX_J$ from $q(\bfxx_0\mid\bfxx_t^j)$, and sample $\{\bfX_i\}_{i\ne J}$ independently from $q_0$. Because the KDE estimator is symmetric in the group elements, this random relabeling does not change its distribution.

Define
\begin{align}
F(\bfX_i)
:=-\frac{1-t}{t}A(\bfX_i,c)
\big(\bfvv(\bfxx_t^j,\bfX_i)-\bfvv_\old(\bfxx_t^j,t)\big),
\qquad K_i:=K(\bfxx_t^j,\bfX_i).
\end{align}
Then by Bayes' rule,
\begin{align}
\bbP(J=i\mid\bfX_1,\ldots,\bfX_G)
=\frac{K_i}{\sum_{\ell=1}^G K_\ell}.
\end{align}
Since the KDE estimator is the conditional expectation of the deterministic anchor estimator:
\begin{align}
\bbE\!\left[
F(\bfX_J)\mid\bfX_1,\ldots,\bfX_G\right]
=\frac{\sum_{i=1}^G K_iF(\bfX_i)}
{\sum_{i=1}^G K_i}
=\widehat{\nabla V}_t^{\,\mathrm{KDE}}(\bfxx_t^j,c).
\end{align}
Applying the tower property and the result of (ii),
\begin{align}
\bbE\!\left[
\widehat{\nabla V}_t^{\,\mathrm{KDE}}(\bfxx_t^j,c)
\mid\bfxx_t^j\right]
=\bbE\!\left[F(\bfX_J)\mid\bfxx_t^j\right]
=\nabla_{\bfxx_t}V_t(\bfxx_t^j,c),
\end{align}
which means the KDE estimator is unbiased under the condition. Finally, applying the property of conditional variance, we get
\begin{align}
\mathrm{Var}\!\left(F(\bfX_J)\mid\bfxx_t^j\right)
&=\bbE\!\left[
\mathrm{Var}\!\left(F(\bfX_J)\mid\bfX_{1:G},\bfxx_t^j\right)
\middle|\bfxx_t^j\right]+
\mathrm{Var}\!\left(
\bbE[F(\bfX_J)\mid\bfX_{1:G},\bfxx_t^j]
\middle|\bfxx_t^j\right)\\
&\geq
\mathrm{Var}\!\left(
\widehat{\nabla V}_t^{\,\mathrm{KDE}}(\bfxx_t^j,c)
\middle|\bfxx_t^j\right).
\end{align}
Since $F(\bfX_J)$ has the same conditional distribution as $\widehat{\nabla V}_t^{\,\mathrm{det}}$, then
\begin{align}
\mathrm{Var}\!\left(
\widehat{\nabla V}_t^{\,\mathrm{KDE}}\mid\bfxx_t^j\right)
\leq
\mathrm{Var}\!\left(
\widehat{\nabla V}_t^{\,\mathrm{det}}\mid\bfxx_t^j\right).
\end{align}
\end{proof}

\section{Results in \tabref{master}}
\label{appx:table-proof}

In this section, we provide detailed derivations of results in \tabref{master}. 

\subsection{Flow-GRPO, GRPO-Guard and TempFlow-GRPO}

\paragraph{Flow-GRPO~\citep{liu2025flow}.} The stochastic-integral path-space estimator of the Flow-GRPO objective is \eqnref{sde-estimator}. Applying an Euler--Maruyama discretization to \eqnref{sde-estimator} gives
\begin{align}\label{eqn:appx-flowgrpo-1}
    -\clL^{\text{GRPO}}(\theta,c)\,&=_\theta\,\bbE_q\!\left[\,{A(\bfxx_0,c)\!\int_0^1\!\sqrt{2\wwt_{1-r}}\,\Delta\bfvv_\theta\cdot\dd\bfww_r}+{A(\bfxx_0,c)\!\int_0^1\!\wwt_{1-r}\,\lrVert{\Delta\bfvv_\theta}^2\dd r}\right] \\
    &\approx \mathbb{E}_q \Bigg[ 
\sum_{i=0}^{N-1} \Big( 
A(\mathbf{x}_0,c)\sqrt{2 \tilde{w}_{1-r_i}} \, \Delta \mathbf{v}_\theta \cdot  \sqrt{\Delta r_i}\bfeps_i
+ A(\mathbf{x}_0,c)\tilde{w}_{1-r_i} \, \|\Delta \mathbf{v}_\theta\|^2 \, \Delta r_i
\Big) \Bigg]\\
&= \mathbb{E}_q \Bigg[  
\sum_{i=0}^{N-1} \Big( A(\mathbf{x}_0,c)
\sqrt{2 \tilde{w}_{1-r_i}} \, \Delta \mathbf{v}_\theta \cdot  \frac{1}{\sqrt{\Delta r_i}}\bfeps_i
+ A(\mathbf{x}_0,c)\tilde{w}_{1-r_i} \, \|\Delta \mathbf{v}_\theta\|^2 \, \Big)\Delta r_i
 \Bigg]
\end{align}

With the change of variable $t=1-r$, let
$\Delta r_i=r_{i+1}-r_i=t_i-t_{i+1}=\Delta t_i > 0$.
The sum can then be written entirely in terms of $t$ as
\begin{align}\label{eqn:appx-flowgrpo-t}
    -\clL^{\text{GRPO}}(\theta,c)
    &\approx \mathbb{E}_q \Bigg[
    \sum_{i=0}^{N-1}\Big(
    A(\mathbf{x}_0,c)\sqrt{2\tilde{w}_{t_i}}\,
    \Delta\mathbf{v}_{\theta}\cdot\frac{1}{\sqrt{\Delta t_i}}\bfeps_i
    +A(\mathbf{x}_0,c)\tilde{w}_{t_i}\,
    \lrVert{\Delta\mathbf{v}_{\theta}}^2\Big)\Delta t_i
    \Bigg].
\end{align}
Substituting $\wwt_t=\frac{(1+\eta_t)^2}{4\eta_t}\frac{1-t}{t}$ into
\eqnref{appx-flowgrpo-t} directly gives the Flow-GRPO expressions in
\tabref{master}:
\begin{align}
    \widehat{\nabla V}_t^{\,\mathrm{sto}}
    &=s(t)A(\mathbf{x}_0,c)\bfeps,
    \quad s(t)=\sqrt{\frac{1-t}{2\eta_t t\Delta t}},
    \\
    \hat w_1(A,t)&=A\frac{(1+\eta_t)^2}{4\eta_t}\frac{1-t}{t},
    \quad \hat w_2(t)=1+\eta_t.
\end{align}

\paragraph{GRPO-Guard~\citep{wang2025grpo}.}
The difference between Flow-GRPO and GRPO-Guard is that GRPO-Guard drops the regularization Term~(B) and normalizes Term~(A) in \eqnref{sde-estimator}, corresponding to changes in weight $\hat w_1(A,t)$ and $\hat w_2(t)$:
\begin{align}
    \hat w_1(A,t)=0,
    \quad
    \hat w_2(t)=(1+\eta_t)\sqrt{\frac{2\eta_t t}{(1-t)\Delta t}}.
\end{align}

\paragraph{TempFlow-GRPO~\citep{he2025tempflow}.}
TempFlow-GRPO first rolls out an ODE trajectory. At each point $\bfxx_t$ along this trajectory, it takes a single SDE step to generate a group of branch states $\bfxx_{t-\Delta t}$, each of which is then rolled out to the endpoint using an ODE solver. The resulting trajectories form a branching structure, which is the reason we call it branched-SDE sampler. TempFlow-GRPO also reweights the policy loss by $g_t\sqrt{\Delta t}$, where $g_t=\sqrt{2\eta_t t/(1-t)}$, thus its weights are
\begin{align}
    \hat w_1(A,t)
    =A\frac{(1+\eta_t)^2}{4}\sqrt{\frac{2(1-t)\Delta t}{\eta_t t}},\quad
    \hat w_2(t)
    =(1+\eta_t)\sqrt{\frac{2\eta_t t\Delta t}{1-t}}.
\end{align}

\subsection{AWM and DiffusionNFT}

\paragraph{Advantage Weighted Matching (AWM)~\citep{xue2025advantage}.}
Recall from \secref{background-rl} that AWM first draws a rollout sample $\bfxx_0\sim p_\old(\cdot\mid c)$ (with Flow-SDE or Flow-ODE, based on different rewards) and then constructs an intermediate state by forward noising:
\begin{align}
    \bfxx_t=(1-t)\bfxx_0+t\bfeps,
    \quad \bfeps\sim\clN(\bfzro,\bfI).
\end{align}
The training objective is \footnote{The original AWM objective additionally includes an importance-sampling ratio between the current and rollout policies. This ratio is exactly one in the on-policy setting used by the practical algorithm and remains close to one under the mildly off-policy updates considered in practice. We therefore omit it from the present derivation.}
\begin{align}\label{eqn:appx-awm-original}
    -\clL^{\mathrm{AWM}}(\theta)
    =\bbE_{c,\bfxx_0\sim p_\old,\,t\sim p(t),\,\bfeps\sim\clN(\bfzro,\bfI)}\!\left[
        A(\bfxx_0,c)
        \lrVert{\bfvv_\theta(\bfxx_t,t,c)-\bfvv(\bfxx_t,\bfxx_0)}^2
    \right].
\end{align}

In our design space, we consider the reverse sampling process \eqnref{bg-samplingsde} to obtain $\bfxx_{[0:1]}$. To close the gap between the forward and reverse process, we construct the reverse sampling process with
\begin{align}    \bfvv_\base(\bfxx_t,t)=\bbE[\bfvv(\bfxx_t,\bfxx_0)\mid\bfxx_t]
~~\text{with}~~ \eta_t=1, 
    \quad \text{where}~~\bfvv(\bfxx_t,\bfxx_0):=\frac{\bfxx_t-\bfxx_0}{t}.
\end{align}
Under such choices, the reverse sampling process \eqnref{bg-samplingsde} is exactly the time reversal of the forward noising path above, which means that $(\bfxx_0, \bfxx_t)$ constructed by the reverse sampling process and forward noising process share the same joint distributions. Thus, the AWM is equivalent to using $\bfvv_\base(\bfxx_t,t)=\bbE[\bfvv(\bfxx_t,\bfxx_0)\mid\bfxx_t]$ and $\eta_t\equiv1$ in our design space. Under this condition, \propref{single-point} gives

\begin{align}\label{eqn:appx-awm-value-estimator}
\widehat{\nabla V}_t^{\,\mathrm{det}}(\bfxx_t,c)
    =-s(t)A(\bfxx_0,c)\big(\bfvv(\bfxx_t,\bfxx_0)-\bfvv_\old(\bfxx_t,t)\big),
    \quad s(t):=\frac{1-t}{t}.
\end{align}
We specify the weights
\begin{align}
    \hat w_1(A,t)=\frac{A}{\Delta t},
    \quad
    \hat w_2(t)=\frac{2}{s(t)\Delta t}.
\end{align}
Substituting these weights and the estimator \eqnref{appx-awm-value-estimator} into \eqnref{unified-loss} gives
\begin{align}
    -\clL^{\mathrm{policy}}(\theta)
    &=\int_0^1\frac{1}{\Delta t}\bbE_q\!\left[
        A\lrVert{\Delta\bfvv_\theta}^2
        -2A
        \left\langle\Delta\bfvv_\theta,\bfvv-\bfvv_\old\right\rangle
    \right]\dd t\\
    &=_\theta\int_0^1\frac{1}{\Delta t}\bbE_q\!\left[
        A(\bfxx_0,c)
        \lrVert{\bfvv_\theta(\bfxx_t,t,c)-\bfvv(\bfxx_t,\bfxx_0)}^2
    \right]\dd t,\label{eqn:appx-awm-continuous}
\end{align}.

For the $\Delta t$ term, partition $[0,1]$ into $n$ timestep bins $I_i$ of widths $\Delta t_i$, and write $\Delta t(t)=\Delta t_i$ for $t\in I_i$. We define the density by
\begin{align}\label{eqn:appx-awm-time-density}
    p(t):=\frac{1}{n\Delta t_i},\quad t\in I_i.
\end{align}
Using $1/\Delta t(t)=np(t)$, \eqnref{appx-awm-continuous} becomes
\begin{align}
    -\clL^{\mathrm{policy}}(\theta)
    &=_\theta n\,\bbE_{c,\bfxx_0\sim p_\old,\,t\sim p(t),\,\bfeps\sim\clN(\bfzro,\bfI)}\!\left[
        A(\bfxx_0,c)
        \lrVert{\bfvv_\theta(\bfxx_t,t,c)-\bfvv(\bfxx_t,\bfxx_0)}^2
    \right].
\end{align}
This is the same as the AWM loss \eqnref{appx-awm-original} up to the positive constant factor $n$, which does not change the optimization process. The timesteps sampled by the original AWM objective from $p(t)$ are precisely the points used to discretize the continuous-time integral in our formulation. 

\paragraph{DiffusionNFT~\citep{zheng2025diffusionnft}.}
Similar to AWM, DiffusionNFT first rolls out with the DPM-Solver~\citep{lu2025dpm} to obtain $\bfxx_0$ and then samples $\bfxx_t=(1-t)\bfxx_0+t\bfeps$. It minimizes the objective
\begin{align}\label{eqn:appx-nft-original}
    -\clL^{\mathrm{NFT}}(\theta)
    =\bbE_{c,\,\bfxx_0\sim p_\old,\,\bfxx_t\sim p(\cdot\mid\bfxx_0),\,t}\!\left[
        \hat R(\bfxx_0,c)\lrVert{\bfvv^+-\bfvv}^2
        +\big(1-\hat R(\bfxx_0,c)\big)\lrVert{\bfvv^--\bfvv}^2
    \right],
\end{align}
where
\begin{align}
    \bfvv^+=(1-\beta)\bfvv_\old(\bfxx_t,t,c)+\beta\bfvv_\theta(\bfxx_t,t,c)&,
    \quad
    \bfvv^-=(1+\beta)\bfvv_\old(\bfxx_t,t,c)-\beta\bfvv_\theta(\bfxx_t,t,c), \\
    \hat R(\bfxx_0,c)=\frac{A(\bfxx_0,c)+1}{2}&, \quad \bfvv=(\bfxx_t-\bfxx_0)/t.
\end{align}
Define
\begin{align}
    \Delta\bfvv_\theta:=\bfvv_\theta-\bfvv_\old,
    \quad
    \bfdd:=\bfvv-\bfvv_\old.
\end{align}
Then $\bfvv^+-\bfvv=\beta\Delta\bfvv_\theta-\bfdd$ and $\bfvv^--\bfvv=-\beta\Delta\bfvv_\theta-\bfdd$. Substituting these identities and expanding the two squares gives
\begin{align}
    -\clL^{\mathrm{NFT}}(\theta)
    &=\bbE\!\left[
        \hat R\lrVert{\beta\Delta\bfvv_\theta-\bfdd}^2
        +(1-\hat R)\lrVert{-\beta\Delta\bfvv_\theta-\bfdd}^2
    \right]\\
    &=\bbE\!\left[
        \beta^2\lrVert{\Delta\bfvv_\theta}^2
        -2\beta(2\hat R-1)\Delta\bfvv_\theta\cdot\bfdd
        +\lrVert{\bfdd}^2
    \right]\\
    &=_\theta\beta^2\bbE_{c,\,\bfxx_0\sim p_\old,\,\bfxx_t\sim p(\cdot\mid\bfxx_0),\,t}\!\left[
        \lrVert{\Delta\bfvv_\theta}^2
        -\Delta\bfvv_\theta\cdot\frac{2A}{\beta}(\bfvv-\bfvv_\old)
    \right],\label{eqn:appx-nft-expanded}
\end{align}
The scaling factor $\beta^2$ does not change the optimization process; thus, we have
\begin{align}\label{eqn:appx-nft-training}
    -{\clL}^{\mathrm{NFT}}(\theta)
    =\bbE_{c,\,\bfxx_0\sim p_\old,\,\bfxx_t\sim p(\cdot\mid\bfxx_0),\,t}\!\left[
        \lrVert{\Delta\bfvv_\theta}^2
        -\Delta\bfvv_\theta\cdot\frac{2A}{\beta}(\bfvv-\bfvv_\old)
    \right].
\end{align}
Therefore, similarly to AWM, DiffusionNFT is recovered in our design space by choosing $\bfvv_\base=\bbE[\bfvv\mid\bfxx_t]$, $\eta_t=1$, and the deterministic one-sample estimator \eqnref{appx-awm-value-estimator}. The weights functions are specified by $\hat w_1(t)=1/\Delta t$ and $\hat w_2(t)=2/(\beta s(t)\Delta t)$. 

\section{Related Work}
\label{appx:related}
\paragraph{Reverse-sampling-based RL.}
DDPO~\citep{black2023training} and DPOK~\citep{fan2023dpok} pioneered policy-gradient fine-tuning for DDPM-style models by treating reverse denoising as a finite-horizon MDP. Flow-GRPO~\citep{liu2025flow} and Dance-GRPO~\citep{xue2025dancegrpo} extend this paradigm to flow-matching models by converting the sampling ODE into a marginally equivalent SDE and applying clipped GRPO-style updates. Subsequent methods refine this recipe along different axes: GRPO-Guard~\citep{wang2025grpo} regulates and rescales per-step likelihood ratios, TempFlow-GRPO~\citep{he2025tempflow} branches trajectories at selected timesteps, and MixGRPO~\citep{li2025mixgrpo} confines stochastic exploration to a sliding time window. FDFO~\citep{mcallister2026finite} follows a different route and estimates improvement directions through finite differences rather than analytic policy gradients. Our framework disentangles the estimator, weighting, and sampling choices underlying these methods and explains their efficiency differences through the resulting value-gradient variance and scale.

Complementary analyses also seek to organize this rapidly growing design space. \citet{choi2026rethinking} separate the policy-gradient objective, likelihood estimator, and sampler, and identify likelihood estimation as the dominant empirical factor. We refine the objective axis further by decomposing it into value-gradient estimation and the associated on- and off-policy weights. The concurrent work Reward Score Matching~\citep{lee2026reward} develops a score-space view that connects several reward-based fine-tuning objectives; in contrast, our path-space formulation explicitly recovers the training structures of AWM and DiffusionNFT and relates them to reverse-sampling policy gradients.

\paragraph{Forward-matching RL methods.}
An orthogonal line retains the analytic forward-noising process and injects reward information into a pretraining-style regression loss. Reward-Weighted Regression (RWR)~\citep{lee2023aligning} weights the diffusion training objective by endpoint rewards. AWM~\citep{xue2025advantage} and DiffusionNFT~\citep{zheng2025diffusionnft} similarly avoid the reverse-process MDP and construct reward- or advantage-dependent flow-matching targets from sampled clean endpoints and their analytically noised states. These methods report substantial improvements in training efficiency, but their relationship to reverse-sampling policy gradients and the role of their specialized timestep weights were previously unclear. Our path-space derivation shows that they instantiate the same policy-gradient structure with a deterministic value-gradient estimator and different weight choices.

\paragraph{Stochastic-control and adjoint-based methods.}
A complementary line formulates KL-regularized reward fine-tuning as entropy-regularized stochastic optimal control. ELEGANT~\citep{uehara2024fine} develops this viewpoint for continuous-time diffusion models, while Adjoint Matching~\citep{domingo2025adjoint} introduces a memoryless noise schedule and turns the Bellman optimality condition into an on-policy regression of the value gradient; its pathwise estimator, however, requires reward gradients and a backward adjoint sweep along an SDE trajectory. Tilt Matching~\citep{potaptchik2025tilt} instead relates the original and reward-tilted velocity fields through a regression objective and avoids both reward gradients and trajectory backpropagation by annealing through intermediate tilted models. Most closely related to our work, the concurrent Reinforce Adjoint Matching (RAM) method~\citep{bergmeister2026reinforce} takes a stochastic optimal control viewpoint, identifies the value gradient as the optimal adjoint, and combines the resulting fixed point with a REINFORCE identity to derive a forward-matching-type loss. In contrast, we derive the value-gradient form from path-space policy-gradient variance reduction, yielding a unified framework for both GRPO-style reverse-sampling algorithms and forward-based methods such as AWM and DiffusionNFT. Our framework further establishes an estimator-specific optimal weight mapping and introduces a new multi-sample KDE value-gradient estimator in the final recipe.

\paragraph{Reward-gradient and offline preference methods.}
When the reward is differentiable, it can be optimized directly by backpropagating through all or part of the sampling trajectory~\citep{prabhudesai2023aligning,xu2023imagereward,clark2023directly}. This approach provides a direct training signal but requires reward gradients and incurs substantial memory and computation costs when differentiation spans many denoising steps. A separate offline line adapts direct preference optimization to diffusion models, learning from paired human-preference data without online reward evaluation~\citep{wallace2024diffusion,yang2024using,yuan2024self}. These methods operate under a different feedback regime from the black-box online rewards considered in our framework.

\section{Experimental Details}
\label{appx:experimental-details}

\subsection{Additional Results}\label{appx:add-results}
\begin{figure}[htbp]
    \centering

    \begin{minipage}{0.98\textwidth}
        \centering
    
        \begin{subfigure}[b]{0.32\textwidth}
            \centering
            \includegraphics[width=\linewidth]{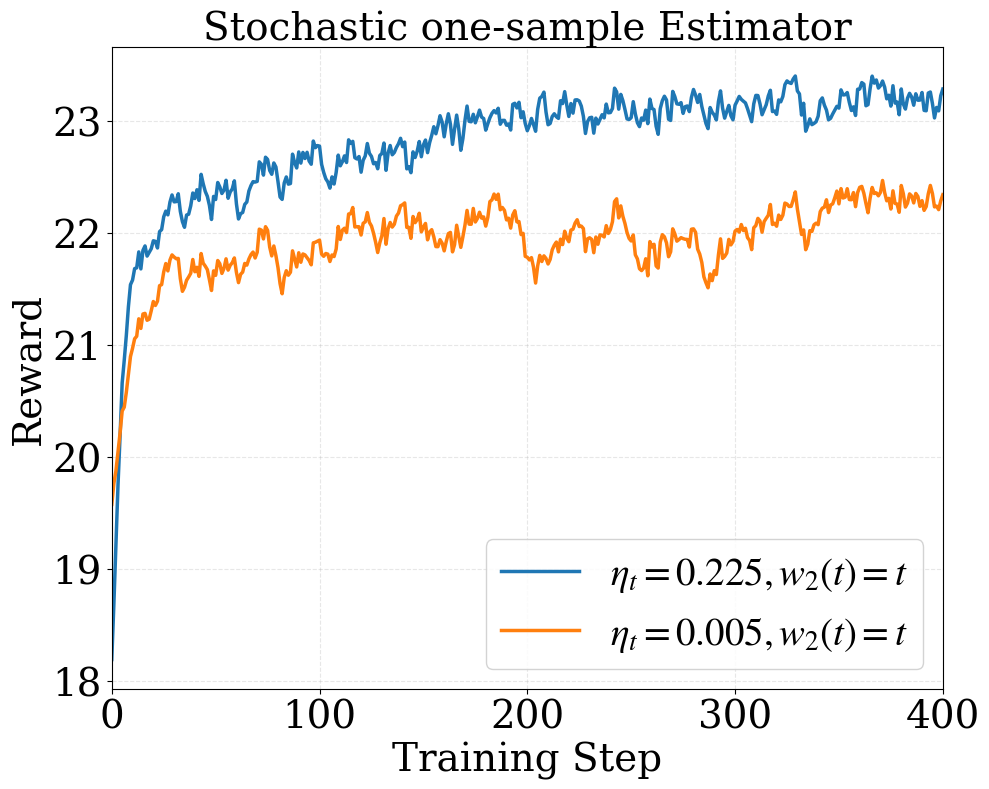}
            \caption{}
            \label{fig:1}
        \end{subfigure}%
        \hfill
        \begin{subfigure}[b]{0.32\textwidth}
            \centering
            \includegraphics[width=\linewidth]{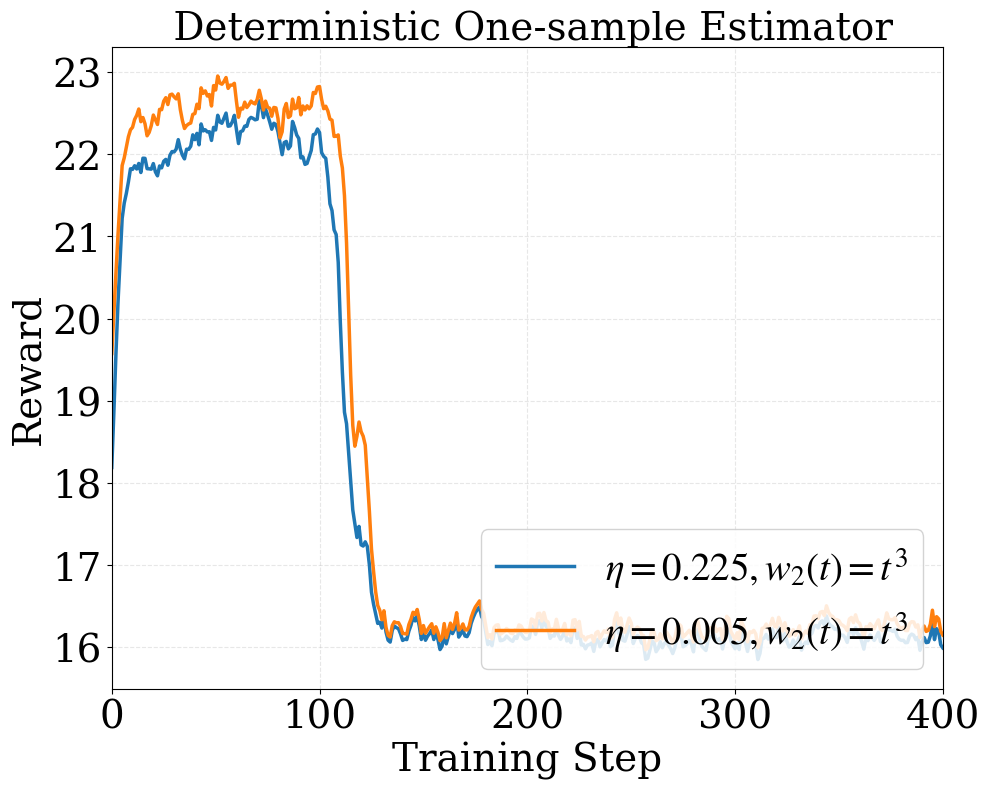}
            \caption{}
            \label{fig:2}
        \end{subfigure}%
        \hfill
        \begin{subfigure}[b]{0.32\textwidth}
            \centering
            \includegraphics[width=\linewidth]{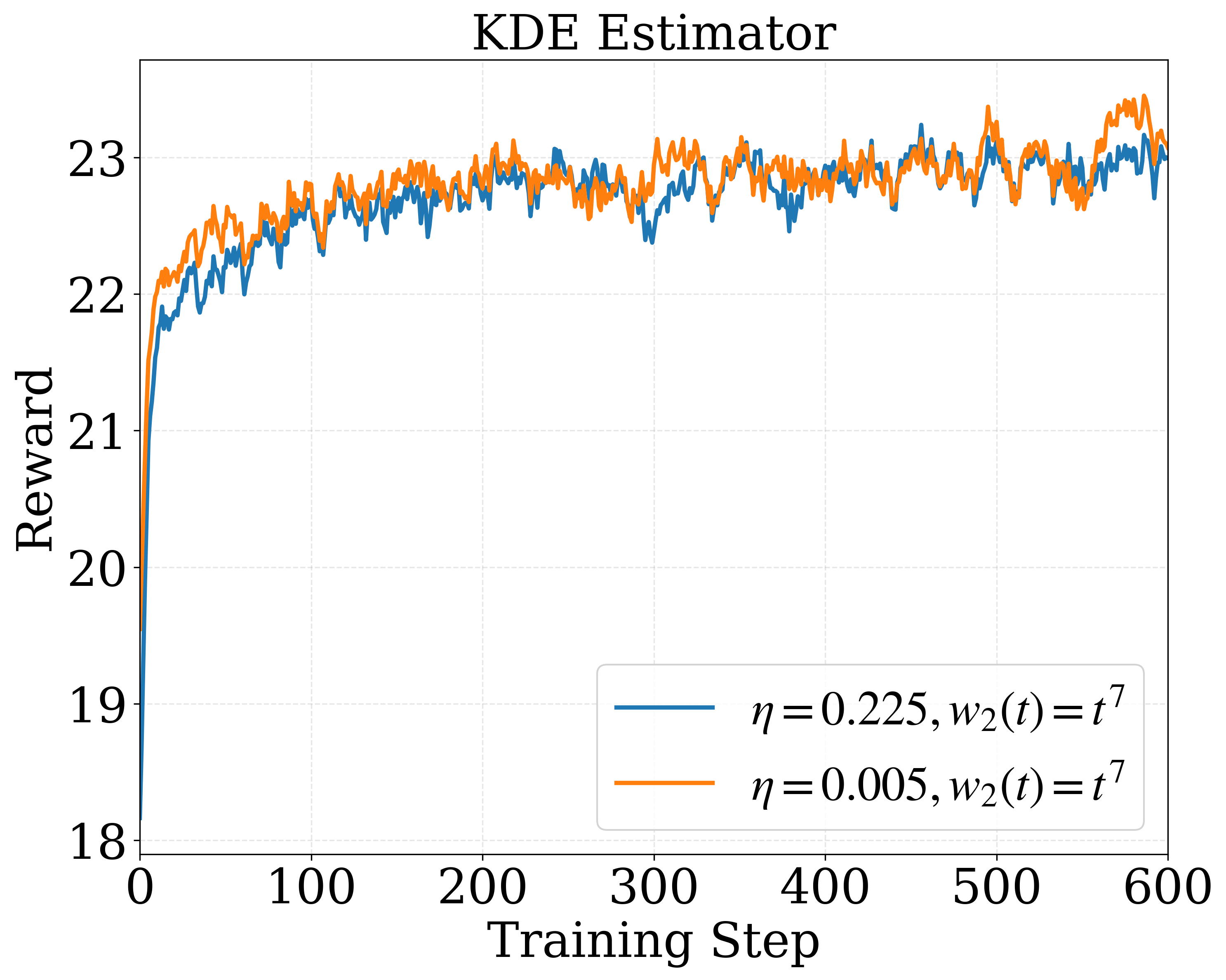}
            \caption{}
            \label{fig:3}
        \end{subfigure}        
    \end{minipage}
    \caption{Sampling noise schedule ablations for the stochastic one-sample estimator, deterministic one-sample estimator, and our KDE estimator. For each estimator, we run experiments for sampler noise schedule $\eta_t\in\{0.005,0.225\}$ on the Pickscore reward. }
    \label{fig:noise-scale}

\end{figure}

\begin{figure}[htbp]
    \centering

    \begin{minipage}{0.67\textwidth}
        \centering
    
        \begin{subfigure}[b]{0.49\textwidth}
            \centering
            \includegraphics[width=\linewidth]{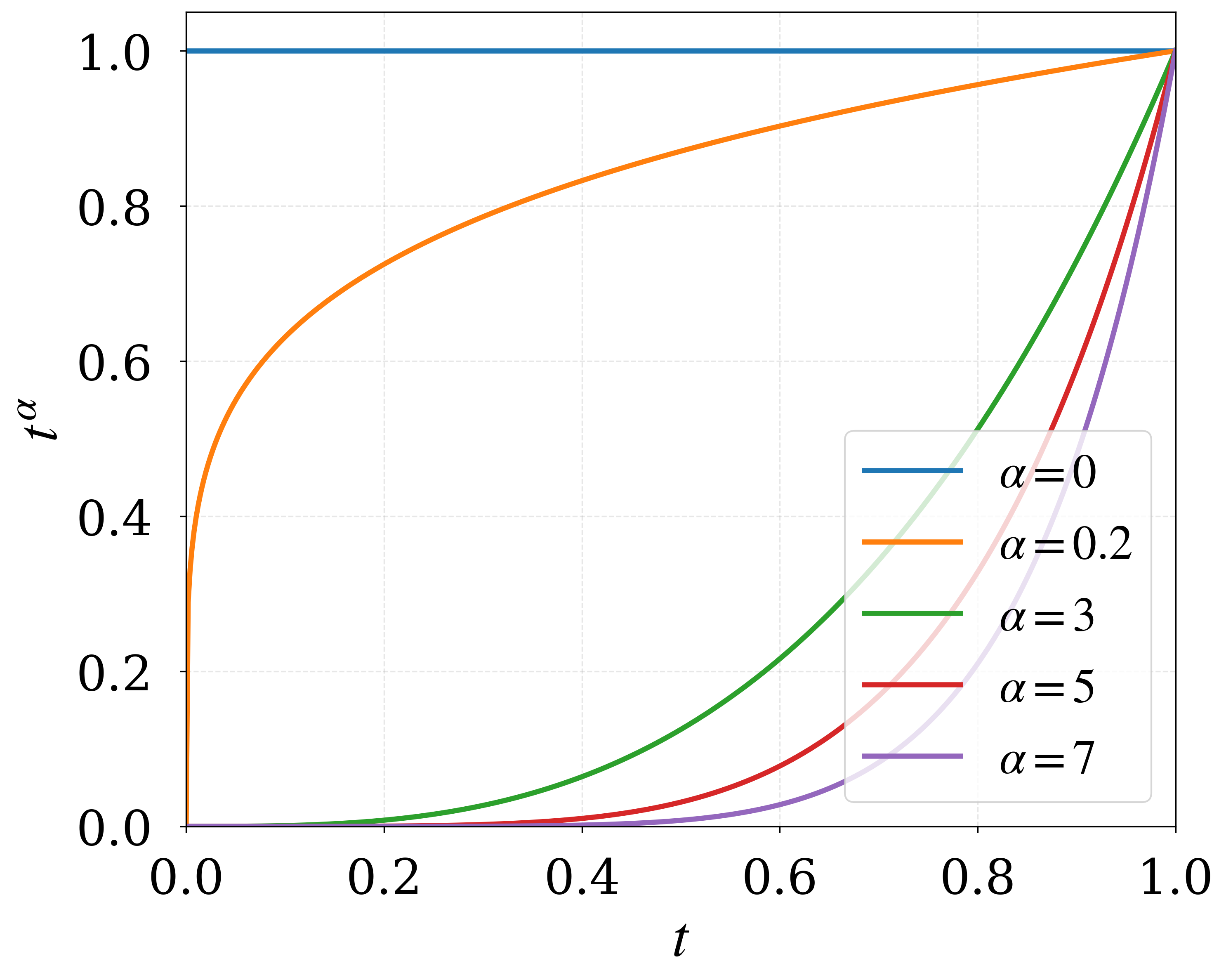}
            \caption{}
            \label{fig:1}
        \end{subfigure}%
        \hfill
        \begin{subfigure}[b]{0.49\textwidth}
            \centering
            \includegraphics[width=\linewidth]{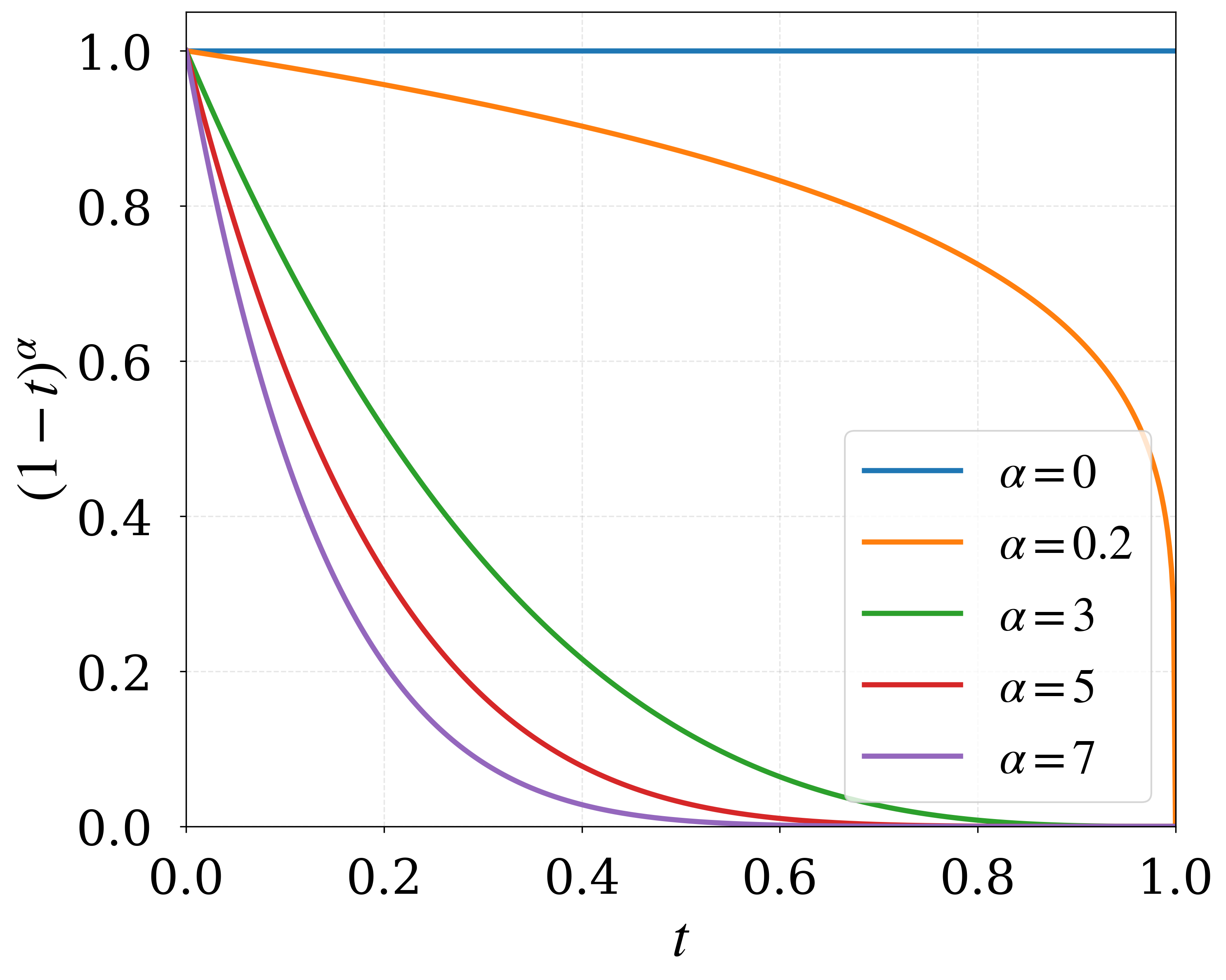}
            \caption{}
            \label{fig:2}
        \end{subfigure}%
        \hfill
          
    \end{minipage}
    \caption{Visualization of different weights}
    \label{fig:weight-visual}
\end{figure}
In \figref{noise-scale}, we show the sampling noise schedule ablations for each estimator. We find that the stochastic estimator in GRPO-based methods performs better in the larger $\eta_t$ setting (\figref{noise-scale} (a)), while the deterministic estimator suffers from training instability in both settings, and the small noise scheduler performs slightly better (\figref{noise-scale} (b)). Our estimator is not sensitive to different sampling noise scales and is stable across all these settings.

In \figref{weight-visual}, we provide a visualization of different weight functions.

\subsection{Hyperparameters}
We summarize the hyperparameters of our experiments in \tabref{hyp}. In all the experiments, we disable CFG and use the batch-level advantage normalization.

\begin{table}[h]
\centering\small
\caption{Hyperparameters used throughout \secref{exp}.}\label{tab:hyp}
\begin{tabular}{@{}ll@{}}
\toprule
Hyperparameter & Value \\
\midrule
Group size $G$ & $24$ for SD 3.5, $16$ for Qwen-Image \\
Train denoising steps $T$ & $10$ \\
Inference denoising steps & $40$ for SD 3.5, $50$ for Qwen-Image\\
Rollout SDE noise schedule & 0.225 for stochastic estimator, 0.005 for deterministic and KDE estimator \\
Optimizer & AdamW, $\mathrm{lr}=3\!\times\!10^{-4}$, $\beta_1\!=\!0.9,\beta_2\!=\!0.999$ \\
KL coefficient $\beta$ & $10^{-4}$ \\
Batch size & $9$ for SD 3.5, $4$ for Qwen-Image \\
Number of batches per epoch & $16$ for SD 3.5, $4$ for Qwen-Image \\
Train CFG & Disable CFG in training \\ 
$h$ in \eqnref{kde-estimator} & $d/2000$, where $d$ is the latent dimension\\
\bottomrule
\end{tabular}
\end{table}

\subsection{Robustness across random seeds}
\label{appx:seed-results}

We repeat the SD3.5-Medium experiment with PickScore reward using three random seeds while keeping all other configurations fixed. \tabref{three-seeds} reports the mean and standard deviation at four matched training checkpoints.

\begin{table}[h]
\centering\small
\caption{PickScore across three random seeds (mean $\pm$ standard deviation). Our method achieves a higher mean at every checkpoint with small run-to-run variation.}
\label{tab:three-seeds}
\setlength{\tabcolsep}{5pt}
\resizebox{\textwidth}{!}{%
\begin{tabular}{@{}lcccc@{}}
\toprule
Training time (GPU hours) &
$18.58$ &
$36.44$ &
$54.28$ &
$72.12$ \\
\midrule
DiffusionNFT &
$22.85\pm0.02886$ &
$23.06\pm0.01618$ &
$23.16\pm0.02153$ &
$23.22\pm0.01274$ \\
Ours &
$22.95\pm0.05240$ &
$23.28\pm0.02239$ &
$23.48\pm0.005547$ &
$23.56\pm0.01461$ \\
\bottomrule
\end{tabular}
}
\end{table}

At the final checkpoint, our method obtains $23.56\pm0.01461$, corresponding to a relative standard deviation of $0.062\%$; DiffusionNFT obtains $23.22\pm0.01274$ with a relative standard deviation of $0.055\%$. The consistent gap across checkpoints and seeds indicates that the improvement is not driven by a favorable random seed.

\subsection{Runtime comparison}
\label{appx:runtime}

The KDE estimator computes pairwise distances in $O(G^2d)$ time. With $G=24$ and latent dimension $d=65{,}536$, this is on the order of $10^7$ arithmetic operations per timestep and is efficiently parallelized on GPUs. It also reuses cached rollout latents, velocities, and rewards, so it requires no additional denoiser evaluation. Under matched settings, our update uses a trainable-policy forward and, when KL regularization is enabled, a frozen-reference forward. DiffusionNFT additionally evaluates the old policy at the renoised latent, requiring three rather than two denoiser forwards per optimized timestep. Since one denoiser forward costs approximately $10^{12}$ FLOPs, this saved network evaluation dominates the pairwise-distance overhead.

\begin{table}[h]
\centering
\caption{Measured wall-clock training time per epoch under matched SD3.5-Medium settings.}
\label{tab:runtime}
\begin{tabular}{@{}lc@{}}
\toprule
Method & Training time per epoch (seconds) \\
\midrule
DiffusionNFT & $115$ \\
KDE estimator (ours) & $97$ \\
\bottomrule
\end{tabular}
\end{table}

As shown in \tabref{runtime}, our method saves 18 seconds per epoch, a $15.7\%$ wall-clock reduction. Thus, the $O(G^2d)$ KDE computation has little effect on end-to-end runtime, while eliminating one denoiser forward pass provides a pronounced speedup.

\section{Qualitative Examples}
\label{appx:qualitative}
The visualization grids (\figref{case1}, \figref{case2},\figref{case3}) contain prompts from the test set of Pickscore reward training. For each prompt, images are generated with the same inference configuration from SD3.5-Medium after the RL training with GRPO-Guard, DiffusionNFT, and our method. The rows are organized by prompt split and the columns by method, enabling direct inspection of prompt alignment, perceptual quality, excessive saturation, and repetitive reward-hacking patterns without selecting examples solely by reward. 

As a complementary quantitative check, a panel of eight evaluators assesses randomly generated images drawn from a pool of 30 prompts spanning the same three splits. Evaluators are blinded to method identity and judge overall visual quality, prompt alignment, and aesthetic preference. Aggregating the evaluation decisions according to this protocol yields method-level preference shares of $51.4\%$ for our method, $29.4\%$ for GRPO-Guard, and $19.2\%$ for DiffusionNFT. These results suggest that the reward gains do not merely reflect numerical over-optimization.

\begin{figure}[t]
    \makebox[\linewidth]{
        \IfFileExists{Fig/paper_case_1.pdf}{%
            \includegraphics[width=\linewidth]{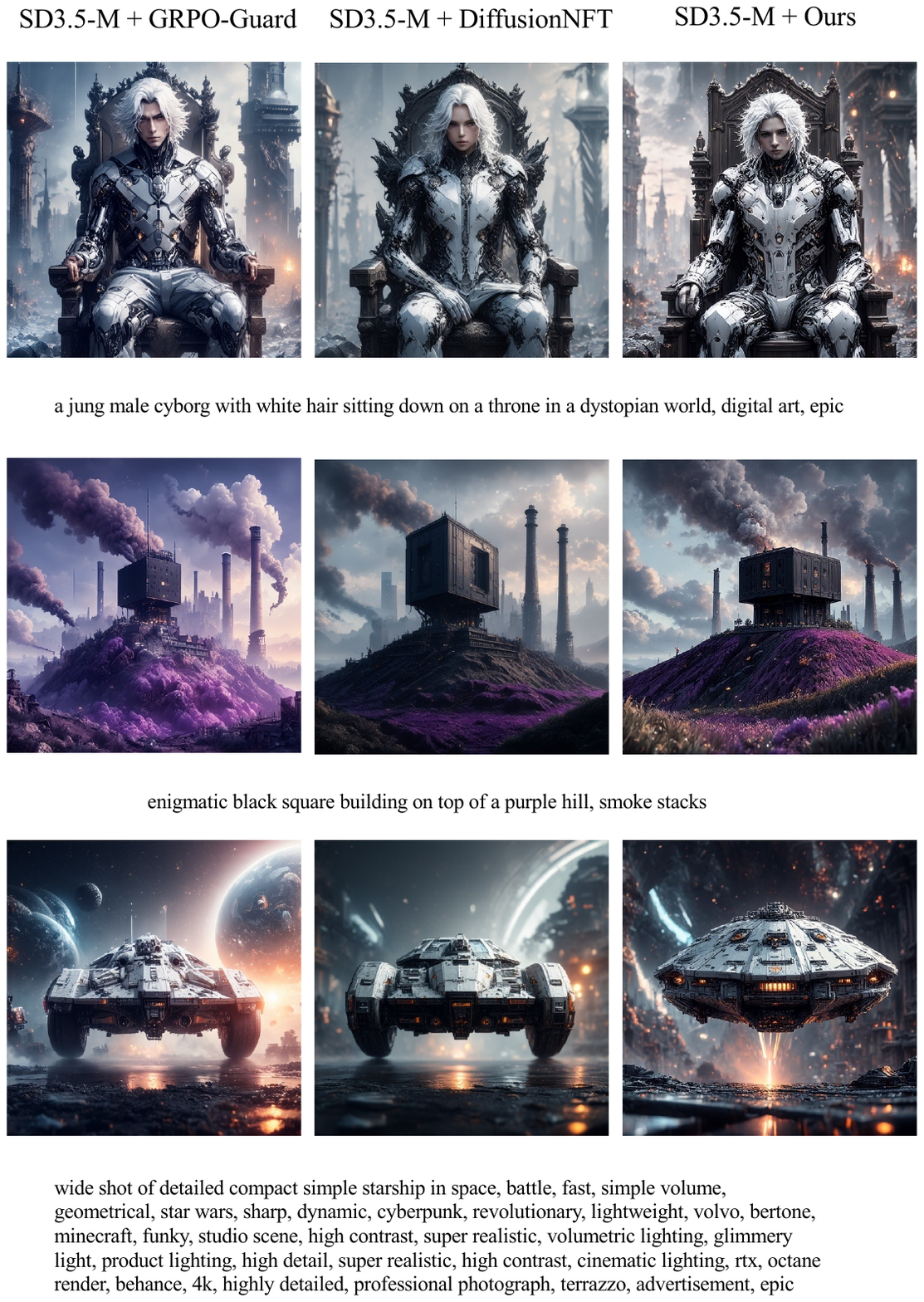}%
        }{%
            \fbox{\parbox[c][35mm][c]{0.95\linewidth}{\centering
            Placeholder for \texttt{Fig/paper\_case\_1.pdf}}}%
        }
    }
    \caption{Qualitative comparison across GRPO-Guard, DiffusionNFT and our method.}
    \label{fig:case1}
\end{figure}
\begin{figure}[t]
    \makebox[\linewidth]{
        \IfFileExists{Fig/paper_case_2.pdf}{%
            \includegraphics[width=\linewidth]{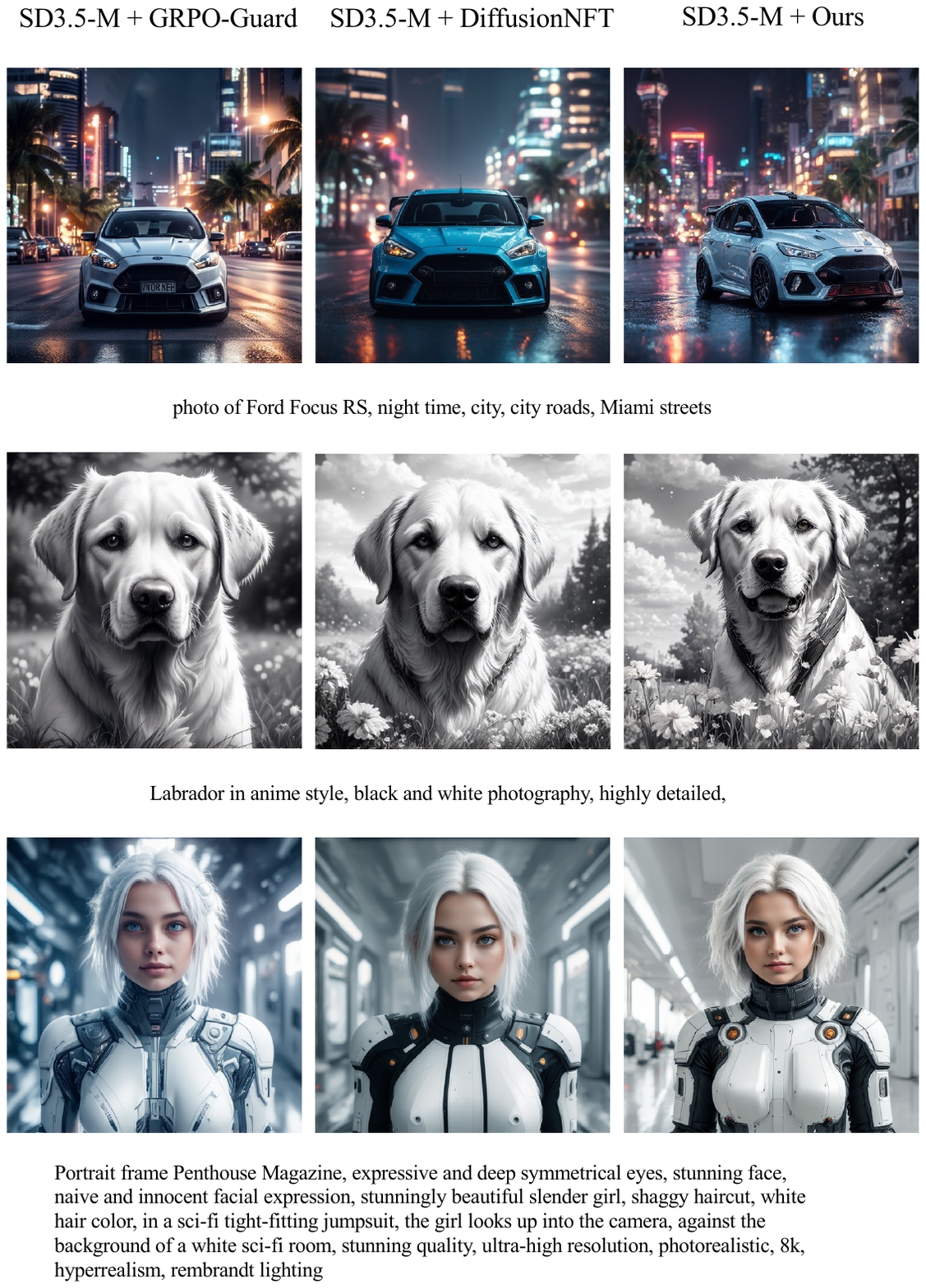}%
        }{%
            \fbox{\parbox[c][35mm][c]{0.95\linewidth}{\centering
            Placeholder for \texttt{Fig/paper\_case\_2.pdf}}}%
        }
    }
    \caption{Qualitative comparison across GRPO-Guard, DiffusionNFT and our method.}
    \label{fig:case2}
\end{figure}
\begin{figure}[t]
    \makebox[\linewidth]{
        \IfFileExists{Fig/paper_case_3.pdf}{%
            \includegraphics[width=\linewidth]{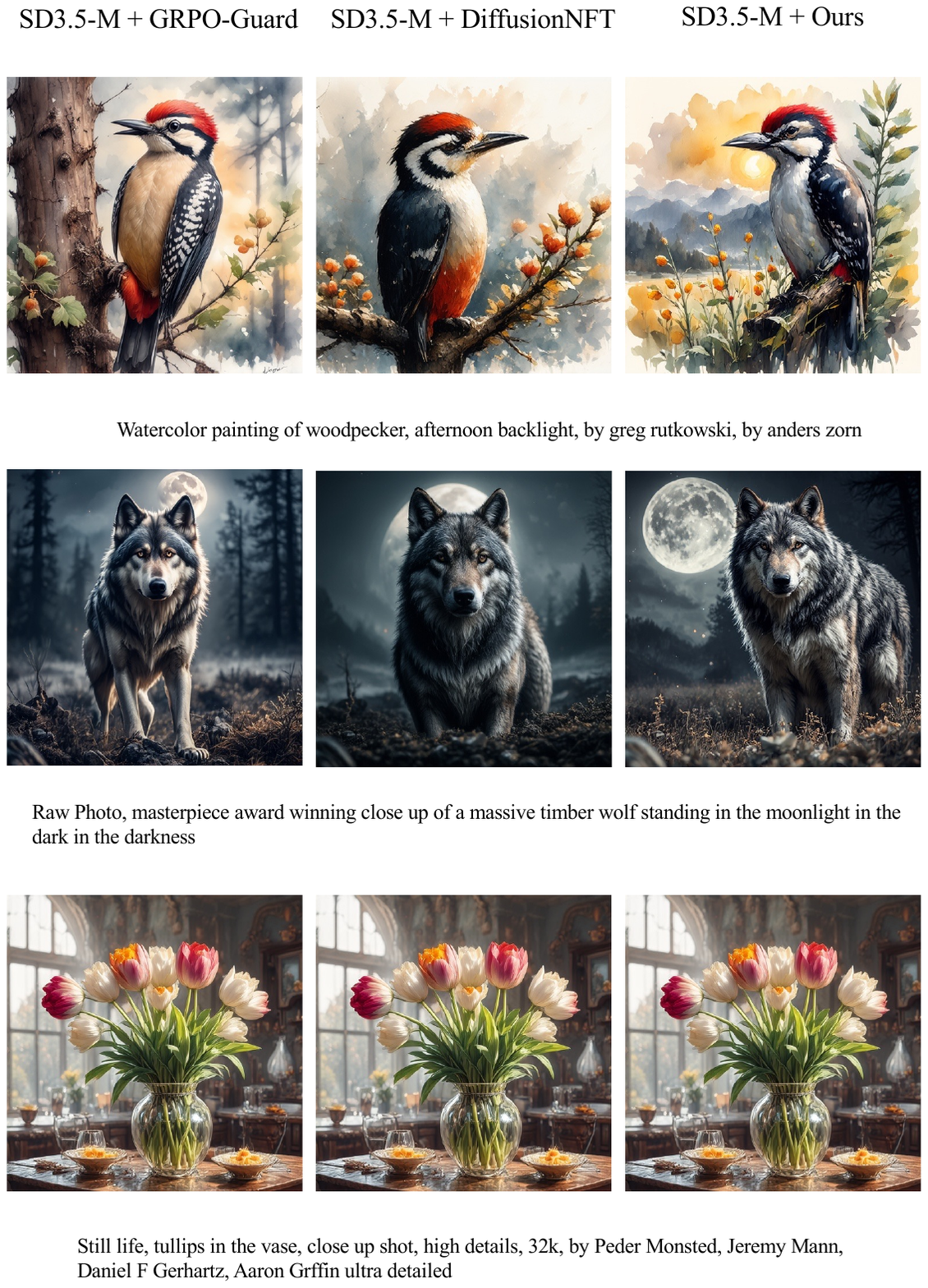}%
        }{%
            \fbox{\parbox[c][35mm][c]{0.95\linewidth}{\centering
            Placeholder for \texttt{Fig/paper\_case\_3.pdf}}}%
        }
    }
    \caption{Qualitative comparison across GRPO-Guard, DiffusionNFT and our method.}
    \label{fig:case3}
\end{figure}

\section{Limitations}
Our experiments are conducted on SD3.5-M and Qwen-Image with a fixed group size $G=24$ and a 
limited set of reward functions (PickScore, OCR, and GenEval). While these cover a representative 
range of reward types, the generalizability of our findings to other model architectures 
(e.g., video diffusion models), reward functions (\emph{e.g.}, 
human-preference models or verifiable task rewards) has not been systematically evaluated.

\section{Social Impact}\label{appx:social-impact}

This work proposes a unified theoretical framework for reinforcement learning post-training 
of diffusion generative models, which can facilitate the alignment of generative models 
with human preferences and safety constraints, thereby contributing to the responsible 
development of generative AI systems. However, this work mainly focuses on theory and algorithms, so the work does not have a broader social impact.



\end{document}